\documentclass[twoside]{article}

\usepackage[dvipsnames]{xcolor}

\usepackage[tbtags]{amsmath}
\usepackage{amsthm}
\usepackage{bm}
\allowdisplaybreaks
\usepackage{amssymb,mathrsfs}
\usepackage{amsfonts}
\usepackage{upgreek}
 \usepackage{xargs}
 \usepackage{graphicx}
 
\usepackage{multirow}

\usepackage{paralist}
\usepackage{booktabs}       
\usepackage{nicefrac}       
\usepackage{multirow}
\usepackage{pifont}

\newcommand{\xmark}{\textcolor{red}{\ding{55}}}

\newtheorem{theorem}{Theorem}
\newtheorem{lemma}{Lemma}

\usepackage[colorlinks = true, citecolor = blue, linkcolor = red]{hyperref}
\usepackage{cleveref}
\usepackage{autonum}

\newcommand{\greencheck}{{\color{OliveGreen}\checkmark}}
\newcommand{\redcross}{{\color{BrickRed}\xmark}}

\newtheorem{assumption}{\textbf{H}\hspace{-3pt}}
\crefformat{assumption}{{\textbf{H}}#2#1#3}

\usepackage{enumitem}



\usepackage[round,sort&compress]{natbib}

\usepackage{xspace}
\usepackage{soul}
\newcommand{\ours}{\textcolor{magenta}{\texttt{FLIC}}\xspace}

\definecolor{darkgreen}{RGB}{0,128,0}
\definecolor{darkorange}{RGB}{255,140,0}
\definecolor{darkblue}{RGB}{0,0,139}

\def\nset{\mathbb{N}}
\def\nsets{\mathbb{N}^*}


\def\rmd{\mathrm{d}}






\newcommand{\LeftEqNo}{\let\veqno\@@leqno}


\newcommand{\floor}[1]{\left\lfloor #1 \right\rfloor}







\newcommand{\parentheseLigne}[1]{(#1 )}














\newcommandx\sequence[3][2=,3=]
{\ifthenelse{\equal{#3}{}}{\ensuremath{( #1_{#2})}}{\ensuremath{( #1_{#2})_{ #2 \in #3}}}}

\newcommandx\sequencet[3][2=,3=]
{\ifthenelse{\equal{#3}{}}{\ensuremath{( #1_{#2})}}{\ensuremath{( #1_{#2})_{ #2 \geq #3}}}}



\newcommand{\opnorm}[1]{{\left\vert\kern-0.25ex\left\vert\kern-0.25ex\left\vert #1
    \right\vert\kern-0.25ex\right\vert\kern-0.25ex\right\vert}}











\newcommand\coupling[2]{\Gamma(\mu,\nu)}







\newcommand{\N}{\mathbb{N}}

\newcommand{\R}{\mathbb{R}}

\newcommand{\E}{\mathbb{E}}

\newcommand{\Rd}{\mathbb{R}^{d}}


\newcommand{\argmin}{\operatornamewithlimits{\arg\min}}

\newcommand{\half}{1/2}  
\newcommand{\eqsp}{\,}  

\newcommand{\pr}[1]{\left({#1}\right)}

\newcommand{\br}[1]{\left[{#1}\right]}
\newcommand{\bbr}[1]{\left\{{#1}\right\}}

\newcommand{\norm}[1]{\left\|{#1}\right\|}


\newcommand{\gauss}{\mathrm{N}}




\makeatletter
\newcommand{\ostar}{\mathbin{\mathpalette\make@circled\star}}
\newcommand{\pstar}{\mathbin{\mathpalette\make@circled+}}
\newcommand{\make@circled}[2]{%
  \ooalign{$\m@th#1\smallbigcirc{#1}$\cr\hidewidth$\m@th#1#2$\hidewidth\cr}%
}
\newcommand{\smallbigcirc}[1]{%
  \vcenter{\hbox{\scalebox{0.77778}{$\m@th#1\bigcirc$}}}%
}
\makeatother

\usepackage[accepted]{icml2023}

\usepackage[toc,page,header]{appendix}
\usepackage{minitoc}


\begin{document}
\doparttoc 
\faketableofcontents 

\twocolumn[
\icmltitle{Personalised Federated Learning On Heterogeneous Feature Spaces}
\icmltitlerunning{Personalised Federated Learning On Heterogeneous Feature Spaces\hfill\thepage}

\icmlsetsymbol{equal}{*}

\begin{icmlauthorlist}
	\icmlauthor{Alain Rakotomamonjy}{equal,1}
	\icmlauthor{Maxime Vono}{equal,1}
	\icmlauthor{Hamlet Jesse Medina Ruiz}{1}
	\icmlauthor{Liva Ralaivola}{1}
\end{icmlauthorlist}

\icmlaffiliation{1}{Criteo AI Lab, Paris, France}

\icmlcorrespondingauthor{Maxime Vono}{m.vono@criteo.com}
\icmlcorrespondingauthor{Alain Rakotomamonjy}{a.rakotomamonjy@criteo.com}

\icmlkeywords{Machine Learning, ICML}

\vskip 0.3in
]

\printAffiliationsAndNotice{\icmlEqualContribution} 

\begin{abstract}
    
Most personalised federated learning (FL) approaches assume that raw data of all clients are defined in a common subspace \emph{i.e.} all clients store their data according to the same schema.
For real-world applications, this assumption is restrictive as clients, having their own systems to collect and then store data, may use {\em heterogeneous} data representations. 
We aim at filling this gap.
To this end, we propose a general framework coined \ours that maps client's data onto a common feature space via local embedding functions.
The common feature space is learnt in a federated manner using Wasserstein barycenters while the local embedding functions are trained on each client via distribution alignment. We  integrate this distribution alignement mechanism into
a federated learning approach and provide the algorithmics of \ours. We compare its perfomances against FL benchmarks involving heterogeneous input features spaces. 
In addition, we provide theoretical insights supporting the relevance of our methodology.
\end{abstract}

\section{Introduction}
\label{sec:intro}

Federated learning (FL) is a machine learning paradigm where models are trained from multiple isolated data sets owned by individual agents (coined \emph{clients}), without requiring to move raw data into a central server, nor even share them in any way \citep{KairouzFL}. 
This framework has lately gained a strong traction from both industry and academic research.
Indeed, it avoids the communication costs entailed by data transfer, allows all clients to benefit from participating to the learning cohort, and finally, it fulfills first-order confidentiality guarantees, which can be further enhanced by resorting to so-called \emph{privacy-enhancing technologies} such as differential privacy \citep{10.1561/0400000042} or secure multi-party computation \citep{10.1145/3133956.3133982}.
As core properties, FL ensures data ownership, and structurally incorporates the principle of data exchange minimisation by only transmitting the required updates of the models being learnt.
Depending on the data partitioning and target applications, numerous FL approaches have been proposed, such as horizontal FL \citep{mcmahan17} and vertical FL \citep{FL_concept,VFL2017}. 
The latter paradigm considers that clients hold disjoint subsets of features corresponding to the same users while the former assumes that clients have data samples from different users. 
Recently, horizontal FL works have focused on \emph{personalised} FL to tackle statistical heterogeneity by using local models to fit client-specific data \citep{PFL_review,jiang2019improving,khodak2019adaptive,hanzely2020federated}. 

Existing horizontal personalised FL works assume that the raw data on \emph{all} clients share the same structure and are defined in a common feature space. 
Yet, in practice, data collected by clients may use differing structures.
For instance, clients may not collect exactly the same information, some features may be missing or not stored, or some might have been transformed (\emph{e.g.} via normalisation,  scaling, or linear combinations).  
To address the key issue of implementing FL when the clients' feature spaces are heterogeneous, in the sense that they have different dimensionalites or that the semantics of given vector coordinates are different, we introduce \emph{the first} --- to the best of our knowledge --- personalised FL framework dedicated to this learning situation.

\noindent \textbf{Proposed Approach.} 
The framework and algorithm described in this paper rest on the idea that before performing efficient FL training, a key step is to map the raw data into a common subspace.
This is a prior necessary step before  FL since it allows to define a relevant aggregation scheme on the central server for model parameters (\emph{e.g.} via weighted averaging) as the latter become comparable.
Thus, we map clients' raw data into a common low-dimensional latent space, via local and learnable feature embedding functions. 

In order to ease subsequent learning steps, data related to the same semantic information (\emph{e.g.} label) have to be embedded in the same region of the latent space. 
To ensure this property, we align clients' embedded feature distributions via a latent {\em anchor distribution} that is shared across clients.
The learning of this {\em anchor distribution} is performed in a federated manner \emph{i.e.} by updating it locally on each client before aggregation on the central server. 
More precisely, each client updates her local version of the anchor distribution by aligning it, \emph{i.e.} making it closer, to the embedded feature distribution.
Then, the central server aims at finding the mean element, \emph{i.e.} barycenter, of these local anchor distributions \citep{995827,JMLR:v6:banerjee05b}. 
Once this distribution alignment mechanism (based on local embedding functions
and anchor distribution) is defined, it can be seamlessly integrated into a personalised FL framework; the personalisation part aiming at tackling residual statistical heterogeneity.
In this paper, without loss of generality, we have embedded this alignment framework into a personalised FL approach similar to the one proposed in \citet{pmlr-v139-collins21a}. 
     
\noindent \textbf{Related Ideas.} 
Ideas that we have built on for solving the task of FL from heterogeneous feature spaces have been partially explored in related literature.
From the theoretical standpoint, works on the Gromov-Wasserstein distance or variants seek at comparing distributions from incomparable spaces in a (non-FL) centralised manner \citep{memoli2011gromov,pmlr-v97-bunne19a,alaya2022theoretical}. 
Other methodological works on autoencoders \citep{pmlr-v119-xu20e}, word embeddings \citep{alvarezmelis2018gromov, pmlr-v89-alvarez-melis19a} or FL under high statistical heterogeneity \citep{pmlr-v162-makhija22a,NEURIPS2021_2f2b2656,FedFA} use similar ideas of distribution alignment for calibrating feature extractors and classifiers.
A detailed literature review and comparison with the proposed methodology is postponed to \Cref{sec:methodo}. 

\textbf{Contributions.} In order to help the reader better grasp the differences of our approach with respect to the existing literature, we spell out our contributions:
  
\begin{enumerate}
	 \item We are \emph{the first} to formalise the problem of personalised horizontal FL on heterogeneous clients' feature spaces. 
	 In contrast to existing approaches, the proposed general framework, coined \ours, allows each client to leverage other clients' data even though they do not have the same raw representation.
	 
	\item We introduce a distribution alignment framework and an algorithm that learns the feature embedding functions along with the latent anchor distribution in a local and global federated manner, respectively. We also show how those essential algorithmic pieces are integrated into a personalised FL algorithm, easing adoption by practitioners.
	
	\item We provide algorithmic and theoretical support to the proposed methodology. 
	In particular, we show that for an insightful simpler learning scenario, \ours is able to recover the true latent subspace underlying the FL problem.
	
	\item Experimental analyses on toy data sets and real-world problems illustrate the accuracy of our theory and show that \ours provides better performance than competing FL approaches.
\end{enumerate}

\noindent \textbf{Conventions and Notations.} 
The Euclidean norm on $\mathbb{R}^d$ is $\|\cdot\|$, we use $|\mathrm{S}|$ to denote the cardinality of the set $\mathrm{S}$ and $\nsets = \nset\setminus\{0\}$.
For $n \in \N^*$, we refer to $\{1,\ldots,n\}$ with the notation $[n]$.
We denote by $\mathrm{N}(m,\Sigma)$ the Gaussian distribution with mean vector $m$ and covariance matrix $\Sigma$ and use the notation $X \sim \nu$ to denote that the random variable $X$ has been drawn from the probability distribution $\nu$.
We define the Wasserstein distance of order $2$ for any probability measures $\mu,\nu$ on $\Rd$ with finite $2$-moment by $W_2 (\mu, \nu) = (\inf_{\zeta \in \mathcal{T}(\mu,\nu)} \int_{\mathbb{R}^d \times \mathbb{R}^d}\|\theta-\theta'\|^2\mathrm{d}\zeta(\theta,\theta'))^{\half}$, where $\mathcal{T}(\mu, \nu)$ is the set of transference plans of $\mu$ and $\nu$.


\section{Proposed Methodology}
\label{sec:methodo}

\begin{figure*}
  \begin{center}
    \includegraphics[scale=0.13]{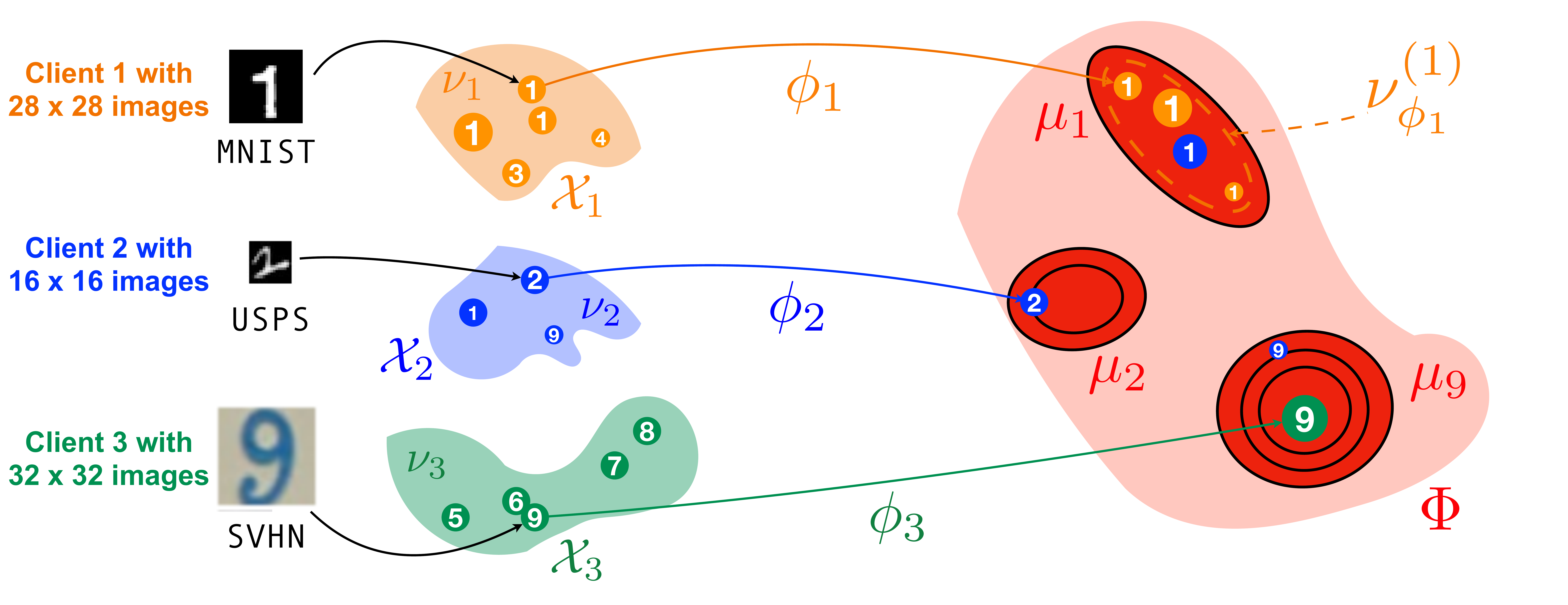}
  \end{center}
  \caption{Illustration of part of the proposed methodology for $b=3$ clients with \emph{heterogeneous} digit images coming from three different data sets namely MNIST \citep{deng2012mnist}, USPS \citep{uspsdataset} and SVHN \citep{SVHN}. The circles with digits inside stand for a group of samples, of a given class, owned by a client and the size of the circles indicates their probability mass.
  In the subspace $\Phi$, $\{\mu_i\}_{i \in [b]}$ (and their level sets) refer to some learnable reference measures to which we seek to align the transformed version $\nu_{\phi_i}$ of $\nu_i$. 
  Personalised FL then occurs in the space $\Phi$ and aims at learning local models $\{\theta_i\}_{i \in [b]}$ for each clients as well as $\{\phi_i,\mu_i\}_{i \in [b]}$.}
  \label{fig:problem_overview}
  \vspace{-0.1cm}
\end{figure*}

\noindent \textbf{Problem Formulation.} We are considering a centralised and horizontal FL framework involving $b \in \N^*$ clients and a central entity \citep{FL_concept,KairouzFL}.
Under this paradigm, the central entity orchestrates the collaborative solving of a common machine learning problem by the $b$ clients; without requiring raw data exchanges.
For the sake of simplicity, we consider the setting where all clients want to solve a multi-class classification task with $C \in \N^*$ classes. 
In Appendix, we also highlight how regression tasks could be encompassed in the proposed framework.
The $b$ clients are assumed to possess local data sets $\{\mathrm{D}_{i}\}_{i \in [b]}$ such that, for any $i \in [b]$, $\mathrm{D}_{i} = \{(x_i^{(j)},y_i^{(j)})\}_{j \in [n_i]}$ where $x_i^{(j)}$ stands for a feature vector, $y_i^{(j)}$ is a label and $n_i = |\mathrm{D}_{i}|$.
A core assumption of FL is that the local data sets $\{\mathrm{D}_{i}\}_{i \in [b]}$ are \emph{statistically heterogeneous} \emph{i.e.} for any $i \in [b]$ and $j \in [n_i]$, $(x_i^{(j)},y_i^{(j)}) \overset{\mathrm{i.i.d.}}{\sim} \nu_i$  where $\nu_i$ is a \emph{local} probability measure defined on an appropriate measurable space.
Existing horizontal FL approaches typically assume that the raw input features of the clients are defined on a common subspace so that their marginal distributions admit the same support.

In contrast, \textbf{we suppose here that these features live in \emph{heterogeneous} spaces}. 
Our main goal is to cope with this new type of heterogeneity in horizontal FL.
More precisely, for any $i \in [b]$ and $j \in [n_i]$, we assume that $x_i^{(j)} \in \mathcal{X}_i \subseteq \R^{k_i}$ such that $\{\mathcal{X}_i\}_{i \in [b]}$ are not part of a common ground metric.
This setting is challenging since standard FL approaches \citep{mcmahan17,FedProx} and even personalised FL ones \citep{pmlr-v139-collins21a,Hanzely2021Personalized} cannot be applied directly.
In addition, we also assume a specific type of \emph{prior probability shift} where, for any $i \in [b]$ and $j \in [n_i]$, $y_i^{(j)} \in \mathcal{Y}_i \subseteq [C]$.
For instance, a client might only see digits $1$ and $7$ from the MNIST data set while another one only has access to USPS digits $1$, $2$ and $9$, see \Cref{fig:problem_overview}.

\noindent \textbf{Methodology.} To address the feature space heterogeneity issue, we propose to map clients' features into a fixed-dimension common subspace $\Phi \subseteq \R^k$ by resorting to \emph{local} embedding functions $\{\phi_i: \mathcal{X}_i \rightarrow \Phi\}_{i \in [b]}$\footnote{Note that we could also have considered push-forward operators acting on the marginals associated to the clients' features, see \citet[Remark 2.5]{OT}.}.
Our proposal for learning those local functions is illustrated in \Cref{fig:problem_overview}. 
In order
to preserve some semantical information (such as the class associated to a feature vector) on the original data distribution, we seek at learning the functions $\{\phi_i\}_{i \in [b]}$ such that they are aligned with (\emph{i.e.} close to) some learnable latent anchor distribution that is shared across all clients. 
This anchor distribution must be seen an universal ``calibrator'' for clients that avoids
similar semantical information from different client being scattered across the subspace $\Phi$, impeding then a proper subsequent federated learning procedure of the classification model. 
%
As depicted in \Cref{fig:problem_overview}, we propose to learn the feature embedding functions by aligning their probability distributions conditioned on the class $c \in [C]$, denoted by $\nu_{\phi_i}^{(c)}$, via $C$ learnable anchor measures $\{\mu_c\}_{c \in [C]}$ \citep{pmlr-v119-xu20e,Tschannen2020On,Kollias2021DistributionMF,Zhou_Chaib-draa_Wang_2021}.
In the literature, several approaches have been considered to align probability distributions ranging from mutual information maximisation \citep{Tschannen2020On}, maximum mean discrepancy \citep{zellinger2017central} to the usage of other probability distances such as Wasserstein or Kullback-Leibler ones \citep{10.5555/3504035.3504532}.

Once data from the heterogeneous spaces are embedded in the same latent subspace $\Phi$, we can deploy a federated learning methodology for training from this novel representation space.
At this step, we need to choose which of standard FL approaches, \emph{e.g.} \texttt{FedAvg} \citep{mcmahan17}, or personalised one are more appropriate.
Since the proposed distribution alignment training procedure via the use of an anchor distribution might not be perfect, some statistical heterogeneity may still appear in the common latent subspace $\Phi$.
Therefore, we aim at solving a \emph{personalised} FL problem where each client has a local model tailored to her specific data distribution in $\Phi$ \citep{PFL_review}. 
By considering an empirical risk minimisation formulation, the resulting data-fitting term we want to minimise writes
\vspace{-0.2cm}
\begin{align}
    f(\theta_{1:b}, \phi_{1:b}) = \sum_{i=1}^b \omega_i f_i(\theta_i,\phi_i)\eqsp, \label{eq:PFL_ERM}
\end{align}
where $\phi_i$ is the aforementioned local embedding function, $\theta_i \in \R^{d_i}$ stands for a local model parameter and $\{\omega_i\}_{i \in [b]}$ are non-negative weights associated to each client such that $\sum_{i=1}^b \omega_i = 1$; and for any $i \in [b]$, 
\vspace{-0.2cm}
\begin{equation}
f_i(\theta_i,\phi_i) = \frac{1}{n_i}\sum_{j=1}^{n_i} \ell\pr{y_i^{(j)}, g_{\theta_i}^{(i)}\br{\phi_i\pr{x_i^{(j)}}}} \eqsp. \label{eq:PFL_local}
\end{equation}
In the local objective function defined in \eqref{eq:PFL_local}, $\ell(\cdot,\cdot)$ stands for a classification loss function between the true label $y_i^{(j)}$ and the predicted one $g_{\theta_i}^{(i)}[\phi_i(x_i^{(j)})]$ where $g_{\theta_i}^{(i)}$ is the local model that admits a personalised architecture parameterised by $\theta_i$ and taking as input an embedded feature vector $\phi_i(x_i^{(j)}) \in \Phi$.


\textbf{Objective Function.}
At this stage, we are able to integrate the FL paradigm and the local embedding function learning into a global objective function we want to optimise, see \eqref{eq:PFL_ERM}. 
Remember that we want to learn the parameters $\{\theta_i\}_{i \in [b]}$ of personalised FL models, in conjuction with some local embedding functions $\{\phi_i\}_{i \in [b]}$ and shared anchor distributions $\{\mu_c\}$.
In particular, the latter have to be aligned with class-conditional distributions $\{\nu_{\phi_i}^{(c)}\}$.
We propose to perform this alignement via a Wasserstein regularisation term leading to consider a regularised version of the empirical risk minimisation problem defined in \eqref{eq:PFL_ERM}, namely
\vspace{-0.3cm}
\begin{align}
    &\theta^\star_{1:b}, \phi^\star_{1:b}, \mu_{1:C}^\star = \argmin_{\theta_{1:b},\phi_{1:b},\mu_{1:C}} \sum_{i=1}^b F_i(\theta_i,\phi_i,\mu_{1:C})\eqsp, \\
    &\text{where for any $i \in [b]$, } \\
    &F_i(\theta_i,\phi_i,\mu_{1:C}) = \omega_i f_i(\theta_i,\phi_i) +\lambda_1\omega_i \sum_{c \in \mathcal{Y}_i} \mathrm{W}_2^2 \pr{\mu_c,\nu_{\phi_i}^{(c)}}\\
    &+\lambda_2 \omega_i \sum_{c \in \mathcal{Y}_i} \frac{1}{J}\sum_{j=1}^J \ell\pr{c, g_{\theta_i}^{(i)}\br{Z_c^{(j)}}}\eqsp, \label{eq:PFL_ERM_reg}
\end{align}
where $\{Z_c^{(j)} ; j \in [J]\}_{c \in [C]}$ stand for samples drawn from $\{\mu_c\}_{c \in [C]}$, and $\lambda_1, \lambda_2 > 0$ are regularisation parameters.
The second term in \eqref{eq:PFL_ERM_reg} aims at aligning the conditional probability measures of the transformed features.
The third one is an optional term aspiring to calibrate the reference measures with the classifier in cases where two or more classes are still ambiguous after mapping onto the common feature space; it has also some benefits to tackle covariate shift in standard FL \citep{NEURIPS2021_2f2b2656}. 

\textbf{Design Choices and Justifications.} In the sequel, we consider the Gaussian anchor measures $\mu_c = \gauss(v_c,\Sigma_c)$ where $v_c \in \R^k$ and $c \in [C]$.
Note that, under this choice, the samples $\{Z_c^{(j)} ; j \in [J]\}_{c \in [C]}$ can be written $Z_c^{(j)} = v_c + L_c \ \xi_c^{(j)}$ where $\xi_c^{(j)} \sim \gauss(0_k,\mathrm{I}_k)$ and $L_c \in \R^{k \times k}$ is such that $\Sigma_c = L_c L_c^\top$ by exploiting the positive semi-definite property of $\Sigma_c$. Invertibility of $L_c$ is ensured by adding a diagonal matrix $\varepsilon \mathrm{I}_k$ with small positive diagonal elements.
One of the key advantages of this Gaussian assumption is that, under mild assumptions, it guarantees the existence of a transport map $T^{(i)}$ such that  $T_{\#}^{(i)}(\nu_i) = \mu$, owing to Brenier's theorem \citep{santambrogio2015optimal} as a mixture of Gaussians admits a density with respect to the Lebesgue measure. 
Hence, in our case, learning the local embedding functions boils down to approximating this transport map by $\phi_i$. 
In addition, sampling from a Gaussian probability distribution can be performed efficiently \citep{Vono2020_gaussian,Parker2012,Gilavert2015}, even in high dimension.
We also consider approximating the conditional probability measures $\{\nu_{\phi_i}^{(c)} ; c \in \mathcal{Y}_i\}_{i \in [b]}$ by using Gaussian measures $\{\hat{\nu}_{\phi_i}^{(c)} = \gauss(\hat{m}_i^{(c)},\hat{\Sigma}_i^{(c)}) ; c \in \mathcal{Y}_i\}_{i \in [b]}$ such that for any $i \in [b]$ and $c \in [C]$, $\hat{m}_i^{(c)}$ and $\hat{\Sigma}_i^{(c)}$ stand for empirical mean vector and covariance matrix.
The relevance of this approximation is detailed in \Cref{subsec:Wass}.

These two Gaussian choices (for the anchor distribution and the class-conditional distributions) allow us to have a closed-form expression for the Wasserstein distance of order 2 which appears in \eqref{eq:PFL_ERM_reg}, see \emph{e.g.} \citet{https://doi.org/10.1002/mana.19901470121,DOWSON1982450}.
More precisely, we have for any $i \in [b]$ and $c \in [C]$,
\begin{align}
    \mathrm{W}_2^2 \pr{\mu_c,\nu_{\phi_i}^{(c)}} &= \norm{v_c - m_i^{(c)}}^2 + \mathfrak{B}^2\pr{\Sigma_c   , \Sigma_i^{(c)}}, \label{eq:W2_Gauss}
\end{align}
where $\mathfrak{B}(\cdot,\cdot)$ denotes the Bures distance between two positive definite matrices \citep{BHATIA2019165}. 
In addition to yield the closed-form expression \eqref{eq:W2_Gauss}, the choice of the Wasserstein distance is motivated by two other important properties.
First, it is always finite no matter how degenerate the Gaussian distributions are, contrary to other divergences such as the Kullback-Leibler one \citep{DBLP:journals/corr/VilnisM14}. 
Being able to output a meaningful distance value when supports of distribution do not overlap is a key benefit of the Wasserstein distance, since when initialising $\phi_i$, we do not have any guarantee on such
overlapping (see illustration given in \Cref{fig:tsne-toybig}).
Second, its minimisation can be handled using efficient algorithms proposed in the optimal transport literature \citep{NEURIPS2018_b613e70f}. 

\begin{table*}[t]
\tiny
\caption{Related works. PFL refers to horizontal personalised FL, VFL to vertical FL and FTL to federated transfer learning.}
\label{table:overview}
\vskip 0.15in
\begin{center}
{\small
\begin{sc}
\begin{tabular}{lcccccc}
\toprule
method & type & $\neq$ feature spaces & multi-party & no shared ID & no shared feature \\
\midrule
\citep{zhang2021parameterized} & PFL & \redcross & \greencheck & \greencheck & \redcross \\
\citep{diao2021heterofl} & PFL & \redcross & \greencheck & \greencheck & \redcross \\
\citep{pmlr-v139-collins21a} & PFL & \redcross & \greencheck & \greencheck & \redcross \\
\citep{pmlr-v139-shamsian21a} & PFL & \redcross & \greencheck & \greencheck & \redcross \\
\citep{hong2022efficient} & PFL & \redcross & \greencheck & \greencheck & \redcross \\
\citep{pmlr-v162-makhija22a} & PFL & \redcross & \greencheck & \greencheck & \greencheck \\
\ours (this paper) & PFL & \greencheck  & \greencheck & \greencheck & \greencheck  \\
\midrule
\citep{VFL2017} & VFL & \greencheck & \redcross & \redcross & \greencheck \\
\citep{FL_concept} & VFL & \greencheck & \redcross & \redcross & \greencheck \\
\midrule
\citep{HFTL_Gao} & FTL & \greencheck & \greencheck & \greencheck & \redcross \\
\citep{secureFTL} & FTL & \redcross & \redcross & \greencheck & \redcross \\
\citep{LiuFTL} & FTL & \greencheck & \redcross & \redcross & \greencheck \\
\citep{CFTL2022} & FTL & \greencheck & \greencheck & \redcross & \redcross \\
\bottomrule
\end{tabular}
\end{sc}
}
\end{center}
\vskip -0.1in
\end{table*}

\noindent \textbf{Related Work.} As pointed out in \Cref{sec:intro}, several existing works can be related to the proposed methodology.
Loosely speaking, we can divide these related approaches into three categories namely (i) heterogeneous-architecture personalised FL, (ii) vertical FL and (iii) federated transfer learning.

Compared to traditional horizontal personalised FL (PFL) approaches, so-called \emph{heterogeneous-architecture} ones are mostly motivated by local heterogeneity regarding resource capabilities of clients \emph{e.g.} computation and storage \citep{zhang2021parameterized,diao2021heterofl,pmlr-v139-collins21a,pmlr-v139-shamsian21a,hong2022efficient,pmlr-v162-makhija22a}.
Nevertheless, they never consider features defined on heterogeneous subspaces, which is our main motivation.
In vertical federated learning (VFL), clients hold disjoint subsets of features. 
However, a restrictive assumption is that a large number of users are common across the clients \citep{FL_concept,VFL2017,angelou2020asymmetric,romanini2021pyvertical}. 
In addition, up to our knowledge, no vertical personalised FL approach has been proposed so far, which is restrictive if clients have different business objectives and/or tasks.
Finally, some works have focused on adapting standard tranfer learning approaches with heterogeneous feature domains under the FL paradigm.
These \emph{federated transfer learning} (FTL) approaches \citep{HFTL_Gao,CFTL2022,LiuFTL,secureFTL} stand for FL variants of heterogeneous-feature transfer learning where there are $b$ \emph{source} clients and 1 target client with a target domain.
However, these methods do not consider the same setting as ours and assume that clients share a common subset of features.
We compare the most relevant approaches among the previous ones in \Cref{table:overview}.

\section{Algorithm}
\label{sec:algo}

As detailed in Equation \eqref{eq:PFL_ERM_reg}, we perform personalisation under the FL paradigm by considering local model architectures $\{g_{\theta_i}^{(i)}\}_{i \in [b]}$ and local weights $\theta_{1:b}$.
As an example, we could resort to federated averaging with fine-tuning (\emph{e.g.} \texttt{FedAvg-FT}, see \citet{collins_neurips22}), model interpolation (\emph{e.g.} \texttt{L2GD}, see \citet{hanzely2020federated,hanzely2020lower}) or partially local models (\emph{e.g.} \texttt{FedRep}, see \citet{oh2022fedbabu,NEURIPS2021_5d44a2b0,pmlr-v139-collins21a}). 
\Cref{table:perso_algo} details how these methods can be embedded into the proposed methodology. 
\vspace{-0.3cm}
\begin{table}[H]
    \caption{Current personalised FL techniques that can be embedded in the proposed framework. The parameters $\alpha,\beta_i$ stand for model weights while $\omega \in [0,1]$.}
    \label{table:perso_algo}
    \vskip 0.15in
    \begin{center}
        \begin{tabular}{ccc}
            \toprule
            Algorithm & Local model & Local weights \\
            \midrule
            {\small\texttt{FedAvg-FT}} & $g^{(i)}_{\theta_i} = g_{\theta_i}$ & $\theta_i$\\
            {\small\texttt{L2GD}} & $g^{(i)}_{\theta_i} = g_{\theta_i}$ & $\theta_i = \omega\alpha + (1-\omega)\beta_i$\\
            {\small\texttt{FedRep}} & $g^{(i)}_{\theta_i} = g^{(i)}_{\beta_i} \circ g_{\alpha}$ & $\theta_i = [\alpha,\beta_i] $\\
            \bottomrule
        \end{tabular}
    \end{center}
    \vspace{-0.5cm}
\end{table} 

In \Cref{algo:FLIC-FEDREP}, we detail the pseudo-code associated to a specific instance of the proposed methodology when \texttt{FedRep} is resorted to learn model parameters $\{\theta_i = [\alpha,\beta_i]\}_{i \in [b]}$ under the FL paradigm. 
In this setting, $\alpha$ stand for the shared weights associated to the first layers of a neural network architecture and $\beta_i$ for local ones aiming at performing personalised classification. 
Besides these two learnable parameters, the algorithm also learns
the local embedding functions $\phi_{1:b}$ and the anchor distribution $\mu_{1:C}$. 
In practice, at a given epoch $t$ of the algorithm, a subset $\mathsf{A}_{t+1} \subseteq [b]$ of clients are selected to participate to the training process. 
Those clients receive the current latent anchor distribution $\mu_{1:C}^{(t)}$ and the current shared representation $\alpha^{(t)}$. 
Then, each client locally updates $\phi_i$, $\beta_i$ and her local versions of $\alpha^{(t)}$ and $\mu_{1:C}^{(t)}$. 
Afterwards, clients send back to the server an updated version of $\alpha^{(t)}$ and $\mu_{1:C}^{(t)}$. 
Updated global parameters $\alpha^{(t+1)}$ and $\mu_{1:C}^{(t+1)}$ are then obtained by weighted averaging of client updates on appropriate manifolds. 
The use of the Wasserstein loss in \eqref{eq:PFL_ERM_reg} naturally leads to perform averaging of the local anchor distributions via a Wasserstein barycenter; algorithmic details are provided in the next paragraph.
In \Cref{algo:FLIC-FEDREP}, we use for the sake of simplicity the notation \texttt{DescStep}($F_i^{(t,m)}, \cdot)$ to denote a (stochastic) gradient descent step on the function $F_i^{(t,m)} = F_i(\beta_i^{(t,m)},\phi_i^{(t,m)},\alpha^{(t)},\mu_{1:C}^{(t)})$ with respect to a subset of parameters in $(\theta_i,\phi_i,\mu_{1:C})$. This subset is specified in the second argument of \texttt{DescStep}.
An explicit version of \Cref{algo:FLIC-FEDREP} is provided in Appendix, see \Cref{algo:FLIC-FEDREP_supp}.

\begin{algorithm}[t]
	\caption{\textcolor{magenta}{\texttt{FLIC}}}
	\label{algo:FLIC-FEDREP}
	\begin{algorithmic}[1]
		\REQUIRE{ initialisation $\alpha^{(0)}$, $\mu_{1:C}^{(0)}$, $\phi_{1:b}^{(0,0)}$, $\beta_{1:b}^{(0,0)}$.}
		\FOR{$t=0$ {\bfseries to} $T-1$}
			\STATE Sample a set of $\mathsf{A}_{t+1}$ of active clients.
			\FOR{$i \in \mathsf{A}_{t+1}$} 
			\STATE The central server sends $\alpha^{(t)}$ and $\mu_{1:C}^{(t)}$ to $\mathsf{A}_{t+1}$.
			\STATE {\textcolor{darkgreen}{\textit{// Update local parameters}}}
			\FOR{$m=0$ {\bfseries to} $M-1$}
				\STATE  $\phi_i^{(t,m+1)} \leftarrow  \texttt{DescStep}\pr{F_i^{(t,m)},\phi_i^{(t,m)}}$.
				\STATE $\beta_i^{(t,m+1)} \leftarrow \texttt{DescStep}\pr{F_i^{(t,m)},\beta_i^{(t,m)}}$.
			\ENDFOR
			\STATE $\phi_i^{(t+1,0)} = \phi_i^{(t,M)}$.
			\STATE $\beta_i^{(t+1,0)} = \beta_i^{(t,M)}$.
			
			\STATE {\textcolor{darkgreen}{\textit{// Update global parameters}}}
			\STATE $\alpha_i^{(t+1)} \leftarrow \texttt{DescStep}\pr{F_i^{(t,M)},\alpha^{(t)}}$.

			\STATE$\mu_{i,1:C}^{(t+1)} \leftarrow  \texttt{DescStep}\pr{F_i^{(t,M)},\mu_{1:C}^{(t)}}$.

			\STATE {\textcolor{darkgreen}{\textit{//~Communication with the server}}}
			\STATE Send $\alpha_i^{(t+1)}$ and $\mu_{i,1:C}^{(t+1)}$ to central server.
		\ENDFOR
		\STATE {\textcolor{darkgreen}{\textit{//~Averaging global parameters}}}
		\STATE $\alpha^{(t+1)} = \frac{b}{|\mathsf{A}_{t+1}|}\sum_{i \in \mathsf{A}_{t+1}} w_i \alpha_i^{(t+1)}$
		\STATE $\mu_{1:C}^{(t+1)} \leftarrow \texttt{WassersteinBarycenter}(\{\mu_{i,1:C}^{(t+1)}\} )$ 
		\ENDFOR
		\ENSURE parameters $\alpha^{(T)}$, $\mu_{1:C}^{(T)}$, $\phi_{1:b}^{(T,0)}$, $\beta_{1:b}^{(T,0)}$. 
\end{algorithmic}
\end{algorithm}

Note that we take into account key inherent challenges to federated learning namely \emph{partial participation} and \emph{communication bottleneck}.
Indeed, we cope with the client/server upload communication issue by allowing each client to perform multiple steps (here $M \in \N^*$) so that communication is only required every $M$ local steps. This allows us to consider updating global parameters, locally, via only one stochastic gradient descent step and hence avoiding the client drift phenomenon \citep{karimireddy2020scaffold}.

\noindent \textbf{Averaging Anchor Distributions.} In this paragraph, we provide algorithmic details regarding steps 14 and 20 in \Cref{algo:FLIC-FEDREP}. 
For any $c \in [C]$, the anchor distribution $\mu_c$ involves two learnable parameters namely the mean vector $v_c$ and the covariance matrix $\Sigma_c$. 
Regarding the former, step 14 stands for a (stochastic) gradient descent step aiming to obtain a local version of $v_c$ denoted by $v_{i,c}^{(t+1)}$ and step 20 boils down to compute $v_c^{(t+1)} = (b/|\mathsf{A}_{t+1}|) \sum_{i \in \mathsf{A}_{t+1}} \omega_i v_{i,c}^{(t+1)}$.
To enforce the positive semi-definite constraint of the covariance matrix, we rewrite it as $\Sigma_c = L_c  L_c^\top$ where $L_c \in \R^{k \times k}$ and optimise in step 14 with respect to the factor $L_c$ instead of $\Sigma_c$. 
We can handle the gradient computation of the Bures distance in step 14 using  the work of \citet{NEURIPS2018_b613e70f}; and obtain a local factor $L_{i,c}^{(t+1)}$ at iteration $t$.
In step 20, we compute $L_c^{(t+1)} = (b/|\mathsf{A}_{t+1}|) \sum_{i \in \mathsf{A}_{t+1}} \omega_i L_{i,c}^{(t+1)}$ and set $\Sigma_c^{(t+1)} = L_c^{(t+1)} [L_c^{(t+1)}]^\top$.
When $\lambda_2 = 0$ in \eqref{eq:PFL_ERM_reg}, these mean vector and covariance matrix updates exactly boil down to perform one stochastic (because of partial participation) gradient descent step to solve the Wasserstein barycenter problem $\argmin_{\mu_c} \sum_{i=1}^b \omega_i \mathrm{W}_2^2 (\mu_c,\nu_{\phi_i}^{(c)})$.  



%
%


\begin{table*}
	\caption{Performance over 3  runs of our \ours model and the competitors on some real-data problems (\emph{Digits} and \emph{TextCaps} data set).\\}%
	\centering%
		\begin{tabular}{lcccc}%
			\toprule
			Data sets (setting) & Local             &  \texttt{FedHeNN} & \ours-Class & \ours-HL   \\
			\midrule
			 Digits  ($b=100$, 3 Classes/client)           & 97.49 & 97.45 & \textbf{97.83} & 97.70     \\
			 Digits  ($b=100$, 5 Classes/client)           & 96.16 & 96.15 & \textbf{96.46} & 96.31     \\
			 Digits  ($b=200$, 3 Classes/client)           & 93.33 & 93.40 & 94.50 & \textbf{94.51}    \\
			 Digits  ($b=200$, 5 Classes/client)           & 86.50 & 87.22 & \textbf{90.66} & 90.63   \\
			\midrule		
			 TextCaps  ($b=100$, 2 Classes/client)           & 84.19 & 83.99 & 89.14 & \textbf{89.68}      \\
			TextCaps  ($b=100$, 3 Classes/client)          & 76.04 & 75.39 & 81.27 & \textbf{81.50}  \\
			TextCaps*  ($b=200$, 2 Classes/client)          & 83.78 & 83.89 & 87.73 & \textbf{87.74}         \\
			TextCaps*  ($b=200$, 3 Classes/client)            & 74.95 & 74.77 & \textbf{79.08} & 78.49   \\
 			\bottomrule
	\end{tabular}
	\label{table:performance}%
\end{table*}

\section{Non-Asymptotic Convergence Guarantees in a Simplified Setting}
\label{sec:theory}

\begin{figure}
  \begin{center}
    \includegraphics[scale=0.35]{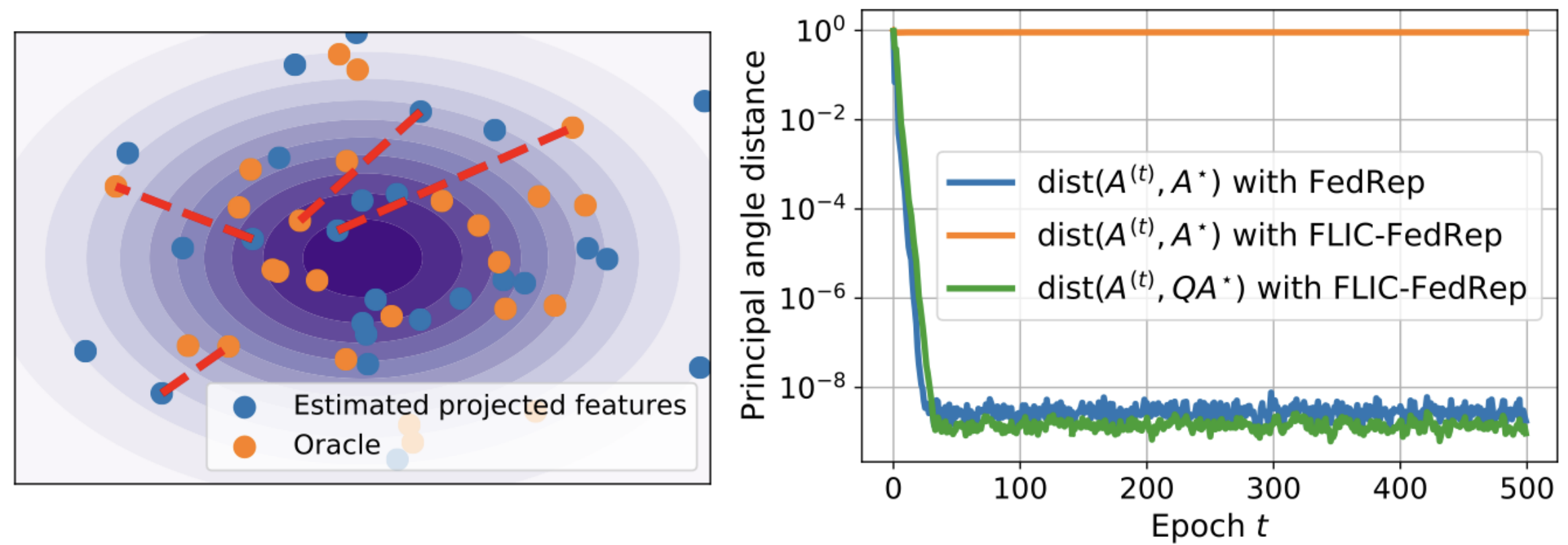}
  \end{center}
  \caption{Red dashed line indicates that the two embedded features $\phi_i^\star(x_i^{(j)})$ and $\hat{\phi}_i(x_i^{(j)})$ come from the same initial raw feature $x_i^{(j)}$. On test data, mean prediction errors for both \texttt{FedRep} operating on $\phi_i^\star(x_i^{(j)})$ and \Cref{algo:FLIC-FedRep-linreg} (referred to as \texttt{FLIC-FedRep}) are similar ($\approx 4.98 \times 10^{-5}$).}
  \label{fig:small_data}
    \label{fig:conv_Q}
\end{figure}

Deriving non-asymptotic convergence bounds for \Cref{algo:FLIC-FEDREP} in the general case is challenging  since the considered $C$-class classification problem leads to jointly solving personalised FL and federated Wasserstein barycenter problems. Regarding the latter, obtaining non-asymptotic convergence results is still an active research area in the centralised learning framework \citep{altschuler2021averaging}.
As such, we propose to analyse a simpler regression framework where the anchor distribution is known beforehand and not learnt under the FL paradigm.

More precisely, we assume that $x_i^{(j)} \sim \mathrm{N}(m_i,\Sigma_i)$ with $m_i \in \R^{k_i}$ and $\Sigma_i \in \R^{k_i \times k_i}$ for $i \in [b], j \in [n_i]$.
In addition, we consider that the continuous scalar labels are generated via the oracle model $y_i^{(j)} = (A^\star\beta^\star_i)^\top \phi_i^\star(x_i^{(j)})$ where $A^\star \in \R^{k \times d}$, $\beta_i^\star \in \R^d$ and $\phi_i^\star(\cdot)$ are ground-truth parameters and feature transformation function, respectively. 
We make the following assumptions on the ground truth.

\begin{assumption}
    \label{ass}
    \begin{enumerate}[wide, labelwidth=!, labelindent=0pt,label=(\roman*),noitemsep,nolistsep]

        \item For any $i \in [b]$, $j \in [n_i]$, embedded features $\phi_i^\star(x_i^{(j)})$ are distributed according to $\mathrm{N}(0_k,\mathrm{I}_k)$.\label{ass:1}
        \item Ground-truth model parameters satisfy $\|\beta_i^\star\|_2 = \sqrt{d}$ for $i \in [b]$ and $A^\star$ has orthonormal columns.
        \item For any $t \in \{0,\ldots,T-1\}, |\mathsf{A}_{t+1}| = b'$ with $1 \leq b' \leq b$, and if we select $b'$ clients, their ground-truth head parameters $\{\beta_i^\star\}_{i \in \mathsf{A}_{t+1}}$ span $\R^d$.
        \item In \eqref{eq:PFL_local}, $\ell(\cdot,\cdot)$ is the $\ell_2$ norm, $\omega_i=1/b$, $\theta_i = [A,\beta_i]$ and $g_{\theta_i}^{(i)}(x) = (A\beta_i)^\top x$ for $x \in \R^k$.
    \end{enumerate}
\end{assumption}

Under \Cref{ass}-\ref{ass:1}, \citet[Theorem 4.1]{delon_desolneux_salmona_2022} show that $\phi_i^\star$ can be expressed as a non-unique  affine map with closed-form expression.
To recover the true latent distribution $\mu = \mathrm{N}(0_k,\mathrm{I}_k)$, we propose to estimate $\hat{\phi}_i$ by leveraging this closed-form mapping between $\mathrm{N}(m_i,\Sigma_i)$ and $\mu$.
Because of the non-unicity of $\phi_i^\star$, we show in \Cref{theorem2} that we can only recover it up to a matrix multiplication. 
Interestingly, \Cref{theorem2} shows that the global representation $A^{(T)}$ learnt via \texttt{FedRep} (see \Cref{algo:FLIC-FedRep-linreg} in Appendix) is able to correct this feature mapping indetermination. 
Associated convergence behavior is illustrated in \Cref{fig:conv_Q} on a toy example whose details are postponed to \Cref{sec:proof}. 

\begin{theorem}
    \label{theorem2}
    Assume \Cref{ass}.
    Then, for any $x_i \in \R^{k_i}$, we have $\hat{\phi}_i(x_i) = Q \phi_i^\star(x_i)$ where $Q \in \R^{k \times k}$ is of the form $\mathrm{diag}_k(\pm 1)$.
    Under additional technical assumptions detailed in \Cref{sec:proof}, we have for any $t \in \{0,\ldots,T-1\}$ and with high probability,
    $$
    \mathrm{dist}(A^{(t+1)},QA^\star) \leq (1 - \kappa)^{(t+1)/2} \mathrm{dist}(A^{(0)},QA^\star)\eqsp,
    $$
    where $\kappa \in (0,1)$ is detailed explicitly in \Cref{theorem:1} and $\mathrm{dist}$ denotes the principal angle distance. 
    \label{theorem:cv_FedRep}
\end{theorem}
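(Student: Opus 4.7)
The proof splits naturally into two parts corresponding to the two claims. For the first claim, the plan is to invoke \citet[Theorem 4.1]{delon_desolneux_salmona_2022}, which characterises the family of affine maps transporting one Gaussian onto another. Under \Cref{ass}-\ref{ass:1}, both the ground-truth $\phi_i^\star$ and the estimator $\hat{\phi}_i$ must push $\mathrm{N}(m_i,\Sigma_i)$ forward to $\mathrm{N}(0_k,\mathrm{I}_k)$; the cited theorem implies that any two such affine maps differ by pre-composition with an orthogonal matrix acting on the target. I would then take $\hat{\phi}_i$ to be a canonical representative of this family --- for instance, $\hat{\phi}_i(x) = \Sigma_i^{-1/2}(x - m_i)$ using the symmetric square root with a fixed ordering of the eigenvalues --- so that the residual indeterminacy collapses to a sign flip along each principal axis. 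Assuming the eigenvalues of $\Sigma_i$ are distinct (a mild technical condition to be recorded in \Cref{sec:proof}), the eigenbasis is determined up to signs, yielding $\hat{\phi}_i(x_i) = Q \phi_i^\star(x_i)$ with $Q = \mathrm{diag}_k(\pm 1)$. Finally, one argues that $Q$ is client-independent because the target is the same isotropic standard Gaussian for every client and the sign convention is fixed globally.

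For the second claim, the plan is to reduce the learning task to the standard \texttt{FedRep} setting analysed in \Cref{theorem:1}. Since $Q$ is diagonal with $\pm 1$ entries, one has $Q = Q^\top = Q^{-1}$, and therefore
\[
y_i^{(j)} = (A^\star \beta_i^\star)^\top \phi_i^\star(x_i^{(j)}) = (A^\star \beta_i^\star)^\top Q \hat{\phi}_i(x_i^{(j)}) = \bigl((QA^\star)\beta_i^\star\bigr)^\top \hat{\phi}_i(x_i^{(j)}).
\]
Thus the labels fed into the algorithm are generated by the same linear oracle, but with ground-truth representation $QA^\star$ and unchanged heads $\beta_i^\star$. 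Moreover, $\hat{\phi}_i(x_i^{(j)}) = Q \phi_i^\star(x_i^{(j)}) \sim \mathrm{N}(0_k,\mathrm{I}_k)$ because the standard Gaussian is invariant under orthogonal transformations, and $QA^\star$ inherits the orthonormal-columns property from $A^\star$. The reparameterised problem therefore satisfies the hypotheses of \Cref{theorem:1}, and applying that result on the sequence $\{A^{(t)}\}$ produced by \Cref{algo:FLIC-FedRep-linreg} yields the stated geometric contraction of $\mathrm{dist}(A^{(t+1)}, QA^\star)$ with rate $1-\kappa$. The principal-angle distance is the right object here precisely because it is invariant under right-multiplication by invertible matrices acting on the head, so the head indeterminacy between $\beta_i^\star$ and its learnt counterpart does not enter the bound.

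The main obstacle is the first stage: pinning down that the ambiguity matrix is diagonal with $\pm 1$ entries (rather than an arbitrary orthogonal matrix) and that it is the same across all clients. This requires both a careful specification of the estimator $\hat{\phi}_i$ --- so that the continuous non-uniqueness in the Gaussian transport problem collapses to a finite discrete group --- and a non-degeneracy condition on $\Sigma_i$ (distinct eigenvalues) needed to pick the eigenbasis uniquely up to signs. Once the first stage is established, the second is essentially bookkeeping, since the convergence machinery of \texttt{FedRep} applies off-the-shelf to the reparameterised linear-regression problem after the orthogonal change of variable.
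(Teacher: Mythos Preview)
Your treatment of the second claim is correct and in fact more economical than the paper's. The paper does \emph{not} perform the change of variables $A^\star \mapsto QA^\star$ and then invoke the original \texttt{FedRep} analysis off-the-shelf; instead, it re-derives the entire Collins et al.\ machinery (the recursion for $B^{(t+1)}$, the splitting $\mathrm{dist}(A^{(t+1)},QA^\star)\leq C_1+C_2$, the concentration bounds on $G^{(t)}$, $C^{(t)}$, $D^{(t)}$, and the control of $\|(R^{(t+1)})^{-1}\|_2$) with the matrix $Q$ carried explicitly through every step. Your reparametrisation argument --- $Q$ is orthogonal, so $QA^\star$ has orthonormal columns and $Q\phi_i^\star(x_i^{(j)})\sim\mathrm{N}(0_k,\mathrm{I}_k)$ --- is sufficient to collapse the whole analysis to a direct citation, and buys considerable brevity.

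Your treatment of the first claim, however, has a genuine gap and over-engineers the problem. The estimator you propose, $\hat{\phi}_i(x)=\Sigma_i^{-1/2}(x-m_i)$, lands in $\R^{k_i}$, not $\R^k$; the maps here go between spaces of \emph{different} dimension, so a symmetric square root does not make sense as written. This is precisely what \citet[Theorem 4.1]{delon_desolneux_salmona_2022} addresses: the affine transports from $\mathrm{N}(m_i,\Sigma_i)$ on $\R^{k_i}$ to $\mathrm{N}(0_k,\mathrm{I}_k)$ on $\R^k$ take the form $x\mapsto[\tilde{I}_k(D_i^{(k)})^{-1/2}\;\;0_{k,k_i-k}]P_i^\top(x-m_i)$, where $\Sigma_i=P_iD_iP_i^\top$ and $\tilde{I}_k=\mathrm{diag}_k(\pm 1)$. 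The paper simply \emph{defines} both $\phi_i^\star$ and $\hat{\phi}_i$ via this closed form, with possibly different sign matrices $\tilde{I}_k^\star$ and $\tilde{I}_k$; then $\hat{\phi}_i=Q\phi_i^\star$ with $Q=\tilde{I}_k(\tilde{I}_k^\star)^{-1}$ is immediate. No distinct-eigenvalue assumption is needed. As for the client-independence of $Q$, this is not derived from any ``global sign convention'' argument as you suggest: the paper explicitly \emph{assumes} that the oracle sign matrices coincide across clients, $\tilde{I}_k^{(i,\star)}=\tilde{I}_k^\star$ for all $i$, which together with a single choice of $\tilde{I}_k$ for the estimator forces a common $Q$.
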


\section{Numerical Experiments}


In this section, we aim at illustrating how our algorithm \ours, when using \texttt{FedRep} as FL approach, works in practice and showcasing its numerical performances. 
We consider several toy problems with different characteristics of heterogeneity; as well as experiments on real data namely a digit classification problem from images of different sizes and an object classification problem from either images or text captioning on clients.

\noindent \textbf{Baselines.} Since the problem we are addressing is novel, there exists no FL competitor that can serve as a baseline beyond local learning. 
However, we propose to modify the methodology proposed in \citet{pmlr-v162-makhija22a} to make it applicable to clients with heterogeneous feature spaces.
This latter approach can handle local representation models with different architectures and the key idea, coined Representation Alignement Dataset (RAD), is to calibrate those models by matching the latent representation of some fixed data inputs shared by the server to all clients. 
In our case, we can not use the same RAD accross all clients due to the different dimensionality of the local models. A simple alternative that we consider in our experiments is to build a RAD given the largest dimension space among all clients and then prune it to obtain a lower-dimensional RAD suitable to each client. We refer to the corresponding algorithm as \texttt{FedHeNN}.

We are going to consider the same architecture networks for all baselines. 
As \citet{pmlr-v162-makhija22a} considers all but the last layer of the network as the representation learning module, for a fair comparison, we also assume the same for our approach. Hence, in our case, the last layer is the classifier layer and the alignment with the latent reference distribution applies on the penultimate layer. This model is referred to as \ours-Class, in which all weights are thus local ($\alpha$ is empty and $\beta_i$ refers
to the last layer).
In addition, we also have
a model, coined \ours-HL, which has an additional trainable global hidden layer, which $\alpha$ being the parameter of linear layer and $\beta_i$ the parameter of the classification layer.

\noindent \textbf{Data Sets.} We consider three different classification problems to assess the performances of our approach. 
First, we are considering a toy classification problem with $C=20$ classes and where each class-conditional distribution is a Gaussian with random mean.
Covariance matrices of all classes are the same and considered fixed.
Using this toy data set, we are considering two sub-experiments.
For the first one, named \emph{noisy features} (and labelled \emph{toy NF} in figures), we consider a $5$-dimensional problem ($k=5$) and for each client add some random spurious features which dimensionality goes up to $10$. Hence, in this case
$k_i \in [5,15]$.  
For the second sub-experiment, denoted as \emph{linear mapping} (and labelled \emph{toy LM} in figures), we  apply a Gaussian random linear mapping to the original data which are of dimension $30$. The output dimension of the mapping is uniformly drawn from $5$ to $30$ leading to $k_i \in [5,30]$.  More details are provided in Appendix.

The second problem we consider is a digit classification problem with the original MNIST and USPS data sets which are respectively of dimension $28 \times 28$ and $16 \times 16$ and we assume that a client hosts either a subset of MNIST or USPS dataset.
Finally, the last classification problem is associated to a subset of the \emph{TextCaps} data set \citep{sidorov2020textcaps}, which is an image captioning data set, that we convert into a $4$-class classification problem, 
with about $12,000$ and $3,000$ examples for respectively training and testing, either based on the caption text or the image. 
Some examples of image/caption pairs as well as as more details on how the dataset has been obtained are shown in the \Cref{fig:textcaps}.
The caption has been embedded into a $768$-dimensional vector using a pre-trained Bert embedding and the image into a $512$-dimensional ones using a pre-trained ResNet model.
We further generated some heterogeneity by randomly pruning $10\%$ of these features on each client.
Again, we assume that a client hosts either some image embeddings or text embeddings.
For all simulations, we assume prior probability shift \emph{e.g} each client will have access to data of only specific classes.
   
\noindent \textbf{Experimental Setting.} For the experimental analysis, we use the codebase of \citet{pmlr-v139-collins21a} with some modifications to meet our setting.
For all experiments, we consider $T=50$ communication rounds for all algorithms; and at each round, a client participation rate of $r=0.1$. The number of local epochs for training
has been set to $M=10$.  As optimisers, we have used 
an Adam optimiser with a  learning rate of $0.001$ for all problems and approaches.  Further details are given in \Cref{append:models}.
For each component of the latent anchor distribution, we consider a Gaussian with learnable mean vectors and fixed Identity covariance matrix. As such, the Wasserstein barycenter computation boils
down to simply average the mean of client updates and for
computing the third term in   \Cref{eq:PFL_ERM_reg}, we just sample from
the Gaussian distribution.
 Accuracies are computed as
the average accuracy over all clients after the last epoch in which all local models are trained. 


\noindent \textbf{Results on Toy Data Sets.} \Cref{fig:toyperf} depicts the performance, averaged over $5$ runs, of the different algorithms with respect to the number of clients and when only $3$ classes are present in each client. 
For both data sets, we can note that for the \textit{noisy feature} setting, \ours improves on FedHeNN of about $3\%$ of accuracy across the setting and performs better than local learning. 
For the \textit{linear mapping} setting, \ours achieves better than other approaches with a  gain of performance of about $4\%$ while the gap tends to decrease as the number of clients increases. Interestingly, \ours-HL performs slightly better
than \ours-Class showing the benefit of using a shared representation layer $\alpha$.

\begin{figure}[t]
	\begin{center}
		\includegraphics[scale=0.228]{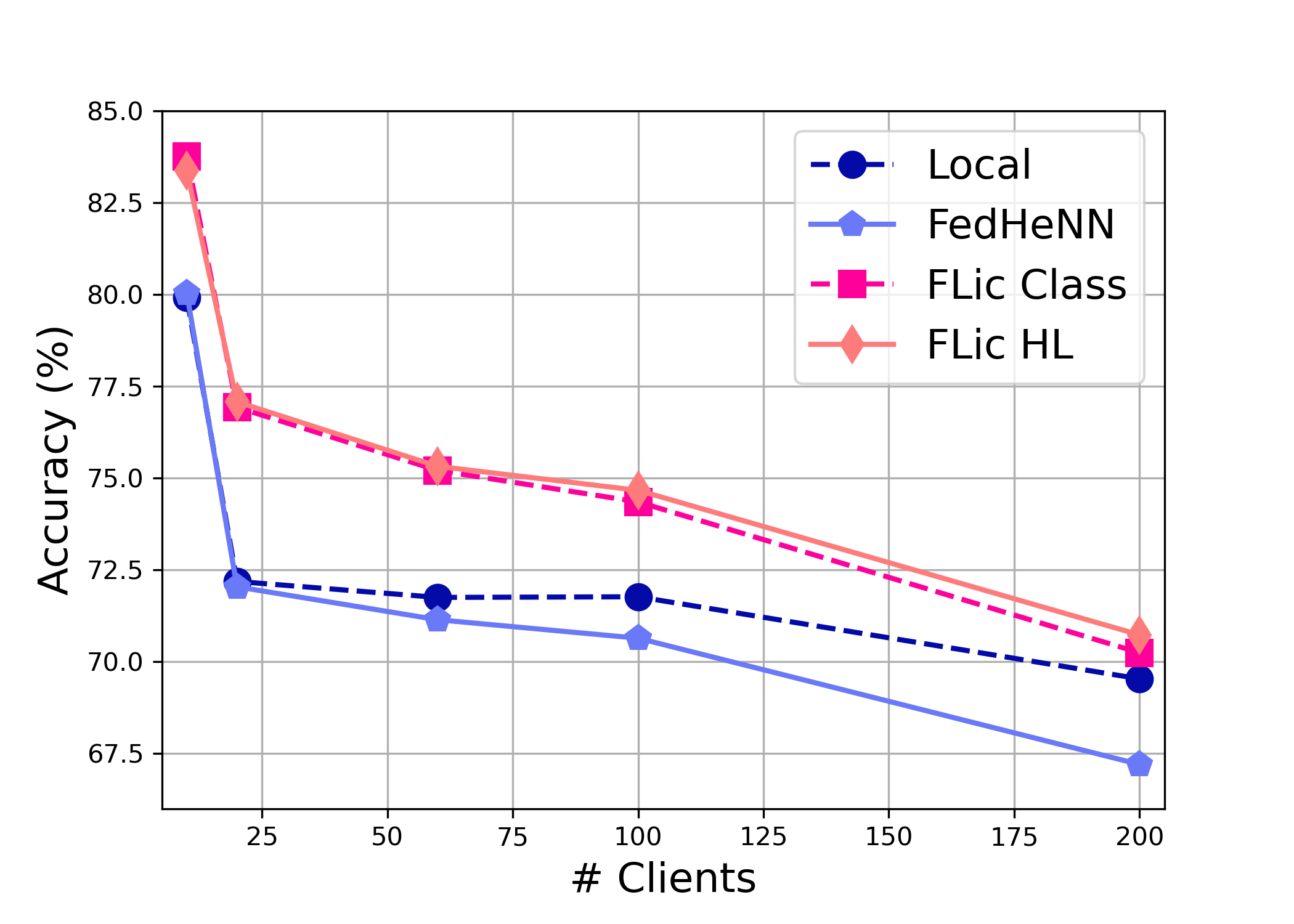}
		\includegraphics[scale=0.228]{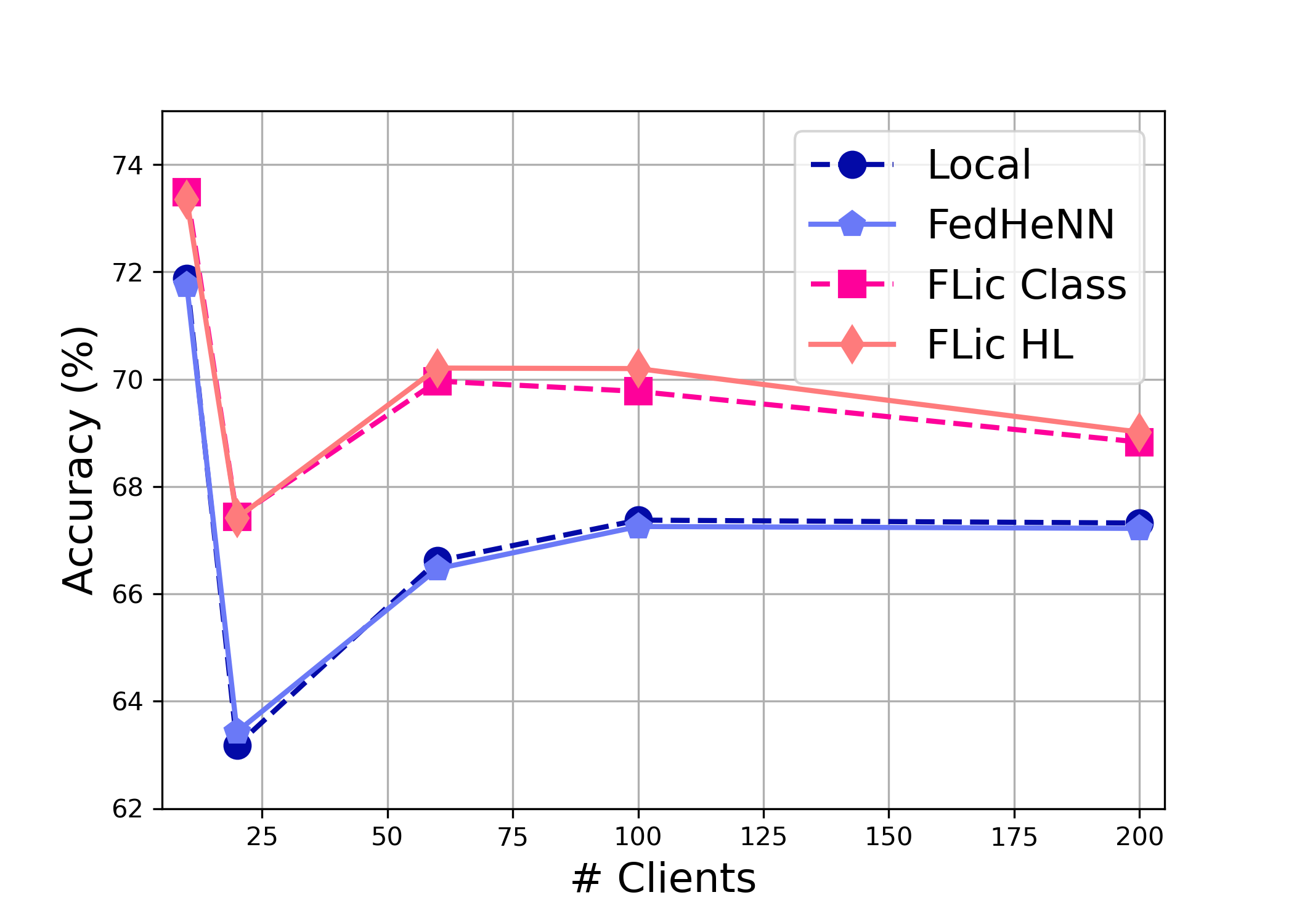} \\
	\end{center}
	\caption{Performance of \ours and competitors on the toy data sets with respect to the number of clients. (left) Gaussian classes in dimension $k=5$ with added noisy feature. (right) Gaussian classes in dimension $k=30$, transformed by a random map. Only $3$ classes are present on each client among the $20$ possible ones.}
	\label{fig:toyperf}
\end{figure}

\begin{figure}[t]
	\begin{center}
		\includegraphics[scale=0.25]{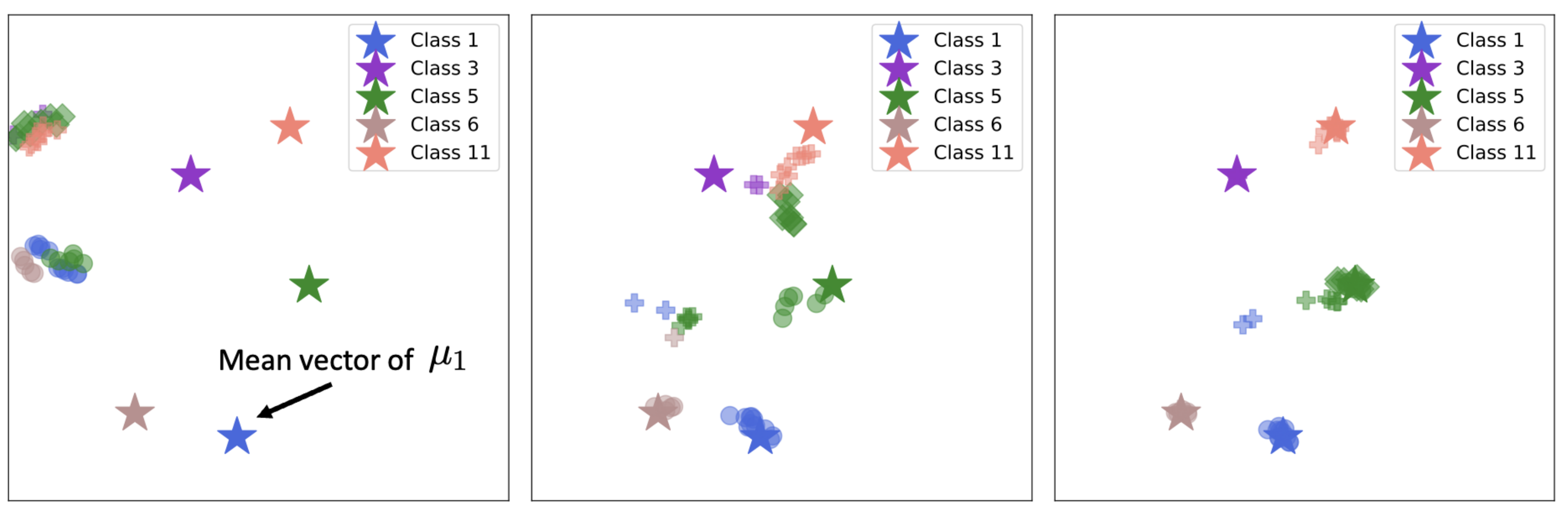}
	\end{center}
	\caption{\label{fig:tsne-toy}. 2D \emph{t-sne} projection of $5$ classes partially shared by $3$ clients for the \textbf{toy LM} dataset after learning the local
			transformation functions for (left) 10 epochs, (middle) 50 epochs, (right) 100 epochs. The three different markers represent the different clients. the 
			$\star$ marker represents the class-conditional mean of the reference distribution. We note that training set converges towards those means. }
	\label{fig:centroid}
	\vspace{-0.1cm}
\end{figure}

\noindent \textbf{Results on Digits and TextCaps Data Sets.}
Performance, averaged over $3$ runs, of all algorithms on the real-word problems are reported in \Cref{table:performance}. For the \emph{Digits} data set problem, we first remark that in  all situations, FL algorithms performs a bit better than local learning.
In addition, both variants of \ours achieve better accuracy than competitors. Difference in performance in favor our \ours reaches $3\%$ for the most
difficult problem.
For the $\emph{TextCaps}$ data set, gains in performance of \ours-HL reach about $4\%$ across settings.  While \texttt{FedHeNN} and \ours algorithms follow the same underlying principle (alignment of representation in a latent space),  we believe  that our framework benefits from the use of the latent anchor distributions, avoiding the need of sampling from the original space. Instead, \texttt{FedHeNN}  may fail as the sampling strategy of their RAD approach  suffers from the curse of dimensionality and does not properly lead to a successful  feature alignment.

\noindent \textbf{Additional Experiments in Appendix.} Due to the limited number of pages, additional experiments are postponed to the Appendix. In particular, we investigate the impact of pre-training the local embedding functions for a fixed reference distribution as in \Cref{sec:theory}, before running the proposed algorithm detailed in \Cref{algo:FLIC-FEDREP}.
The main message is that pre-training helps in enhancing performance but may lead to overfitting
if too many epochs are considered.
We also analyse the impact of client participation rate at each communication round reaching the conclusion that our model is robust to participation rate.


\section{CONCLUSION}

We have introduced a new framework for personalised FL when clients have heterogeneous feature spaces. 
We proposed a novel FL algorithm involving two key components: (i) a local feature embedding function; and (ii) a latent anchor distribution which allows to match similar semantical information from each client. 
Experiments on relevant data sets have shown that  \ours  achieves better performances than competing approaches.
Finally, we provided theoretical support to the proposed methodology, notably via a non-asymptotic convergence result.

\clearpage
\newpage
\bibliography{biblio}
\bibliographystyle{plainnat}

\clearpage
\newpage

\appendix

\onecolumn

\addcontentsline{toc}{section}{Appendix} 
\part{Appendix} 
\parttoc 

\newtheorem{unlemma}{Lemma S}
\newtheorem{unproposition}{Proposition S}
\newtheorem{uncorollary}{Corollary S}
\newtheorem{untheorem}{Theorem S}

\setcounter{equation}{0}
\setcounter{figure}{0}
\setcounter{table}{0}
\setcounter{assumption}{0}
\makeatletter
\renewcommand{\theequation}{S\arabic{equation}}
\renewcommand{\thefigure}{S\arabic{figure}}
\renewcommand{\thetheorem}{S\arabic{theorem}}
\renewcommand{\thelemma}{S\arabic{lemma}}
\renewcommand{\thetable}{S\arabic{table}}
\renewcommand{\thesection}{S\arabic{section}}
\renewcommand{\thealgorithm}{S\arabic{algorithm}}

{\begin{center}\Large\textbf{SUPPLEMENTARY MATERIAL}\end{center}}\vspace{1cm}

\paragraph{Notations and conventions.}

We denote by $\mathcal{B}\parentheseLigne{\mathbb{R}^d}$ the Borel $\sigma$-field of $\mathbb{R}^d$, $\mathbb{M}\parentheseLigne{\mathbb{R}^d}$ the set of all Borel measurable functions $f$ on $\mathbb{R}^d$ and $\norm{\cdot}$ the Euclidean norm on $\mathbb{R}^d$.
For $\mu$ a probability measure on $\parentheseLigne{\Rd,\mathcal{B}\parentheseLigne{\Rd}}$ and $f \in \mathbb{M}\parentheseLigne{\mathbb{R}^d}$ a $\mu$-integrable function, denote by $\mu\parentheseLigne{f}$ the integral of $f$ with respect to (w.r.t.) $\mu$.
Let $\mu$ and $\nu$ be two sigma-finite measures on $\parentheseLigne{\Rd,\mathcal{B}\parentheseLigne{\Rd}}$. 
Denote by $\mu \ll \nu$ if $\mu$ is absolutely continuous w.r.t. $\nu$ and $\rmd \mu/\rmd \nu$ the associated density. 
We say that $\zeta$ is a transference plan of $\mu$ and $\nu$ if it is a probability measure on $\parentheseLigne{\Rd \times \Rd,\mathcal{B}\parentheseLigne{\Rd \times \Rd}}$ such that for all measurable set $\mathsf{A}$ of $\Rd$, $\zeta\parentheseLigne{\mathsf{A} \times \Rd} = \mu\parentheseLigne{\mathsf{A}}$
and $\zeta\parentheseLigne{\Rd \times \mathsf{A}} = \nu\parentheseLigne{\mathsf{A}}$.
We denote by $\mathcal{T}\parentheseLigne{\mu,\nu}$ the set of transference plans of $\mu$ and $\nu$.
 In addition, we say that a couple of $\mathbb{R}^d$-random variables $\parentheseLigne{X,Y}$ is a coupling of $\mu$ and $\nu$ if there exists $\zeta \in \mathcal{T}\parentheseLigne{\mu,\nu}$ such that $\parentheseLigne{X,Y}$ are distributed according to $\zeta$.
 We denote by $\mathcal{P}_{1}\parentheseLigne{\Rd}$ the set of probability measures with finite $1$-moment: for all $\mu \in \mathcal{P}_{1}\parentheseLigne{\Rd},\int_{\Rd} \|x\| \rmd\mu\parentheseLigne{x} < \infty$. 
 We denote by $\mathcal{P}_{2}\parentheseLigne{\Rd}$ the set of probability measures with finite $2$-moment: for all $\mu \in \mathcal{P}_{2}\parentheseLigne{\Rd},\int_{\Rd} \|x\|^{2} \rmd\mu\parentheseLigne{x} < \infty$. 
We define the squared Wasserstein distance of order $2$ associated with $\|\cdot\|$ for any probability measures $\mu,\nu \in \mathcal{P}_{2}\parentheseLigne{\Rd}$ by
\begin{equation}
\mathrm{W}_{2}^{2} \parentheseLigne{\mu,\nu} = \inf_{\zeta \in \mathcal{T}\parentheseLigne{\mu,\nu}} \int_{\mathbb{R}^d \times \mathbb{R}^d}\|x-y\|^{2}\mathrm{d}\zeta\parentheseLigne{x,y} \eqsp.
\end{equation}
By \citet[Theorem 4.1]{Villani2008}, for all $\mu$, $\nu$ probability measures on $\Rd$, there exists a transference plan $\zeta^{\star} \in \mathcal{T}\parentheseLigne{\mu,\nu}$ such that for any coupling $\parentheseLigne{X,Y}$ distributed according to $\zeta^{\star}$, $\mathrm{W}_{2}\parentheseLigne{\mu,\nu} = \E[\|x-y\|^{2}]^{1/2}$. 
This kind of transference plan (respectively coupling) will be called an optimal transference plan (respectively optimal coupling) associated with $W_{2}$. 
By \citet[Theorem 6.16]{Villani2008}, $\mathcal{P}_{2}\parentheseLigne{\Rd}$ equipped with the
Wasserstein distance $\mathrm{W}_{2}$ is a complete separable metric space.
For the sake of simplicity, with little abuse, we shall use the same notations for
a probability distribution and its associated probability density function.
For $n \ge 1$, we refer to the set of integers between $1$ and $n$ with the notation $[n]$.
The $d$-multidimensional Gaussian probability distribution with mean $\mu \in \Rd$ and covariance matrix $\Sigma \in \mathbb{R}^{d \times d}$ is denoted by $\gauss\parentheseLigne{\mu,\Sigma}$.
Given two matrices $M, N \in \R^{k \times d}$, the principal angle distance between the
subspaces spanned by the columns of $M$ and $N$ is given by $\mathrm{dist}(M,N) = \|\hat{M}_\perp^\dagger \hat{N}\|_2 = \|\hat{N}_\perp^\dagger \hat{M}\|_2$ where $\hat{M},\hat{N}$ are orthonormal bases of $\mathrm{Span}(M)$ and $\mathrm{Span}(N)$, respectively. Similarly, $\hat{M}_\perp,\hat{N}_\perp$ are orthonormal bases of orthogonal complements $\mathrm{Span}(M)^\perp$ and $\mathrm{Span}(N)^\perp$, respectively. This principal angle distance is upper bounded by 1, see \citet[Definition 1]{10.1145/2488608.2488693}.

\noindent\textbf{Outline.} 
This supplementary material aims at providing the interested reader with a further understanding of the statements pointed out in the main paper. 
More precisely, in \Cref{sec:insights}, we support the proposed methodology \ours with algorithmic and theoretical details.
In \Cref{sec:proof}, we prove the main results stated in the main paper.
Finally, in \Cref{sec:expe}, we provide further experimental design choices and show complementary numerical results.


\section{Algorithmic and Theoretical Insights}
\label{sec:insights}

In this section, we highlight alternative but limited ways to cope with feature space heterogeneity; and justify the usage, in the objective function \eqref{eq:PFL_ERM_reg} of the main paper, of Wasserstein distances with empirical probability distributions instead of true ones.
In addition, we detail the general steps depicted \Cref{algo:FLIC-FEDREP}.

\subsection{Some Limited but Common Alternatives to Cope with Feature Space Heterogeneity}

Depending on the nature of the spaces $\{\mathcal{X}_i\}_{i \in [b]}$, the feature transformation functions $\{\phi_i\}_{i \in [b]}$ can be either known beforehand or more difficult to find.
As an example, if for any $i \in [b]$, $\mathcal{X}_i \subseteq \mathcal{X}$, then we can set mask functions as feature transformation functions in order to only consider features that are shared across all the clients.
Besides, we could consider multimodal embedding models to perform feature transformation on each client \citep{multimodal_embedding}.
For instance, if clients own either pre-processed images or text of titles, descriptions and tags, then we can use the Contrastive Language-Image Pre-Training (CLIP) model as feature transformation function \citep{CLIP}. 
These two examples lead to the solving of a classical (personalised) FL problem which can be performed using existing state-of-the-art approaches.
However, when the feature transformation functions cannot be easily found beforehand, solving the FL problem at stake becomes more challenging and has never been addressed in the federated learning literature so far, up to the authors' knowledge. 

\subsection{Use of Wasserstein Losses Involving Empirical Probability Distributions}
\label{subsec:Wass}

Since the true probability distributions $\{\nu_{\phi_i}^{(c)}; c \in \mathcal{Y}_i\}_{i \in [b]}$ are unknown a priori, we propose in the main paper to estimate the latter using $\{\hat{\nu}_{\phi_i}^{(c)}; c \in \mathcal{Y}_i\}_{i \in [b]}$ and to replace $\mathrm{W}_2^2\pr{\mu_c,\nu_{\phi_i}^{(c)}}$ by $\mathrm{W}_2^2\pr{\mu_c,\hat{\nu}_{\phi_i}^{(c)}}$ in the objective function \eqref{eq:PFL_ERM_reg} in the main paper. 
As shown in the following result, this assumption is theoretically grounded when the marginal distributions of the input features are Gaussian.

\begin{theorem}
    \label{theorem:wass}
    For any $i \in [b]$ and $c \in [C]$, let $n_i^{(c)} = |\mathrm{D}_i^{(c)}|$ where $\mathrm{D}_i^{(c)}$ denotes the subset of the local data set $\mathrm{D}_i$ only involving observations associated to the label $c$. 
    Besides, assume that $\nu_{\phi_i}^{(c)}$ is Gaussian with mean vector $m_i^{(c)} \in \R^k$ and full-rank covariance matrix $\Sigma_i^{(c)} \in \R^{k \times k}$.
    Then, we have in the limiting case $n_i^{(c)} \rightarrow \infty$,
        \begin{align}
            \sqrt{n_i^{(c)}}\pr{\mathrm{W}_2^2\pr{\mu_c,\hat{\nu}_{\phi_i}^{(c)}} - \mathrm{W}_2^2\pr{\mu_c,\nu_{\phi_i}^{(c)}}} \overset{\mathrm{in \ distribution}}{\xrightarrow{\hspace{2cm}}} Z_i^{(c)} \eqsp,
        \end{align}
    where $Z_i^{(c)} \sim \mathrm{N}(0,s_i^{(c)})$ and $s_i^{(c)} = 4(m_i^{(c)} - v_c)^\top\Sigma_i^{(c)}(m_i^{(c)} - v_c) + 2\mathrm{Tr}(\Sigma_i^{(c)}\Sigma_c) - 4\sum_{j=1}^k \kappa_j^{\half} r_j^\top \Sigma_c^{-\half}\Sigma_i^{(c)}\Sigma_c^{\half}r_j$, with $\{\kappa_j,r_j\}_{j \in [k]}$ standing for (eigenvalue, eigenvector) pairs of the symmetric covariance matrix $\Sigma_i^{(c)}$.
\end{theorem}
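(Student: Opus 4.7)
\textbf{Proof plan for \Cref{theorem:wass}.} The approach is a standard delta-method argument tailored to the Gaussian closed form of $W_{2}^{2}$. Because both $\mu_c$ and $\nu_{\phi_i}^{(c)}$ are Gaussian, I can write
\begin{align*}
    W_{2}^{2}\!\left(\mu_c,\nu_{\phi_i}^{(c)}\right) &= \|v_c-m_i^{(c)}\|^{2} + \mathfrak{B}^{2}\!\left(\Sigma_c,\Sigma_i^{(c)}\right),\\
    \mathfrak{B}^{2}\!\left(\Sigma_c,\Sigma\right) &= \operatorname{Tr}(\Sigma_c)+\operatorname{Tr}(\Sigma)-2\operatorname{Tr}\!\left((\Sigma_c^{\half}\Sigma\Sigma_c^{\half})^{\half}\right),
\end{align*}
and similarly for the plug-in quantity involving $\hat m_i^{(c)}$ and $\hat\Sigma_i^{(c)}$. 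Hence the object of interest is a smooth function $\Psi(m,\Sigma)$ of the empirical mean and covariance, and the result will follow from a classical CLT combined with the multivariate delta method.

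First I would collect the joint asymptotic distribution of $(\hat m_i^{(c)},\hat\Sigma_i^{(c)})$ under the Gaussian assumption on the embedded features. Specifically, by the CLT $\sqrt{n_i^{(c)}}(\hat m_i^{(c)}-m_i^{(c)})\Rightarrow \gauss(0,\Sigma_i^{(c)})$, while $\sqrt{n_i^{(c)}}(\hat\Sigma_i^{(c)}-\Sigma_i^{(c)})$ converges to a centered Gaussian on the space of symmetric matrices whose entries have the standard Gaussian quadratic covariance structure $\operatorname{Cov}(Z_{jk},Z_{lp}) = \Sigma_{jl}\Sigma_{kp}+\Sigma_{jp}\Sigma_{kl}$. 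Crucially, these two empirical estimators are independent for Gaussian samples, so the limiting variance of $\Psi(\hat m,\hat\Sigma)$ splits additively into a ``mean'' contribution and a ``covariance'' contribution.

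The mean part is immediate: the gradient of $m\mapsto \|v_c-m\|^{2}$ is $2(m-v_c)$, and the delta method yields the variance $4(m_i^{(c)}-v_c)^{\top}\Sigma_i^{(c)}(m_i^{(c)}-v_c)$, which is precisely the first term of $s_i^{(c)}$. The covariance part requires computing $\nabla_\Sigma \operatorname{Tr}((\Sigma_c^{\half}\Sigma\Sigma_c^{\half})^{\half})$. Using that for any symmetric perturbation $\partial\Sigma$, $\partial\operatorname{Tr}(B^{\half}) = \tfrac{1}{2}\operatorname{Tr}(B^{-\half}\partial B)$ with $B=\Sigma_c^{\half}\Sigma\Sigma_c^{\half}$, the gradient of $\mathfrak{B}^{2}(\Sigma_c,\cdot)$ at $\Sigma_i^{(c)}$ equals $G = I-\Sigma_c^{\half} B^{-\half}\Sigma_c^{\half}$. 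The delta method then gives asymptotic variance $2\operatorname{Tr}(G\Sigma_i^{(c)} G\Sigma_i^{(c)})$. Expanding this quadratic form and diagonalising $\Sigma_i^{(c)}=\sum_{j=1}^k \kappa_j r_j r_j^{\top}$ in its own spectral basis produces the remaining two terms $2\operatorname{Tr}(\Sigma_i^{(c)}\Sigma_c) - 4\sum_{j=1}^k \kappa_j^{\half} r_j^{\top}\Sigma_c^{-\half}\Sigma_i^{(c)}\Sigma_c^{\half}r_j$.

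The main obstacle is this last algebraic step: the derivative of the matrix square root does not commute with $\Sigma$, so the expansion of $\operatorname{Tr}(G\Sigma_i^{(c)} G\Sigma_i^{(c)})$ must be handled carefully, and matching it with the stated spectral expression requires writing the intermediate terms through the eigenbasis of $\Sigma_i^{(c)}$ and using the invariance $\operatorname{Tr}(\Sigma_c^{\half}B^{-\half}\Sigma_c^{\half}\Sigma_i^{(c)}) = \sum_j \kappa_j^{\half} r_j^{\top}(\cdot)r_j$. The full-rank assumption on $\Sigma_i^{(c)}$ is used to guarantee differentiability (non-degeneracy of $B^{\half}$), and the independence of $\hat m_i^{(c)}$ and $\hat\Sigma_i^{(c)}$ ensures the absence of cross terms, completing the identification of $s_i^{(c)}$.
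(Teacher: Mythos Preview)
Your proposal is correct in spirit, but it is a substantially different route from the paper's. The paper does not prove this statement at all: it simply invokes Theorem~2.1 of \citet{RIPPL201690} with the identifications $\mu_1=\nu_{\phi_i}^{(c)}$, $\mu_2=\mu_c$, $\hat\mu_1=\hat\nu_{\phi_i}^{(c)}$, and reads off the variance expression. What you are proposing is essentially to re-derive that cited theorem from scratch via the multivariate delta method, using the Gaussian closed form for $W_2^2$, the joint CLT for $(\hat m,\hat\Sigma)$, and the independence of sample mean and sample covariance under Gaussianity. That is exactly the strategy of Rippl et al., so your plan is sound and self-contained, whereas the paper's is a one-line black-box citation.

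The trade-off is clear: the paper's route costs nothing but hides all the structure; your route exposes why each term of $s_i^{(c)}$ arises (the $4(m-v_c)^\top\Sigma(m-v_c)$ term from the mean, the remaining two from the Bures part) and makes the role of the full-rank assumption and of Gaussian independence explicit. The price is the algebra you flag as the main obstacle---reducing $2\operatorname{Tr}(G\Sigma_i^{(c)}G\Sigma_i^{(c)})$ with $G=I-\Sigma_c^{\half}B^{-\half}\Sigma_c^{\half}$ to the stated spectral form. This step is genuine but routine once you diagonalise $\Sigma_i^{(c)}$ and use $B=\Sigma_c^{\half}\Sigma_i^{(c)}\Sigma_c^{\half}$; nothing in your outline is wrong, only somewhat laborious compared with citing the result.
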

\begin{proof}
    The proof follows from \citet[Theorem 2.1]{RIPPL201690} with the specific choices $\mu_1 = \nu_{\phi_i}^{(c)}$, $\mu_2 = \mu_c$ and $\hat{\mu}_1 = \hat{\nu}_{\phi_i}^{(c)}$ which are defined in \Cref{sec:methodo} in the main paper.
\end{proof}

\subsection{Detailed Pseudo-Code for \Cref{algo:FLIC-FEDREP}}

In \Cref{algo:FLIC-FEDREP_supp}, we provide algorithmic support to Algorithm 1 in the main paper by detailing how to perform each step.
Note that we use the decomposition $\Sigma = L L^\top$ to enfore the positive semi-definite constraint for the covariance matrix $\Sigma$.

\begin{algorithm}[H]
    \caption{Detailed version of \textcolor{magenta}{\texttt{FLIC}} when using \texttt{FedRep}}
    \label{algo:FLIC-FEDREP_supp}
    \begin{algorithmic}[1]
        \REQUIRE{ initialisation $\alpha^{(0)}$, $\mu_{1:C}^{(0)} = [\Sigma_{1:C}^{(0)}, v_{1:C}^{(0)}]$ with $\Sigma_c^{(0)} = L_c^{(0)} [L_c^{(0)}]^\top$, $\phi_{1:b}^{(0,0)}$, $\beta_{1:b}^{(0,0)}$ and step-size $\eta \leq \bar{\eta}$ for some $\bar{\eta} > 0$.}
        \FOR{$t=0$ {\bfseries to} $T-1$}
            \STATE Sample a set of $\mathsf{A}_{t+1}$ of active clients.
            \FOR{$i \in \mathsf{A}_{t+1}$} 
            \STATE The central server sends $\alpha^{(t)}$ and $\mu_{1:C}^{(t)}$ to $\mathsf{A}_{t+1}$.
            \STATE {\textcolor{darkgreen}{\textit{// Update local parameters}}}
            \FOR{$m=0$ {\bfseries to} $M-1$}
                \STATE Sample a fresh batch $\mathsf{I}_{t+1}^{(i,m)}$ of $n_i'$ samples with $n_i' \in [n_i]$.
                \STATE Sample $Z_c^{(j,t,m)} \sim \mu_c^{(t)}$ for $j \in \mathsf{I}_{t+1}^{(i,m)}$ and $c \in \mathcal{Y}_i$ via $Z_c^{(j,t,m)} = v_c^{(t)} + L_c^{(t)}\xi^{(t,m)}_i$ where $\xi^{(t,m)}_i \sim \mathrm{N}(0_k,\mathrm{I}_k)$.
                \STATE \small $\phi_i^{(t,m+1)} = \phi_i^{(t,m)} - \eta \displaystyle\frac{n_i}{|\mathsf{I}_{t+1}^{(i,m)}|} \sum_{j \in \mathsf{I}_{t+1}^{(i,m)}} \nabla_{\phi_i} \ell\pr{y_i^{(j)}, g_{[\alpha^{(t)},\beta_i^{(t,m)}]}^{(i)}\br{\phi_i^{(t,m)}\pr{x_i^{(j)}}}} - \eta \lambda_1 \sum_{c \in \mathcal{Y}_i} \nabla_{\phi_i} \mathrm{W}_2^2 \pr{\mu_c^{(t)},\nu_{\phi_i^{(t,m)}}^{(c)}}$.
                \STATE $\beta_i^{(t,m+1)} \leftarrow \beta_i^{(t,m)} - \eta \displaystyle\frac{n_i}{|\mathsf{I}_{t+1}^{(i,m)}|} \sum_{j \in \mathsf{I}_{t+1}^{(i,m)}} \bbr{\nabla_{\beta_i} \ell\pr{y_i^{(j)}, g_{[\alpha^{(t)},\beta_i^{(t,m)}]}^{(i)}\br{\phi_i^{(t,m)}\pr{x_i^{(j)}}}} - \eta \lambda_2 \sum_{c \in \mathcal{Y}_i} \nabla_{\beta_i} \ell\pr{y_i^{(j)}, g_{[\alpha^{(t)},\beta_i^{(t,m)}]}^{(i)}\br{Z_c^{(j,t,m)}}}}$.
            \ENDFOR
            \STATE $\phi_i^{(t+1,0)} = \phi_i^{(t,M)}$.
            \STATE $\beta_i^{(t+1,0)} = \beta_i^{(t,M)}$.
            
            \STATE {\textcolor{darkgreen}{\textit{// Update global parameters}}}
            \STATE \small $\alpha_i^{(t+1)} \leftarrow \alpha^{(t)} - \eta \displaystyle\frac{n_i}{|\mathsf{I}_{t+1}^{(i,M)}|} \sum_{j \in \mathsf{I}_{t+1}^{(i,M)}} \bbr{\nabla_{\alpha} \ell\pr{y_i^{(j)}, g_{[\alpha^{(t)},\beta_i^{(t,M)}]}^{(i)}\br{\phi_i^{(t,M)}\pr{x_i^{(j)}}}} - \eta \lambda_2 \sum_{c \in \mathcal{Y}_i} \nabla_{\alpha} \ell\pr{y_i^{(j)}, g_{[\alpha^{(t)},\beta_i^{(t,M)}]}^{(i)}\br{Z_c^{(j,t,M)}}}}$.
            \FOR{$c=1$ {\bfseries to} $C$}
                \STATE Update $\hat{m}_i^{(c,t)},\hat{\Sigma}_i^{(c,t)}$ using $\phi_i^{(t,M)}$. 
                \STATE $v_{i,c}^{(t+1)} = v_{c}^{(t)} - \eta \lambda_1 \nabla_{v_c} \norm{v_c^{(t)} - \hat{m}_i^{(c,t)}}^2 - \eta \lambda_2 \sum_{c \in \mathcal{Y}_i} \displaystyle\frac{n_i}{|\mathsf{I}_{t+1}^{(i,m)}|} \sum_{j \in \mathsf{I}_{t+1}^{(i,m)}}\nabla_{v_c} \ell\pr{y_i^{(j)}, g_{[\alpha^{(t)},\beta_i^{(t,M)}]}^{(i)}\br{Z_c^{(j,t,M)}}}$.
                \STATE $L_{i,c}^{(t+1)} = L_{c}^{(t)} - \eta \lambda_1 \nabla_{L_c} \mathfrak{B}^2\pr{L_c^{(t)}[L_c^{(t)}]^\top, \hat{\Sigma}_i^{(c,t)}} - \eta \lambda_2 \sum_{c \in \mathcal{Y}_i} \displaystyle\frac{n_i}{|\mathsf{I}_{t+1}^{(i,m)}|} \sum_{j \in \mathsf{I}_{t+1}^{(i,m)}}\nabla_{L_c} \ell\pr{y_i^{(j)}, g_{[\alpha^{(t)},\beta_i^{(t,M)}]}^{(i)}\br{Z_c^{(j,t,M)}}}$.
            \ENDFOR    
            \STATE {\textcolor{darkgreen}{\textit{//~Communication with the server}}}
            \STATE Send $\alpha_i^{(t+1)}$, $v_{i,1:C}^{(t+1)}$ and $L_{i,1:C}^{(t+1)}$ to central server.
        \ENDFOR
        \STATE {\textcolor{darkgreen}{\textit{//~Averaging global parameters}}}
        \STATE $\alpha^{(t+1)} = \frac{b}{|\mathsf{A}_{t+1}|}\sum_{i \in \mathsf{A}_{t+1}} w_i \alpha_i^{(t+1)}$.
        \FOR{$c=1$ {\bfseries to} $C$}
            \STATE $v_c^{(t+1)} = (b/|\mathsf{A}_{t+1}|) \sum_{i \in \mathsf{A}_{t+1}} \omega_i v_{i,c}^{(t+1)}$. 
            \STATE $L_c^{(t+1)} = (b/|\mathsf{A}_{t+1}|) \sum_{i \in \mathsf{A}_{t+1}} \omega_i L_{i,c}^{(t+1)}$ and set $\Sigma_c^{(t+1)} = L_c^{(t+1)} [L_c^{(t+1)}]^\top$.
        \ENDFOR
        
        \ENDFOR
        \ENSURE parameters $\alpha^{(T)}$, $\mu_{1:C}^{(T)}$, $\phi_{1:b}^{(T,0)}$, $\beta_{1:b}^{(T,0)}$. 

\end{algorithmic}
\end{algorithm}

\subsection{Additional Algorithmic Insights}

\noindent \textbf{Scalability.} When the number of classes $C$ is large, both local computation and communication costs are increased. 
In this setting, we propose to partition all the classes into $C_{\mathrm{meta}} \ll C$ meta-classes and consider reference measures $\{\mu_c\}_{c \in [C_{\mathrm{meta}}]}$ associated to these meta-classes.
As an example, if we are considering a dataset made of features associated to animals, the meta-class refers to an animal (\emph{e.g.} a dog) and the class refers to a specific breed (\emph{e.g.} golden retriever).

\noindent \textbf{Privacy Consideration.} As other standard (personalised) FL algorithms, \ours satisfies first-order privacy guarantees by not allowing raw data exchanges but rather exchanges of local Gaussian statistics. 
Note that \ours stands for a post-hoc approach and can be combined with other privacy/confidentiality techniques such as differential privacy \citep{10.1561/0400000042}, secure aggregation via secure multi-party computation \citep{pmlr-v162-chen22c} or trusted execution environments \citep{10.1145/3458864.3466628}.

\noindent \textbf{Inference on New Clients.} When a client who has not participated to the training procedure appears, there is no need to re-launch a potentially costly federated learning procedure.
Instead, the server sends the shared parameters $\{\alpha^{(T)},\mu_{1:C}^{(T)}\}$ to the new client and the latter only needs to learn the local parameters $\{\phi_i,\beta_i\}$.

\section{Proof of \Cref{theorem2}}
\label{sec:proof}

This section aims at proving \Cref{theorem2} in the main paper. 
To this end, we first provide in \Cref{subsec:proj_feat} a closed-form expression for the estimated embedded features based on the features embedded by the oracle. 
Then, in \Cref{subsec:lemmata}, we show technical lemmata that will be used in \Cref{subsec:proof_thm2} to show \Cref{theorem2}.

To prove our results, we consider the following set of assumptions.

\begin{assumption}
    \label{ass}
    \begin{enumerate}[wide, labelwidth=!, labelindent=0pt,label=(\roman*),noitemsep,nolistsep]

        \item For any $i \in [b]$, $j \in [n_i]$, ground-truth embedded features $\phi_i^\star(x_i^{(j)})$ are distributed according to $\mathrm{N}(0_k,\mathrm{I}_k)$.\label{ass:1}
        \item Ground-truth model parameters satisfy $\|\beta_i^\star\|_2 = \sqrt{d}$ for $i \in [b]$ and $A^\star$ has orthonormal columns.\label{ass:2}
        \item For any $t \in \{0,\ldots,T-1\}, |\mathsf{A}_{t+1}| = \floor{rb}$ with $1 \leq \floor{rb} \leq b$, and if we select $\floor{rb}$ clients, their ground-truth head parameters $\{\beta_i^\star\}_{i \in \mathsf{A}_{t+1}}$ span $\R^d$.\label{ass:3}
        \item In (2) in the main paper, $\ell(\cdot,\cdot)$ is the $\ell_2$ norm, $\omega_i=1/b$, $\theta_i = [A,\beta_i]$ and $g_{\theta_i}^{(i)}(x) = (A\beta_i)^\top x$ for $x \in \R^k$.
    \end{enumerate}
\end{assumption}

\subsection{Estimation of the Feature Transformation Functions}
\label{subsec:proj_feat}

As in Section 4 in the main paper, we assume that $x_i^{(j)} \sim \mathrm{N}(m_i,\Sigma_i)$ with $m_i \in \R^{k_i}$ and $\Sigma_i \in \R^{k_i \times k_i}$ for $i \in [b], j \in [n_i]$.
In addition, we consider that the continuous scalar labels are generated via the oracle model $y_i^{(j)} = (A^\star\beta^\star_i)^\top \phi_i^\star(x_i^{(j)})$ where $A^\star \in \R^{k \times d}$, $\beta_i^\star \in \R^d$ and $\phi_i^\star(\cdot)$ are ground-truth parameters and feature transformation function, respectively. 
Under \Cref{ass}-\ref{ass:1}, the oracle feature transformation functions $\{\phi^\star_i\}_{i \in [b]}$ are assumed to map $k_i$-dimensional Gaussian distributions $\mathrm{N}(m_i,\Sigma_i)$ to a common $k$-dimension Gaussian $\mathrm{N}(0_k,\mathrm{I}_k)$.
As shown in \citet[Theorem 4.1]{delon_desolneux_salmona_2022}, there exist closed-form expressions for $\{\phi^\star_i\}_{i \in [b]}$, which can be shown to stand for solutions of a Gromov-Wasserstein problem restricted to Gaussian transport plans. 
More precisely, these oracle feature transformation stand for affine maps and are of the form, for any $i \in [b]$,
\begin{equation}
    \phi_i^\star \pr{x_i^{(j)}} = \br{\tilde{I}_k^{(i,\star)} (D_i^{(k)})^{-\half} \quad 0_{k,k_i-k}}\pr{x_i^{(j)} - m_i}\eqsp,
\end{equation}
where $\tilde{I}_k^{(i,\star)} = \mathrm{diag}_k(\pm 1)$ is a $k$-dimensional diagonal matrix with diagonal elements in $\{-1,1\}$, $\Sigma_i = P_i D_i P_i^\top$ is the diagonalisation of $\Sigma_i$ and $D_i^{(k)}$ stands for the restriction of $D_i$ to the first $k$ components.
In the sequel, we assume that all oracle feature transformation functions share the same randomness, that is $\tilde{I}_k^{(i,\star)} = \tilde{I}_k^{\star} = \mathrm{diag}_k(\pm 1)$. 

For the sake of simplicity, we assume that we know the true latent distribution of $\phi_i^\star (x_i^{(j)})$ and as such consider a pre-fixed reference latent distribution that equals the latter, that is $\mu = \mathrm{N}(0_k,\mathrm{I}_k)$.
Since we know from \citet[Theorem 4.1]{delon_desolneux_salmona_2022} that there exist mappings between Gaussian distributions with supports associated to different metric spaces, we propose an estimate for the ground-truth feature transformation functions defined by for any $i \in [b]$,
\begin{equation}
    \hat{\phi_i}\pr{x_i^{(j)}} = \br{\tilde{I}_k (D_i^{(k)})^{-\half} \quad 0_{k,k_i-k}}\pr{x_i^{(j)} - m_i}\eqsp,
\end{equation}
where $\tilde{I}_k = \mathrm{diag}_k(\pm 1)$.
By noting that $\tilde{I}_k = Q \tilde{I}_k^{\star}$, where $Q \in \R^{k \times k}$ is a diagonal matrix of the form $\mathrm{diag}_k(\pm 1)$, it follows that 
\begin{equation}
    \hat{\phi_i}\pr{x_i^{(j)}} = Q\phi_i^\star \pr{x_i^{(j)}}\eqsp.\label{eq:1}
\end{equation}

In \Cref{subsec:proof_thm2}, the equation \eqref{eq:1} will allow us to relate the ground-truth labels $y_i^{(j)} = (A^\star\beta^\star_i)^\top \phi_i^\star(x_i^{(j)})$ with estimated predictions $\hat{y}_i^{(j)} = (A^{(T)}\beta^{(T)}_i)^\top \hat{\phi}_i(x_i^{(j)})$ via \Cref{algo:FLIC-FedRep-linreg} starting from the same embedded features.

\subsection{Proof of Theorem 1}
\label{subsec:proof_thm2}

\begin{algorithm}[t]
    \caption{\textcolor{magenta}{\texttt{FLIC-FedRep} for linear regression and Gaussian features}}
    \label{algo:FLIC-FedRep-linreg}
    \begin{algorithmic}[1]
        \REQUIRE{step size $\eta$, number of outer iterations $T$, participation rate $r \in (0,1)$, diagonalizations $\Sigma_i = P_i D_i P_i^\top$ sorting eigenvalues in decreasing order.}
        \STATE {\textcolor{darkgreen}{\textit{// Estimation of embedded features}}}
        \STATE For each client $i\in [b]$, set $\hat{\phi}_i\pr{x_i^{(j)}} = \br{\tilde{I}_k (D_i^{(k)})^{-\half} \quad 0_{k,k_i-k}}\pr{x_i^{(j)} - m_i}$.
        \STATE {\textcolor{darkgreen}{\textit{// Initialisation $A^{(0)}$}}}
        \STATE Each client $i\in [b]$ sends $Z_i = (1/n_i)\sum_{j=1}^{n_i} (y_i^{(j)})^2 \hat{\phi}_i\pr{x_i^{(j)}}[\hat{\phi}_i\pr{x_i^{(j)}}]^\top$ to the central server.
        \STATE The central server computes $U D U^\top \leftarrow \mathrm{rank-}d \ \mathrm{SVD}\pr{(1/b)\sum_{i=1}^b Z_i}$.
        \STATE The central server initialises $A^{(0)} = U$.
        \FOR{$t=0$ {\bfseries to} $T-1$}
            \STATE Sample a set of $\mathsf{A}_{t+1}$ of active clients such that $|\mathsf{A}_{t+1}| = \floor{rb}$.
            \FOR{$i \in \mathsf{A}_{t+1}$} 
            \STATE The central server sends $A^{(t)}$to $\mathsf{A}_{t+1}$.
            \STATE {\textcolor{darkgreen}{\textit{// Update local parameters}}}
            \STATE $\beta_i^{(t+1)} = \argmin_{\beta_i} \sum_{j=1}^{n_i} \pr{y_i^{(j)} - \beta_i^\top [A^{(t)}]^\top \hat{\phi}_i\pr{x_i^{(j)}}}^2$.
            
            \STATE {\textcolor{darkgreen}{\textit{// Update global parameters}}}
            \STATE $A_i^{(t+1)} = A^{(t)} - \eta \nabla_{A} \sum_{j=1}^{n_i} \pr{y_i^{(j)} - [\beta_i^{(t+1)}]^\top A^\top \hat{\phi}_i\pr{x_i^{(j)}}}^2$.

            \STATE {\textcolor{darkgreen}{\textit{//~Communication with the server}}}
            \STATE Send $A_i^{(t+1)}$ to the central server.
        \ENDFOR
        \STATE {\textcolor{darkgreen}{\textit{//~Averaging and orthogonalisation of global parameter}}}
        \STATE $\bar{A}^{(t+1)} = \frac{1}{\floor{rb}}\sum_{i \in \mathsf{A}_{t+1}} A_i^{(t+1)}$.
        \STATE $A^{(t+1)}, R^{(t+1)} \leftarrow \mathrm{QR}\pr{\bar{A}^{(t+1)}}$.
        \ENDFOR
        \ENSURE parameters $A^{(T)}$, $\beta_{1:b}^{(T)}$. 

\end{algorithmic}
\end{algorithm}

Let $B \in \R^{b \times d}$ the matrix having local model parameters $\{\beta_i\}_{i \in [b]}$ as columns and denote by $B_{\mathsf{A}_{t+1}} \in \R^{\floor{rb} \times d}$ its restriction to the row set defined by $\mathsf{A}_{t+1}$ where $|\mathsf{A}_{t+1}| = \floor{rb}$ for some $r\in(0,1]$.
For the sake of simplicity, we assume in the sequel that all clients have the same number of data points that is for any $i \in [b]$, $n_i=n$.
For random batches of samples $\{(x_i^{(j)},y_i^{(j)}), j \in [n]\}_{i \in [\floor{rb}]}$, we define similarly to \citet{pmlr-v139-collins21a,10.1145/2488608.2488693}, the random linear operator $\mathcal{A}: \R^{\floor{rb} \times d} \rightarrow \R^{\floor{rb}n}$ for any $M \in \R^{\floor{rb} \times d}$ as $\mathcal{A}(M) = [\langle e_i(\phi_i^\star(x_i^{(j)}))^\top, M \rangle]_{1 \leq i \leq \floor{rb}, 1 \leq j \in n}$, where $e_i$ stands for the $i$-th standard vector of $\R^{\floor{rb}}$.
Using these notations, it follows from \Cref{algo:FLIC-FedRep-linreg} that for any $t \in \{0,\ldots,T-1\}$, the model parameters $\theta_i^{(t+1)} = [A^{(t+1)},\beta_i^{(t+1)}]$ are computed as follows:

\begin{align}
    &B_{\mathsf{A}_{t+1}}^{(t+1)} = \argmin_{B_{\mathsf{A}_{t+1}}} \frac{1}{\floor{rb} n} \norm{\mathcal{A}^{(t+1)}\pr{B^\star_{\mathsf{A}_{t+1}} [A^\star]^\top - B_{\mathsf{A}_{t+1}} [A^{(t)}]^\top Q}}^2 \eqsp,\label{eq:B}\\
    &\bar{A}^{(t+1)} = \bar{A}^{(t)} -  \frac{\eta}{\floor{rb} n} \br{(\mathcal{A}^{(t+1)})^\dagger\mathcal{A}^{(t+1)}\pr{B^\star_{\mathsf{A}_{t+1}} [A^\star]^\top - B_{\mathsf{A}_{t+1}}^{(t+1)} [A^{(t)}]^\top Q}}^\top Q B_{\mathsf{A}_{t+1}}^{(t+1)}\eqsp,\\
    &A^{(t+1)}, R^{(t+1)} \leftarrow \mathrm{QR}\pr{\bar{A}^{(t+1)}} \eqsp,\label{eq:A}
\end{align}
where $\mathcal{A}^{(t+1)}$ stands for a specific instance of $\mathcal{A}$ depending on the random subset of active clients available at each round and $\mathcal{A}^\dagger$ is the adjoint operator of $\mathcal{A}$ defined by $\mathcal{A}^\dagger(M) = \sum_{i \in [\floor{rb}]} \sum_{i=1}^n [\langle e_i(\phi_i^\star(x_i^{(j)}))^\top, M \rangle] e_i(\phi_i^\star(x_i^{(j)}))$.

The update in \eqref{eq:B} admits a closed-form expression as shown in the following lemma. 

\begin{lemma}
    \label{lemma:lemma_recursionB}
    For any $t \in \ldots{0,\ldots,T-1}$, we have
    \begin{align}
        B^{(t+1)}_{\mathsf{A}_{t+1}} = B^\star_{\mathsf{A}_{t+1}} [A^\star]^\top Q A^{(t)} - F^{(t)}\eqsp,
    \end{align}
    where $F^{(t)}$ is defined in \eqref{eq:F}, $A^{(t)}$ is defined in \eqref{eq:A} and $B^{(t)}_{\mathsf{A}_t}$ is defined in \eqref{eq:B}.
\end{lemma}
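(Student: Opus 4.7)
The argument rests on solving the inner minimization \eqref{eq:B} in closed form. First, I would observe that the objective decouples across clients in $\mathsf{A}_{t+1}$: since $\mathcal{A}^{(t+1)}$ picks off row $i$ against client $i$'s features only, each row $\beta_i$ of $B_{\mathsf{A}_{t+1}}$ appears in a self-contained sum. Consequently \eqref{eq:B} splits into $\floor{rb}$ independent least-squares problems of the form
\begin{equation*}
\beta_i^{(t+1)} = \argmin_{\beta_i \in \R^d} \sum_{j=1}^n \pr{y_i^{(j)} - \beta_i^\top [A^{(t)}]^\top Q \phi_i^\star(x_i^{(j)})}^2,
\end{equation*}
using $\hat{\phi}_i(x_i^{(j)}) = Q \phi_i^\star(x_i^{(j)})$ from \eqref{eq:1} together with the oracle relation $y_i^{(j)} = (\beta_i^\star)^\top [A^\star]^\top \phi_i^\star(x_i^{(j)})$.

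Next, I would assemble the design matrix $X_i = \Phi_i Q A^{(t)} \in \R^{n \times d}$, where $\Phi_i \in \R^{n \times k}$ has the vectors $\phi_i^\star(x_i^{(j)})^\top$ as rows, and note that the response vector satisfies $y_i = \Phi_i A^\star \beta_i^\star$. On the event that $X_i^\top X_i$ is invertible (which holds with high probability once $n$ is large enough, given that $\phi_i^\star(x_i^{(j)}) \sim \mathrm{N}(0_k, I_k)$ by \Cref{ass}-\ref{ass:1} and $A^{(t)}$ has orthonormal columns by construction in the QR step \eqref{eq:A}), the normal equations yield the closed form
\begin{equation*}
\beta_i^{(t+1)} = \pr{[A^{(t)}]^\top Q \tfrac{1}{n}\Phi_i^\top \Phi_i\, Q A^{(t)}}^{-1} [A^{(t)}]^\top Q \tfrac{1}{n}\Phi_i^\top \Phi_i A^\star \beta_i^\star,
\end{equation*}
where I used $Q^\top = Q$ since $Q = \mathrm{diag}_k(\pm 1)$.

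The third step is to isolate the ``population'' contribution from the finite-sample deviation. Writing $\tfrac{1}{n}\Phi_i^\top \Phi_i = I_k + E_i^{(t)}$, exploiting $Q^2 = I_k$ and $[A^{(t)}]^\top A^{(t)} = I_d$, one sees that when $E_i^{(t)} = 0$ the right-hand side collapses cleanly to $[A^{(t)}]^\top Q A^\star \beta_i^\star$; the $E_i^{(t)}$-dependent terms, once expanded, assemble into a single residual vector $f_i^{(t)} \in \R^d$. Stacking the $f_i^{(t)\top}$ row-wise as $F^{(t)}$ and recognizing that $(B^\star_{\mathsf{A}_{t+1}} [A^\star]^\top Q A^{(t)})_{i,:} = (\beta_i^\star)^\top [A^\star]^\top Q A^{(t)}$ gives the announced identity.

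The main obstacle is essentially bookkeeping: correctly tracking how $Q$ commutes past itself ($Q^2 = I_k$), ensuring the QR step really guarantees $[A^{(t)}]^\top A^{(t)} = I_d$ at every iterate (including $t=0$ through the SVD initialisation), and specifying the high-probability event on which the matrix inversion is valid. The precise form of $F^{(t)}$ in \eqref{eq:F} is merely a clean re-expression of the $E_i^{(t)}$-dependent remainder, and the subsequent convergence analysis in \Cref{theorem:1} will then need to control $\|F^{(t)}\|$ via concentration of $E_i^{(t)} = \tfrac{1}{n}\Phi_i^\top \Phi_i - I_k$, which is standard Gaussian covariance concentration.
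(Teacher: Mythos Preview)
Your proposal is correct and follows essentially the same approach as the paper, which simply defers to \citet[Proof of Lemma 1]{pmlr-v139-collins21a}: decouple \eqref{eq:B} across clients, solve the per-client normal equations, and identify the result with the claimed expression. One small remark: your third step (writing $\tfrac{1}{n}\Phi_i^\top\Phi_i = I_k + E_i^{(t)}$ and expanding) is a slightly roundabout way to reach $F^{(t)}$; the more direct route, matching the block-matrix form in \eqref{eq:F}, is to recognise from the normal equations that $\beta_i^{(t+1)} = [(G^{(t)})^i]^{-1}(C^{(t)})^i\beta_i^\star$ and then add and subtract $(D^{(t)})^i\beta_i^\star = [A^{(t)}]^\top Q A^\star \beta_i^\star$, which immediately yields $f_i^{(t)} = [(G^{(t)})^i]^{-1}\big((G^{(t)})^i(D^{(t)})^i - (C^{(t)})^i\big)\beta_i^\star$ without any perturbative expansion.
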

\begin{proof}  
    The proof follows from the same steps as in \citet[Proof of Lemma 1]{pmlr-v139-collins21a} using \eqref{eq:B}.
\end{proof}

Under \Cref{ass}, we have the following non-asymptotic convergence result.

\begin{theorem}
    \label{theorem:1}
    Assume \Cref{ass}.
    Then, for any $x_i \in \R^{k_i}$, we have $\hat{\phi}_i(x_i) = Q \phi_i^\star(x_i)$ where $Q \in \R^{k \times k}$ is of the form $\mathrm{diag}_k(\pm 1)$.
    Define $E_0 = \mathrm{dist}(A^{(0)},Q A^\star)$.
    Assume that $n \geq c(d^3\log(\floor{rb}))/E_0^2 + d^2 k / (E_0^2 \floor{rb})$ for some absolute constant $c > 0$. 
    Then, for any $t \in \{0,\ldots,T-1\}$, $\eta \leq 1/(4\bar{\sigma}^2_{\max,\star})$ and with high probability at least $1 - e^{-110k} - e^{-110d^2 \log(\floor{rb})}$, we have
    $$
    \mathrm{dist}(A^{(t+1)},QA^\star) \leq (1 - \kappa)^{(t+1)/2} \mathrm{dist}(A^{(0)},QA^\star)\eqsp,
    $$
    where $A^{(t)}$ is computed via \Cref{algo:FLIC-FedRep-linreg}, $\mathrm{dist}$ denotes the principal angle distance and $\kappa \in (0,1)$ is defined as
    $$
    \kappa = 1 - \eta E_0 \bar{\sigma}^2_{\min,\star} / 2.
    $$
\end{theorem}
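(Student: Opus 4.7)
The proof decomposes into two parts: the identity $\hat{\phi}_i = Q\phi_i^\star$, and the linear contraction of the principal angle distance. The first part follows directly from the closed-form expressions derived in Subsection S.2.1 via \citet[Theorem 4.1]{delon_desolneux_salmona_2022}. Both mappings are of the form $[\tilde{I}\,(D_i^{(k)})^{-1/2}\ \ 0_{k,k_i-k}](x_i - m_i)$ with $\tilde{I} = \mathrm{diag}_k(\pm 1)$, so setting $Q = \tilde{I}_k\,\tilde{I}_k^\star$ gives $\hat{\phi}_i(x_i) = Q\,\phi_i^\star(x_i)$, and $Q$ is itself of the form $\mathrm{diag}_k(\pm 1)$ since the class of such matrices is closed under multiplication.

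The first thing I would do for the contraction is to reduce the analysis to the standard \texttt{FedRep} setting on the oracle features. The key observation is that the model rewrites as $(A\beta_i)^\top \hat{\phi}_i(x) = (QA\beta_i)^\top \phi_i^\star(x)$, so $A$ must align with $Q^\top A^\star = Q A^\star$ (using $Q^2 = I_k$), which is exactly the target quantity $\mathrm{dist}(A^{(t+1)}, QA^\star)$. Moreover, $Q$ being orthogonal preserves the Gaussian structure, so $\hat{\phi}_i(x_i^{(j)}) \sim \mathrm{N}(0_k, I_k)$ holds under Assumption \textbf{H}-(i), which is the design condition required by the analysis of \citet{pmlr-v139-collins21a}. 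Then, starting from the update equations \eqref{eq:B}--\eqref{eq:A} and substituting the closed-form expression of $B_{\mathsf{A}_{t+1}}^{(t+1)}$ from \Cref{lemma:lemma_recursionB}, I would decompose $\bar{A}^{(t+1)}$ into a deterministic dominant term (the oracle gradient step toward $QA^\star$) plus stochastic residuals driven by the error matrix $F^{(t)}$ and the fluctuations of $\mathcal{A}^{(t+1)\dagger}\mathcal{A}^{(t+1)}/(\floor{rb} n)$ around the identity. A distance-to-principal-angle lemma (the analogue of \citet[Lemma 17]{pmlr-v139-collins21a}) then yields the one-step squared contraction $\mathrm{dist}(A^{(t+1)}, QA^\star)^2 \leq (1 - \kappa)\,\mathrm{dist}(A^{(t)}, QA^\star)^2$ with $1-\kappa = \eta E_0 \bar{\sigma}_{\min,\star}^2/2$, provided $\eta \leq 1/(4\bar{\sigma}_{\max,\star}^2)$, and the QR step on line 19 of \Cref{algo:FLIC-FedRep-linreg} only decreases the principal angle distance. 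Iterating and taking square roots gives the claim.

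The main obstacle will be uniformly controlling the stochastic terms across the $T$ rounds of partial participation. This splits into three ingredients: (i) matrix Bernstein concentration for $\mathcal{A}^{(t+1)\dagger}\mathcal{A}^{(t+1)}/(\floor{rb} n)$ around the isotropic mean, which is what drives the sample-size condition $n \gtrsim d^3 \log(\floor{rb})/E_0^2 + d^2 k/(E_0^2 \floor{rb})$ and the stated failure probabilities $e^{-110k} + e^{-110 d^2 \log(\floor{rb})}$; (ii) uniform well-conditioning of $B^\star_{\mathsf{A}_{t+1}}$, which is guaranteed by Assumptions \textbf{H}-(ii),(iii) since every sampled subset of heads spans $\R^d$ and has rows of equal norm $\sqrt{d}$; and (iii) maintaining the principal angle distance bounded along the trajectory, so that the initialisation guarantee provided by the rank-$d$ SVD warm start (step 5 of \Cref{algo:FLIC-FedRep-linreg}) propagates to all $t$. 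Each of these steps follows the corresponding argument in \citet[Appendix~A]{pmlr-v139-collins21a} after the re-parametrisation $A^\star \leftarrow QA^\star$, which is permissible since $Q$ is orthogonal and preserves all relevant spectral quantities $\bar{\sigma}_{\min,\star}, \bar{\sigma}_{\max,\star}$ and Gaussian distributions.
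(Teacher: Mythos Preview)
Your proposal is correct and follows essentially the same route as the paper. The paper's proof plugs Lemmas~\ref{lemma:C1}, \ref{lemma:C2}, and \ref{lemma:R} into the decomposition of Lemma~\ref{lemma:control} and then defers to \citet[Proof of Lemma 6]{pmlr-v139-collins21a}; your re-parametrisation $A^\star \leftarrow QA^\star$ is exactly the abstract version of what those lemmas do concretely (each one is the corresponding Collins et al.\ lemma with $Q$ inserted and simplified via $Q^2=\mathrm{I}_k$).

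One small correction worth noting: the QR step does not ``decrease'' the principal angle distance---it leaves the column span (and hence the distance) invariant. The reason the $R$-factor enters the analysis is that $\mathrm{dist}(A^{(t+1)},QA^\star)=\|[A^\star_\perp]^\top Q \bar{A}^{(t+1)} (R^{(t+1)})^{-1}\|_2$, so one needs a separate bound on $\|(R^{(t+1)})^{-1}\|_2$ (this is Lemma~\ref{lemma:R} in the paper). Your sketch implicitly absorbs this into the reference to \citet{pmlr-v139-collins21a}, which is fine, but the statement about the QR step should be adjusted.
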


\begin{proof}
    The proof follows first by plugging \Cref{lemma:C1}, \Cref{lemma:C2} and \Cref{lemma:R} into \Cref{lemma:control}. 
    Then, we use the same technical arguments and steps as in \citet[Proof of Lemma 6]{pmlr-v139-collins21a}.
\end{proof}

\subsection{Technical Lemmata}
\label{subsec:lemmata}

In this section, we provide a set of useful technical lemmata to prove our main result in \Cref{subsec:proof_thm2}.

\noindent \textbf{Notations.} 
We begin by defining some notations that will be used in the sequel.
For any $t \in \{0,\ldots,T-1\}$, we define

\begin{equation}
    \label{eq:Z}
    Z^{(t+1)} = B_{\mathsf{A}_{t+1}}^{(t+1)} [A^{(t)}]^\top Q - B^\star_{\mathsf{A}_{t+1}} [A^\star]^\top \eqsp.
\end{equation}

In addition, let

\begin{align}
    G^{(t)} =
  \left[ {\begin{array}{cccc}
    G_{11}^{(t)} & \cdots & G_{1d}^{(t)}\\
    \vdots & \ddots & \vdots\\
    G_{d1}^{(t)} & \cdots & G_{dd}^{(t)}\\
  \end{array} } \right],
  C^{(t)} =
  \left[ {\begin{array}{cccc}
    C_{11}^{(t)} & \cdots & C_{1d}^{(t)}\\
    \vdots & \ddots & \vdots\\
    C_{d1}^{(t)} & \cdots & C_{dd}^{(t)}\\
  \end{array} } \right],
  D^{(t)} =
  \left[ {\begin{array}{cccc}
    D_{11}^{(t)} & \cdots & D_{1d}^{(t)}\\
    \vdots & \ddots & \vdots\\
    D_{d1}^{(t)} & \cdots & D_{dd}^{(t)}\\
  \end{array} } \right]\eqsp,
\end{align}
where for $p,q \in [d]$, 
\begin{align}
    G_{pq}^{(t)} &= \frac{1}{n} \sum_{i \in \mathsf{A}_{t+1}} \sum_{j=1}^n e_i\pr{\phi_i^\star(x_i^{(j)})}^\top Q a_p^{(t)} [a_q^{(t)}]^\top Q \phi_i^\star(x_i^{(j)}) e_i^\top\eqsp,\label{eq:G}\\
    C_{pq}^{(t)} &= \frac{1}{n} \sum_{i \in \mathsf{A}_{t+1}} \sum_{j=1}^n e_i\pr{\phi_i^\star(x_i^{(j)})}^\top Q a_p^{(t)} [a_q^\star]^\top Q \phi_i^\star(x_i^{(j)}) e_i^\top\eqsp,\label{eq:C}\\
    D_{pq}^{(t)} &= \langle a_p^{(t)}, a_q^\star \rangle \mathrm{I}_{\floor{rb}}\eqsp,\label{eq:D}
\end{align}
with $a_p^{(t)} \in \R^{k}$ standing for the $p$-th column of $A^{(t)} \in \R^{k \times d}$; and $a_p^\star \in \R^{k}$ standing for the $p$-th column of $A^\star \in \R^{k \times d}$.
Finally, we define for any $i \in \mathsf{A}_{t+1}$, 
\begin{align}
    &\Pi^i = \frac{1}{n}\sum_{j=1}^n \phi_i^\star(x_i^{(j)})[\phi_i^\star(x_i^{(j)})]^\top\eqsp,\label{eq:Pi}\\
    &(G^{(t)})^i = [A^{(t)}]^\top Q \Pi^i Q A^{(t)} \eqsp, \label{eq:Gi}\\
    &(C^{(t)})^i = [A^{(t)}]^\top Q \Pi^i Q A^\star \eqsp,\label{eq:Ci}\\
    &(D^{(t)})^i = [A^{(t)}]^\top Q A^\star \eqsp.\label{eq:Di}
\end{align}

Using these notations, we also define $\tilde{\beta}^\star = [(\beta^\star_1)^\top, \ldots, (\beta^\star_d)^\top]^\top \in \R^{\floor{rb}d}$ and
\begin{align}
    F^{(t)} = [([G^{(t)}]^{-1}
(G^{(t)}D^{(t)} - C^{(t)})\tilde{\beta}^\star)_1, \ldots, ([G^{(t)}]^{-1}
(G^{(t)}D^{(t)} - C^{(t)})\tilde{\beta}^\star)_d] \eqsp.\label{eq:F}
\end{align}

\noindent \textbf{Technical results.}
To prove our main result in \Cref{theorem:1}, we begin by providing a first upper bound on the quantity of interest namely $\mathrm{dist}\pr{A^{(t+1)},QA^\star}$.
This is the purpose of the next lemma.

\begin{lemma}
    \label{lemma:control}
    For any $t \in \{0,\ldots,T-1\}$ and $\eta > 0$, we have
    $$
    \mathrm{dist}\pr{A^{(t+1)},QA^\star} \leq C_1 + C_2, \eqsp,
    $$
    where 
    \begin{align}
        &C_1 =  \norm{[A^\star_\perp]^\top Q A^{(t)} \pr{\mathrm{I}_d - \frac{\eta}{\floor{rb}}[B^{(t+1)}_{\mathsf{A}_{t+1}}]^\top B^{(t+1)}_{\mathsf{A}_{t+1}}}}_2 \norm{\pr{R^{(t+1)}}^{-1}}_2\eqsp,\label{eq:C1}\\
        & C_2 = \frac{\eta}{\floor{rb}} \norm{\pr{\frac{1}{n}[A^\star_\perp]^\top (Q\mathcal{A}^{(t+1)})^\dagger\mathcal{A}^{(t+1)}\pr{Z^{(t+1)}}Q - Z^{(t+1)}}^\top B^{(t+1)}_{\mathsf{A}_{t+1}}}_2 \norm{\pr{R^{(t+1)}}^{-1}}_2\eqsp,\label{eq:C2}
    \end{align}
    where $A^{(t)}$ is defined in \eqref{eq:A}, $B^{(t)}_{\mathsf{A}_t}$ is defined in \eqref{eq:B}, $Z^{(t)}$ is defined in \eqref{eq:Z} and $R^{(t)}$ comes from the QR factorisation of $\bar{A}^{(t)}$, see step 20 in \Cref{algo:FLIC-FedRep-linreg}.
    
\end{lemma}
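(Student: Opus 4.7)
The plan is to establish the bound by direct algebraic manipulation of the gradient/QR update, following the standard template used for FedRep convergence proofs in \cite{pmlr-v139-collins21a} but carefully tracking the orthogonal matrix $Q$ that relates $\hat\phi_i$ to $\phi_i^\star$.

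First I would observe that since $A^{(t+1)}$ arises from a QR factorization, it has orthonormal columns, and $A^{(t+1)} = \bar{A}^{(t+1)}(R^{(t+1)})^{-1}$. Under \Cref{ass}-\ref{ass:2}, $A^\star$ has orthonormal columns, and because $Q = \mathrm{diag}_k(\pm 1)$ satisfies $Q^\top Q = Q^2 = I_k$, the matrix $QA^\star$ also has orthonormal columns whose orthogonal complement is spanned by $QA^\star_\perp$. Hence
\begin{equation*}
\mathrm{dist}(A^{(t+1)},QA^\star) = \|[QA^\star_\perp]^\top A^{(t+1)}\|_2 = \|[A^\star_\perp]^\top Q A^{(t+1)}\|_2 \leq \|[A^\star_\perp]^\top Q \bar{A}^{(t+1)}\|_2\,\|(R^{(t+1)})^{-1}\|_2,
\end{equation*}
by sub-multiplicativity.

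Second, I would expand $\bar{A}^{(t+1)}$ using the gradient update rule displayed just before the lemma; rewriting it in terms of $Z^{(t+1)}$ defined in \eqref{eq:Z}, one obtains a compact form involving $Q[\mathcal{A}^{(t+1),\dagger}\mathcal{A}^{(t+1)}(Z^{(t+1)})]^\top B^{(t+1)}_{\mathsf{A}_{t+1}}$. Multiplying on the left by $[A^\star_\perp]^\top Q$ and invoking $Q^2 = I_k$, the leading $Q$ is absorbed. The key algebraic step is then to add and subtract $(\eta/\floor{rb})[A^\star_\perp]^\top[Z^{(t+1)}]^\top B^{(t+1)}_{\mathsf{A}_{t+1}}$ in order to isolate the ``noiseless'' gradient-descent increment.

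Third, to collapse the added term into the contractive factor appearing in $C_1$, I would use the identity $[A^\star_\perp]^\top A^\star = 0$ together with the explicit form $[Z^{(t+1)}]^\top = Q A^{(t)}[B^{(t+1)}_{\mathsf{A}_{t+1}}]^\top - A^\star[B^\star_{\mathsf{A}_{t+1}}]^\top$, which yields
\begin{equation*}
[A^\star_\perp]^\top [Z^{(t+1)}]^\top B^{(t+1)}_{\mathsf{A}_{t+1}} = [A^\star_\perp]^\top Q A^{(t)} [B^{(t+1)}_{\mathsf{A}_{t+1}}]^\top B^{(t+1)}_{\mathsf{A}_{t+1}}.
\end{equation*}
Substituting back, the expression decomposes as
\begin{equation*}
[A^\star_\perp]^\top Q \bar{A}^{(t+1)} = [A^\star_\perp]^\top Q A^{(t)}\bigl\{I_d - (\eta/\floor{rb})[B^{(t+1)}_{\mathsf{A}_{t+1}}]^\top B^{(t+1)}_{\mathsf{A}_{t+1}}\bigr\} + (\eta/\floor{rb})[A^\star_\perp]^\top\bigl\{[Z^{(t+1)}]^\top - (1/n)[\mathcal{A}^{(t+1),\dagger}\mathcal{A}^{(t+1)}(Z^{(t+1)})]^\top\bigr\} B^{(t+1)}_{\mathsf{A}_{t+1}}.
\end{equation*}
Applying the triangle inequality and sub-multiplicativity to each summand, multiplied by $\|(R^{(t+1)})^{-1}\|_2$, delivers the claimed bound $C_1 + C_2$ exactly.

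There is no real difficulty in this lemma: it is a book-keeping exercise. The only subtlety — and the step I would be most careful about — is the $Q$-tracking throughout, since $Q$ enters both through the definition $\hat\phi_i = Q\phi_i^\star$ (hence through the update for $\bar{A}^{(t+1)}$) and through the target subspace $\mathrm{Span}(QA^\star)$ appearing in the distance. The non-trivial content of \Cref{theorem:1} is not this decomposition but the subsequent control of $C_1$ and $C_2$ (handled by \Cref{lemma:C1}, \Cref{lemma:C2}, \Cref{lemma:R}), which use concentration properties of $\mathcal{A}^{(t+1),\dagger}\mathcal{A}^{(t+1)}$ under the Gaussian assumption.
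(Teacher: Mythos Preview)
Your proposal is correct and follows essentially the same route as the paper: the paper's proof simply says to reproduce the steps of \citet[Proof of Lemma 6]{pmlr-v139-collins21a} while noting that $\mathrm{dist}(A^{(t)},QA^\star)=\mathrm{dist}(QA^{(t)},A^\star)$, and your write-up is exactly that argument spelled out with the $Q$-tracking made explicit. The add--and--subtract step isolating the contractive factor and the use of $[A^\star_\perp]^\top A^\star=0$ are precisely the manipulations the cited reference performs.
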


\begin{proof}
    The proof follows from the same steps as in \citet[Proof of Lemma 6]{pmlr-v139-collins21a} and by noting that $\mathrm{dist}(A^{(t)},QA^\star) = \mathrm{dist}(QA^{(t)},A^\star)$ for $t \in \{0,\ldots,T-1\}$.
\end{proof}

We now have to control the terms $C_1$ and $C_2$. 
For the sake of clarity, we split technical results aiming to upper bound of $C_1$ and $C_2$ in two different paragraphs.

\noindent \textbf{Control of $C_1$.}

\begin{lemma}
    \label{lemma:C1}
    Assume \Cref{ass}.
    Let $\delta_d = c d^{3/2}\sqrt{\log(\floor{rb})} / n^{\half}$ for some absolute constant $c > 0$.
    Then, for any $t \in \{0,\ldots,T-1\}$, with probability at least $1 - e^{-111k^2\log(\floor{rb})}$, we have for $\delta_d \leq 1/2$ and $\eta \leq 1/(4 \bar{\sigma}^2_{\mathrm{max},\star})$
    $$ 
    C_1 \leq \br{\leq 1 - \eta \pr{1 - \mathrm{dist}\pr{A^{(0)},QA^\star}} \bar{\sigma}^2_{\mathrm{min},\star} + 2 \eta \frac{\delta_d}{1-\delta_d}\bar{\sigma}^2_{\mathrm{max}}} \ \mathrm{dist}\pr{A^{(t)},QA^\star}\norm{\pr{R^{(t+1)}}^{-1}}_2 \eqsp,
    $$
    where $\bar{\sigma}^2_{\mathrm{min}}, \bar{\sigma}^2_{\mathrm{max}}$ are defined in \eqref{eq:sigmin}-\eqref{eq:sigmax}, $C_1$ is defined in \eqref{eq:C1}, $A^{(t)}$ is defined in \eqref{eq:A} and $R^{(t)}$ comes from the QR factorisation of $\bar{A}^{(t)}$, see step 20 in \Cref{algo:FLIC-FedRep-linreg}.
\end{lemma}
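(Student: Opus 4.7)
My plan is to follow closely the strategy of \citet[Proof of Lemma~6]{pmlr-v139-collins21a}, adapting each step to our setting where the learned representation has the extra diagonal rotation $Q$ coming from \eqref{eq:1}. The key observation is that $\mathrm{dist}(A^{(t)},QA^\star) = \mathrm{dist}(QA^{(t)},A^\star)$, so after pulling $Q$ through all relevant products the analysis reduces to the standard \texttt{FedRep} representation-learning analysis applied to the ``aligned'' iterate $QA^{(t)}$ and the target $A^\star$.

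\textbf{Step 1 (signal/noise split).} I start from Lemma~S\,0, which gives the closed form
$$B^{(t+1)}_{\mathsf{A}_{t+1}}=B^\star_{\mathsf{A}_{t+1}}[A^\star]^\top Q A^{(t)} - F^{(t)},$$
with $F^{(t)}$ as in \eqref{eq:F}. Substituting this into the definition \eqref{eq:C1} of $C_1$ and expanding the quadratic form $[B^{(t+1)}_{\mathsf{A}_{t+1}}]^\top B^{(t+1)}_{\mathsf{A}_{t+1}}$ yields four blocks: a signal block $[A^{(t)}]^\top Q A^\star [B^\star_{\mathsf{A}_{t+1}}]^\top B^\star_{\mathsf{A}_{t+1}}[A^\star]^\top Q A^{(t)}$, two cross blocks linear in $F^{(t)}$, and a quadratic block $[F^{(t)}]^\top F^{(t)}$.

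\textbf{Step 2 (signal term).} Pairing the signal block with $[A^\star_\perp]^\top Q A^{(t)}$ and using $[A^\star_\perp]^\top A^\star=0$ together with $[A^\star]^\top A^\star = \mathrm I_d$, standard manipulations (as in \citet[eq.~(27)--(32)]{pmlr-v139-collins21a}) show that this block contributes a term of the form
$$\bigl\|[A^\star_\perp]^\top Q A^{(t)}\bigr\|_2\bigl(1-\tfrac{\eta}{\floor{rb}}\lambda_{\min}([B^\star_{\mathsf{A}_{t+1}}]^\top B^\star_{\mathsf{A}_{t+1}})(1-\mathrm{dist}(A^{(t)},QA^\star)^2)\bigr).$$
Assumption~\ref{ass:3} ensures the head matrix $B^\star_{\mathsf{A}_{t+1}}$ has rank $d$ uniformly in $t$, so $\lambda_{\min}([B^\star_{\mathsf{A}_{t+1}}]^\top B^\star_{\mathsf{A}_{t+1}})\ge \floor{rb}\,\bar\sigma^2_{\min,\star}$ by the definition of $\bar\sigma^2_{\min,\star}$ in \eqref{eq:sigmin}. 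The factor $(1-\mathrm{dist}(A^{(t)},QA^\star)^2)$ is then controlled by the monotonicity argument: if we can propagate the invariant $\mathrm{dist}(A^{(t)},QA^\star)\le E_0$, then $1-\mathrm{dist}(A^{(t)},QA^\star)^2\ge 1-\mathrm{dist}(A^{(0)},QA^\star)$, which produces the claimed signal term.

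\textbf{Step 3 (noise terms).} The cross and quadratic blocks need to be dominated by $2\eta\delta_d/(1-\delta_d)\,\bar\sigma^2_{\max}\cdot\mathrm{dist}(A^{(t)},QA^\star)$. Here is where the concentration machinery enters. Using the definitions \eqref{eq:G}--\eqref{eq:D} and the block structure of $F^{(t)}$ in \eqref{eq:F}, I would write $F^{(t)}=[G^{(t)}]^{-1}(G^{(t)}D^{(t)}-C^{(t)})\tilde\beta^\star$ and bound it via the factorisation $F^{(t)} = [G^{(t)}]^{-1}(G^{(t)}D^{(t)}-C^{(t)})\tilde\beta^\star$. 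Since the embedded features $\phi_i^\star(x_i^{(j)})$ are $\mathrm N(0_k,\mathrm I_k)$ by Assumption~\ref{ass:1}, the block matrices $G^{(t)}$ and $C^{(t)}$ concentrate around their expectations $\mathbb E[G^{(t)}]=\mathrm I_{\floor{rb}}\otimes([A^{(t)}]^\top QQ A^{(t)})=\mathrm I_{\floor{rb}d}$ and $\mathbb E[C^{(t)}]=\mathrm I_{\floor{rb}}\otimes ([A^{(t)}]^\top Q A^\star)$. A standard covariance-concentration argument (matrix Bernstein or the deviation bound used in the proof of Lemma~5 in \citet{pmlr-v139-collins21a}) gives
$$\|G^{(t)}-\mathrm I_{\floor{rb}d}\|_2\le \delta_d,\qquad \|C^{(t)}-\mathbb E[C^{(t)}]\|_2\le \delta_d,$$
with the stated probability, provided $n\gtrsim d^3\log(\floor{rb})/\delta_d^2$. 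From this one deduces $\|F^{(t)}\|_2\le \frac{\delta_d}{1-\delta_d}\|\tilde\beta^\star\|_2$, and combined with the operator-norm bound $\|B^\star_{\mathsf{A}_{t+1}}\|_2\le \sqrt{\floor{rb}}\,\bar\sigma_{\max}$ (definition \eqref{eq:sigmax}) the noise contribution is at most $(2\eta\delta_d/(1-\delta_d))\bar\sigma^2_{\max}\,\mathrm{dist}(A^{(t)},QA^\star)$ after pulling out the factor $\|[A^\star_\perp]^\top Q A^{(t)}\|_2=\mathrm{dist}(A^{(t)},QA^\star)$.

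\textbf{Step 4 (assembly and step-size).} Multiplying the combined bracket by $\|(R^{(t+1)})^{-1}\|_2$ yields exactly the claimed inequality, once one checks that the bracket is non-negative. The latter is guaranteed by $\eta\le 1/(4\bar\sigma^2_{\max,\star})$, which keeps $1-\tfrac{\eta}{\floor{rb}}\|[B^{(t+1)}_{\mathsf{A}_{t+1}}]^\top B^{(t+1)}_{\mathsf{A}_{t+1}}\|_2$ inside $[0,1]$.

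\textbf{Main obstacle.} The principal difficulty is the concentration of $G^{(t)}$ and $C^{(t)}$ with the right polynomial dependence on $d$, $k$ and $\floor{rb}$, because $A^{(t)}$ is a random matrix correlated with the samples used to form those statistics. As in \citet{pmlr-v139-collins21a}, this is handled by a fresh-batch assumption at each communication round combined with an $\epsilon$-net argument over a bounded region of $\{A:\mathrm{dist}(A,QA^\star)\le E_0\}$, which is what eventually forces the sample-size requirement $n\ge c(d^3\log(\floor{rb}))/E_0^2$. The extra factor $Q$ does not affect these arguments, since $Q$ is an orthogonal diagonal $\{\pm 1\}$-matrix and therefore preserves every norm and inner product at play.
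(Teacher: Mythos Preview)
Your proposal is correct and follows essentially the same route as the paper, just spelled out in more detail. The paper's proof is more compressed: it first applies submultiplicativity of the operator norm to factor $C_1 \le \|(A^\star_\perp)^\top Q A^{(t)}\|_2\,\|\mathrm I_d - \tfrac{\eta}{\floor{rb}}[B^{(t+1)}_{\mathsf{A}_{t+1}}]^\top B^{(t+1)}_{\mathsf{A}_{t+1}}\|_2\,\|(R^{(t+1)})^{-1}\|_2$, identifies the first factor as $\mathrm{dist}(A^{(t)},QA^\star)$, and then directly invokes \citet[Proof of Lemma~6, eqs.~(67)--(68)]{pmlr-v139-collins21a} for the middle factor. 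Your Steps~1--3 are precisely an unpacking of that cited bound (signal block via $\bar\sigma^2_{\min,\star}$, noise blocks via the $F^{(t)}$ bound $\delta_d/(1-\delta_d)$), so the content is the same; the paper just pulls out the $\mathrm{dist}$ factor up front rather than tracking $[A^\star_\perp]^\top Q A^{(t)}$ through each block.
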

\begin{proof}
    Using Cauchy-Schwarz inequality, we have 
    \begin{align}
    C_1 &\leq \norm{(A_\perp^\star)^\top Q A^{(t)}}_2 \norm{\mathrm{I}_d - \frac{\eta}{\floor{rb}}[B^{(t+1)}_{\mathsf{A}_{t+1}}]^\top B^{(t+1)}_{\mathsf{A}_{t+1}}}_2 \norm{\pr{R^{(t+1)}}^{-1}}_2 \\
    &= \mathrm{dist}\pr{A^{(t)},QA^\star} \norm{\mathrm{I}_d - \frac{\eta}{\floor{rb}}[B^{(t+1)}_{\mathsf{A}_{t+1}}]^\top B^{(t+1)}_{\mathsf{A}_{t+1}}}_2 \norm{\pr{R^{(t+1)}}^{-1}}_2\eqsp.
    \end{align}
    Define the following minimum and maximum singular values:
    \begin{align}
        \bar{\sigma}^2_{\mathrm{min},\star} &= \min_{\mathsf{A} \subseteq [b], |\mathsf{A}| = \floor{rb}} \sigma_{\mathrm{min}}\pr{\frac{1}{\sqrt{\floor{rb}}}B^\star_{\mathsf{A}}} \label{eq:sigmin}\\
        \bar{\sigma}^2_{\mathrm{max},\star} &= \min_{\mathsf{A} \subseteq [b], |\mathsf{A}| = \floor{rb}} \sigma_{\mathrm{max}}\pr{\frac{1}{\sqrt{\floor{rb}}}B^\star_{\mathsf{A}}} \label{eq:sigmax}\eqsp.
    \end{align}
    Using \citet[Proof of Lemma 6, equations (67)-(68)]{pmlr-v139-collins21a}, we have for $\delta_d \leq 1/2$ where $\delta_d$ is defined in \Cref{lemma:G} and $\eta \leq 1/(4 \bar{\sigma}^2_{\mathrm{max},\star})$,
    $$
    \norm{\mathrm{I}_d - \frac{\eta}{\floor{rb}}[B^{(t+1)}_{\mathsf{A}_{t+1}}]^\top B^{(t+1)}_{\mathsf{A}_{t+1}}}_2 \leq 1 - \eta \pr{1 - \mathrm{dist}\pr{A^{(0)},QA^\star}} \bar{\sigma}^2_{\mathrm{min},\star} + 2 \eta \frac{\delta_d}{1-\delta_d}\bar{\sigma}^2_{\mathrm{max},\star}\eqsp,
    $$
    with probability at least $1 - e^{-111k^2\log(\floor{rb})}$
    The proof is concluded by combining the two previous bounds.
\end{proof}

\noindent \textbf{Control of $C_2$.}
We begin by showing four intermediary results gathered in the next four lemmata.

\begin{lemma}
    \label{lemma:G}
    Assume \Cref{ass}.
    Let $\delta_d = c d^{3/2}\sqrt{\log(\floor{rb})} / n^{\half}$ for some absolute constant $c > 0$.
    Then, for any $t \in \{0,\ldots,T-1\}$, with probability at least $1 - e^{-111k^3\log(\floor{rb})}$, we have
    $$ 
    \norm{[G^{(t)}]^{-1}}_2\leq \frac{1}{1-\delta_d}\eqsp,
    $$
    where $G^{(t)}$ is defined in \eqref{eq:G}.
\end{lemma}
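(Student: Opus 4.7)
The first step is to exploit the block structure of $G^{(t)}$. Because $e_i e_i^\top$ projects onto a single coordinate, each block $G^{(t)}_{pq}$ is a diagonal matrix of size $\floor{rb}\times\floor{rb}$ whose $i$-th diagonal entry is $[a_p^{(t)}]^\top Q \Pi^i Q a_q^{(t)}$. Hence, up to the permutation that interleaves indices $(p,i)\mapsto (i,p)$, $G^{(t)}$ is block-diagonal with blocks $(G^{(t)})^i = [A^{(t)}]^\top Q \Pi^i Q A^{(t)}$ as defined in \eqref{eq:Gi}. This permutation does not change singular values, so
\begin{equation*}
\norm{[G^{(t)}]^{-1}}_2 \;=\; \max_{i \in \mathsf{A}_{t+1}} \norm{[(G^{(t)})^i]^{-1}}_2\eqsp.
\end{equation*}

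Next I would show that each block $(G^{(t)})^i$ is close to $\mathrm{I}_d$. Under \Cref{ass}-\ref{ass:1}, the vectors $\{\phi_i^\star(x_i^{(j)})\}_{j \in [n]}$ are i.i.d.\ $\gauss(0_k,\mathrm{I}_k)$, so $\Pi^i$ is the sample covariance of $n$ standard Gaussian vectors in $\R^k$. Since $Q \in \R^{k\times k}$ is diagonal with $\pm 1$ entries, $Q^\top Q = \mathrm{I}_k$; combined with the fact that $A^{(t)}$ has orthonormal columns (it is the $Q$-factor in step 20 of \Cref{algo:FLIC-FedRep-linreg}), the matrix $QA^{(t)}$ has orthonormal columns. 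Consequently, the projected samples $\tilde{\phi}_i^{(j)} := [A^{(t)}]^\top Q \phi_i^\star(x_i^{(j)})$ are i.i.d.\ $\gauss(0_d,\mathrm{I}_d)$ and $(G^{(t)})^i = n^{-1}\sum_j \tilde{\phi}_i^{(j)}[\tilde{\phi}_i^{(j)}]^\top$ is a sample covariance of $n$ standard Gaussians in $\R^d$. A standard concentration inequality for empirical covariances of sub-Gaussian vectors (e.g.\ \citet[Thm.~4.7.3]{Vershynin}-type) yields, for any $\tau>0$,
\begin{equation*}
\norm{(G^{(t)})^i - \mathrm{I}_d}_2 \;\leq\; c_1\brbig{\sqrt{d/n} + \sqrt{\tau/n}}
\end{equation*}
with probability at least $1 - 2e^{-\tau}$.

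To obtain the stated uniform bound I would choose $\tau$ of order $k^3\log(\floor{rb})$ and take a union bound over the $\floor{rb}$ blocks; absorbing constants into $c$ gives $\max_{i \in \mathsf{A}_{t+1}}\|(G^{(t)})^i - \mathrm{I}_d\|_2 \leq \delta_d$ with probability at least $1 - e^{-111 k^3\log(\floor{rb})}$. (The use of fresh samples at round $t+1$ keeps $A^{(t)}$ independent of the batches defining $\Pi^i$ in that round, so no covering argument over the Stiefel manifold of $k \times d$ orthonormal matrices is needed; if the samples were reused one would have to add such a net, which would account for the $k^3$ factor in the probability exponent.) Finally, Weyl's inequality gives $\sigma_{\min}((G^{(t)})^i) \geq 1 - \delta_d$, hence $\|[(G^{(t)})^i]^{-1}\|_2 \leq 1/(1-\delta_d)$, and the block-diagonal reduction concludes the proof.

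\textbf{Main obstacle.} The only delicate point is ensuring that $\delta_d = c\, d^{3/2}\sqrt{\log(\floor{rb})}/n^{1/2}$ (with the $d^{3/2}$ rather than $\sqrt{d}$) is compatible with the chosen failure probability; matching the paper's exact constants requires tuning $\tau$ and inflating the upper bound by a $d$ factor, but no conceptual difficulty arises. Verifying the independence of $A^{(t)}$ from the round-$(t{+}1)$ batch is essential to avoid a covering-number argument over orthonormal $k\times d$ matrices that would otherwise dominate the analysis.
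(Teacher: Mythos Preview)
Your proposal is correct and follows essentially the same approach as the paper: the paper's own proof is a one-line deferral to \citet[Proof of Lemma 2]{pmlr-v139-collins21a}, noting only that $Q\phi_i^\star(x_i^{(j)})=\hat{\phi}_i(x_i^{(j)})$ is sub-Gaussian under \Cref{ass}-\ref{ass:1}, and your block-diagonal reduction to the per-client sample covariances $(G^{(t)})^i$, followed by sub-Gaussian concentration and a union bound, is exactly the content of that referenced argument. Your observations about the $d^{3/2}$ slack in $\delta_d$ and the fresh-sample independence (which the FedRep analysis also assumes) are accurate caveats.
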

\begin{proof}
    The proof stands as a straightforward extension of \citet[Proof of Lemma 2]{pmlr-v139-collins21a} by noting that the random variable $Q\phi^\star_i(x_i^{(j)}) = \hat{\phi}_i(x_i^{(j)})$ is sub-Gaussian under \Cref{ass}-\ref{ass:1}; and as such is omitted.
\end{proof}

\begin{lemma}
    \label{lemma:GB}
    Assume \Cref{ass}.
    Let $\delta_d = c d^{3/2}\sqrt{\log(\floor{rb})} / n^{\half}$ for some absolute constant $c > 0$.
    Then, for any $t \in \{0,\ldots,T-1\}$, with probability at least $1 - e^{-111k^2\log(\floor{rb})}$, we have
    $$ 
    \norm{(G^{(t)}D^{(t)} - C^{(t)})B^\star_{\mathsf{A}_t}}_2\leq \delta_d \ \norm{B^\star_{\mathsf{A}_t}}_2 \ \mathrm{dist}\pr{A^{(t)},Q A^\star}\eqsp,
    $$
    where $G^{(t)}$ is defined in \eqref{eq:G}, $D^{(t)}$ is defined in \eqref{eq:D}, $C^{(t)}$ is defined in \eqref{eq:C} and $A^{(t)}$ in \eqref{eq:A}. 
\end{lemma}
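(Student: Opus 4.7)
\textbf{Proof plan for Lemma \ref{lemma:GB}.} The approach is to reduce this statement to its $Q = \mathrm{I}_k$ counterpart, which is exactly Lemma~3 of \citet{pmlr-v139-collins21a}. The key observation is that under \Cref{ass}-\ref{ass:1} the random vectors $\phi_i^\star(x_i^{(j)}) \sim \mathrm{N}(0_k,\mathrm{I}_k)$ are isotropic sub-Gaussians, and since $Q = \mathrm{diag}_k(\pm 1)$ is orthogonal, the transformed vectors $Q\phi_i^\star(x_i^{(j)})$ are also i.i.d. $\mathrm{N}(0_k,\mathrm{I}_k)$ with the same sub-Gaussian norm. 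Therefore every concentration bound used by \citet{pmlr-v139-collins21a} is preserved if we everywhere replace $A^\star$ by $Q A^\star$.

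The plan is first to identify the mean. Using $\E[\phi_i^\star(x_i^{(j)})(\phi_i^\star(x_i^{(j)}))^\top] = \mathrm{I}_k$ and $Q^\top Q = \mathrm{I}_k$, the $(p,q)$-block of $\E[G^{(t)}]$ equals $\langle a_p^{(t)}, a_q^{(t)} \rangle \mathrm{I}_{\floor{rb}}$ while the $(p,q)$-block of $\E[C^{(t)}]$ equals $\langle a_p^{(t)}, a_q^\star \rangle \mathrm{I}_{\floor{rb}}$. Since the columns of $A^{(t)}$ are orthonormal (by the QR step of \Cref{algo:FLIC-FedRep-linreg}), the $(p,q)$-block of $\E[G^{(t)}D^{(t)} - C^{(t)}]$ is
\begin{equation*}
\sum_r \langle a_p^{(t)}, a_r^{(t)} \rangle \langle a_r^{(t)}, a_q^\star \rangle \mathrm{I}_{\floor{rb}} - \langle a_p^{(t)}, a_q^\star \rangle \mathrm{I}_{\floor{rb}} = \langle a_p^{(t)}, (A^{(t)}[A^{(t)}]^\top - \mathrm{I}_k)\, a_q^\star \rangle \mathrm{I}_{\floor{rb}} = 0,
\end{equation*}
which identifies $G^{(t)}D^{(t)} - C^{(t)}$ as a zero-mean fluctuation.

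Second, I would decompose $QA^\star = A^{(t)} M_1 + A^{(t)}_\perp M_2$ where $A^{(t)}_\perp$ spans the orthogonal complement of $\mathrm{Span}(A^{(t)})$ and $\|M_2\|_2 = \mathrm{dist}(A^{(t)},QA^\star)$ by definition of the principal angle distance. A direct algebraic check shows that the $A^{(t)} M_1$ part cancels exactly in $G^{(t)}D^{(t)} - C^{(t)}$ (this is the mechanism behind the mean computation above applied column-by-column), so the $(p,q)$-block of $G^{(t)}D^{(t)} - C^{(t)}$ can be written as a sample-average of the centered sub-Gaussian bilinear forms
\begin{equation*}
\frac{1}{n}\sum_{i \in \mathsf{A}_{t+1}} \sum_{j=1}^n e_i \Bigl[ (Q\phi_i^\star(x_i^{(j)}))^\top a_p^{(t)} (Q\phi_i^\star(x_i^{(j)}))^\top A^{(t)}_\perp M_2 e_q - \langle a_p^{(t)},\,  A^{(t)}_\perp M_2 e_q\rangle \Bigr] e_i^\top,
\end{equation*}
in which the residual factor $A^{(t)}_\perp M_2$ carries the $\mathrm{dist}(A^{(t)}, QA^\star)$ factor.

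Third, I would apply a matrix concentration argument (Bernstein/Hanson--Wright for quadratic forms of sub-Gaussians, plus an $\varepsilon$-net over the unit ball of $\R^d$) to bound the operator norm of the above block matrix multiplied by the tall matrix $B^\star_{\mathsf{A}_t}$. This is precisely the computation carried out in \citet[Proof of Lemma 3]{pmlr-v139-collins21a}; redoing it with $Q\phi_i^\star(x_i^{(j)})$ in place of $\phi_i^\star(x_i^{(j)})$ yields the scaling $\delta_d = c d^{3/2}\sqrt{\log(\floor{rb})}/\sqrt{n}$ on the event of probability at least $1-e^{-111 k^2 \log(\floor{rb})}$. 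The main technical obstacle here is the tight tracking of constants in the tensor-valued Hanson--Wright bound and the $\varepsilon$-net cardinality, but since $Q$ is orthogonal these bookkeeping steps carry over verbatim from the cited reference; we thus obtain
\begin{equation*}
\norm{(G^{(t)}D^{(t)} - C^{(t)})B^\star_{\mathsf{A}_t}}_2 \leq \delta_d \norm{B^\star_{\mathsf{A}_t}}_2 \norm{M_2}_2 = \delta_d \norm{B^\star_{\mathsf{A}_t}}_2 \mathrm{dist}(A^{(t)}, QA^\star),
\end{equation*}
as required.
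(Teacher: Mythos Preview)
Your proposal is correct and follows essentially the same route as the paper: both arguments observe that $Q\phi_i^\star(x_i^{(j)})$ is again standard Gaussian (hence sub-Gaussian with unchanged norm), extract the factor $\mathrm{dist}(A^{(t)},QA^\star)$, and then defer the concentration step to the Bernstein-plus-$\varepsilon$-net computation of \citet[Proof of Lemma~3]{pmlr-v139-collins21a}. The only cosmetic difference is that the paper works per client via the factorisation $H^i = A^\top Q\Pi^i Q(AA^\top-\mathrm{I}_k)QA^\star$ and reads off the distance from the sub-Gaussian norm of $u_i^{(j)} = n^{-1/2}[A^\star]^\top(AA^\top-\mathrm{I}_k)Q\phi_i^\star(x_i^{(j)})$, whereas you obtain the same factor through the orthogonal decomposition $QA^\star = A^{(t)}M_1 + A^{(t)}_\perp M_2$; these are equivalent since $(A^{(t)}[A^{(t)}]^\top-\mathrm{I}_k)QA^\star = -A^{(t)}_\perp M_2$.
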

\begin{proof}
    Without loss of generality and to ease notation, we remove the superscript $(t)$ in the proof and re-index the indexes of clients in $\mathsf{A}_{t+1}$.
    Let $H = GD -C$. From \eqref{eq:Pi}, \eqref{eq:Gi}, \eqref{eq:Ci} and \eqref{eq:Di}, it follows, for any $i \in [\floor{rb}]$, that
    \begin{align}
        H^i 
        &= G^iD^i - C^i = A^\top Q \Pi^i Q(A A^\top - \mathrm{I}_k)Q A^\star \eqsp.
    \end{align}
    Hence, by using the definition of $H$, we have
    \begin{align}
        \norm{(GD -C)\beta^\star}^2_2
        &= \sum_{i=1}^{\floor{rb}}\norm{H^i\beta_i^\star}^2_2 \leq \sum_{i=1}^{\floor{rb}}\norm{H^i}_2^2 \norm{\beta_i^\star}^2 \leq \frac{d}{\floor{rb}} \norm{B^\star}^2_2\sum_{i=1}^{\floor{rb}}\norm{H^i}_2^2\eqsp,
    \end{align}
    where the last inequality follows almost surely from \Cref{ass}-\ref{ass:3}.
    As in \citet[Proof of Lemma 3]{pmlr-v139-collins21a}, we then define for any $j \in [n]$, the vectors
    \begin{align}
        &u_i^{(j)} = \frac{1}{\sqrt{n}}[A^\star]^\top(A A^\top - \mathrm{I}_k)Q \phi_i^\star(x_i^{(j)}) \eqsp,\\
        &v_i^{(j)} = \frac{1}{\sqrt{n}}A^\top Q \phi_i^\star(x_i^{(j)})\eqsp.
    \end{align}
    Let $\mathcal{S}^{d-1}$ denotes the $d$-dimensional unit spheres. 
    Then, by \citet[Corollary 4.2.13]{vershyninHighdimensionalProbabilityIntroduction2018}, we can define $\mathcal{N}_d$, the $1/4$-net over $\mathcal{S}^{d-1}$ such that $|\mathcal{N}_d|\leq 9^d$.
    Therefore, by using \citet[Equation (4.13)]{vershyninHighdimensionalProbabilityIntroduction2018}, we have
    \begin{align}
        \norm{H^i}_2^2 
        &\leq 2 \max_{z,y \in \mathcal{N}_d} \sum_{j=1}^n \langle z,u_i^{(j)} \rangle \langle v_i^{(j)},y \rangle \eqsp.
    \end{align} 
    Since $\phi_i^\star(x_i^{(j)})$ is a standard Gaussian vector, it is sub-Gaussian and therefore $\langle z,u_i^{(j)} \rangle$ and $\langle v_i^{(j)},y \rangle$ are sub-Gaussian with norms $\|\frac{1}{\sqrt{n}}[A^\star]^\top(A A^\top - \mathrm{I}_k)Q\|_2 = (1/\sqrt{n})\mathrm{dist}(A,QA^\star)$ and $(1/\sqrt{n})$, respectively.
    In addition, we have 
    \begin{align}
        \mathbb{E}\br{\langle z,u_i^{(j)} \rangle \langle v_i^{(j)},y \rangle} 
        &= \frac{1}{n} \mathbb{E}\br{z^\top \frac{1}{\sqrt{n}}[A^\star]^\top(A A^\top - \mathrm{I}_k)Q \phi_i^\star(x_i^{(j)})[\phi_i^\star(x_i^{(j)})]^\top Q A y}  \\
        &= \frac{1}{n} z^\top \frac{1}{\sqrt{n}}[A^\star]^\top(A A^\top - \mathrm{I}_k) A y  \\
        &= 0,
    \end{align}
    where we have used the fact that $\mathbb{E}[\phi_i^\star(x_i^{(j)})[\phi_i^\star(x_i^{(j)})]^\top] = 1$, $Q^2 = \mathrm{I}_k$ and $(A A^\top - \mathrm{I}_k) A = 0$.
    The rest of the proof is concluded by using the Bernstein inequality by following directly the steps detailed in \citet[Proof of Lemma 3, see equations (35) to (39)]{pmlr-v139-collins21a}. 
\end{proof}

\begin{lemma}
    Assume \Cref{ass}.
    Let $\delta_d = c d^{3/2}\sqrt{\log(\floor{rb})} / n^{\half}$ for some absolute constant $c > 0$.
    Then, for any $t \in [T]$, with probability at least $1 - e^{-111k^2\log(\floor{rb})}$, we have
    $$ 
    \norm{F^{(t)}}_F\leq \frac{\delta_d}{1-\delta_d} \ \norm{B^\star_{\mathsf{A}_t}}_2 \ \mathrm{dist}\pr{A^{(t)},Q A^\star}\eqsp,
    $$
    where $F^{(t)}$ is defined in \eqref{eq:F} and $A^{(t)}$ in \eqref{eq:A}. 
\end{lemma}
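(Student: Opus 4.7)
The plan is to observe that $F^{(t)}$ is merely a rearrangement of a single $\floor{rb}d$-dimensional vector into a matrix, so that its Frobenius norm reduces to the Euclidean norm of that vector; the two preceding lemmata then combine multiplicatively to give the announced bound.

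More precisely, I would set $u^{(t)} \defeq [G^{(t)}]^{-1}(G^{(t)}D^{(t)} - C^{(t)})\tilde{\beta}^\star \in \R^{\floor{rb}d}$. By the defining equation \eqref{eq:F}, the $d$ columns of $F^{(t)} \in \R^{\floor{rb} \times d}$ are exactly the successive size-$\floor{rb}$ blocks of $u^{(t)}$, whence
\begin{equation*}
\norm{F^{(t)}}_F^2 = \sum_{p=1}^d \norm{(u^{(t)})_p}_2^2 = \norm{u^{(t)}}_2^2.
\end{equation*}
Submultiplicativity of the operator $2$-norm then gives
\begin{equation*}
\norm{F^{(t)}}_F \leq \norm{[G^{(t)}]^{-1}}_2 \cdot \norm{(G^{(t)}D^{(t)} - C^{(t)})\tilde{\beta}^\star}_2,
\end{equation*}
so it suffices to bound each factor separately.

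For the first factor I would invoke \Cref{lemma:G}, yielding $\norm{[G^{(t)}]^{-1}}_2 \leq 1/(1-\delta_d)$ on an event of probability at least $1 - e^{-111k^3 \log(\floor{rb})}$. For the second factor I would invoke \Cref{lemma:GB}, reading its statement with $B^\star_{\mathsf{A}_t}$ identified with its vectorisation $\tilde{\beta}^\star$, exactly as done in its own proof (which begins with the identity $\norm{(GD-C)\tilde{\beta}^\star}_2^2 = \sum_i \norm{H^i \beta_i^\star}_2^2$). This gives $\norm{(G^{(t)}D^{(t)} - C^{(t)})\tilde{\beta}^\star}_2 \leq \delta_d\,\norm{B^\star_{\mathsf{A}_t}}_2\,\mathrm{dist}(A^{(t)},QA^\star)$ on an event of probability at least $1 - e^{-111k^2 \log(\floor{rb})}$. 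Multiplying the two factor bounds and taking a union bound over the two events then closes the argument at the announced probability, since the weaker of the two terms dominates.

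I do not expect a serious obstacle here. The argument is essentially bookkeeping; the only delicate point is the matrix/vector identification of $\tilde{\beta}^\star$ with $B^\star_{\mathsf{A}_t}$ that makes \Cref{lemma:GB} directly applicable to $u^{(t)}$. Once this convention is fixed consistently with the way \Cref{lemma:GB} was proved, no new concentration estimate or probabilistic argument is needed beyond the two already established.
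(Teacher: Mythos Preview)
Your proposal is correct and follows essentially the same approach as the paper: identify $\norm{F^{(t)}}_F$ with the Euclidean norm of the vector $[G^{(t)}]^{-1}(G^{(t)}D^{(t)}-C^{(t)})\tilde{\beta}^\star$, apply submultiplicativity, and combine \Cref{lemma:G} with \Cref{lemma:GB}. If anything, you are more explicit than the paper about the vectorisation identity and the union bound.
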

\begin{proof}
    By the Cauchy-Schwarz inequality, we have $\norm{F^{(t)}}_F = \norm{[G^{(t)}]^{-1}(G^{(t)}D^{(t)} - C^{(t)})B^\star_{\mathsf{A}_t}}_2\leq \delta_d \ \norm{B^\star_{\mathsf{A}_t}}_2 \leq \norm{[G^{(t)}]^{-1}}_2 \norm{(G^{(t)}D^{(t)} - C^{(t)})B^\star_{\mathsf{A}_t}}_2\leq \delta_d \ \norm{B^\star_{\mathsf{A}_t}}_2$. The proof is concluded by combining the upper bounds given in \Cref{lemma:G} and \Cref{lemma:GB}.
\end{proof}

\begin{lemma}
    \label{lemma:AA}
    Assume \Cref{ass} and let $\delta_d' = c d \sqrt{k} / \sqrt{\floor{rb}n}$ for some absolute positive constant $c$. 
    For any $t \in [T]$ and whenever $\delta_d' \leq d$, we have with probability at least $1 - e^{-110k} - e^{-110d^2 \log(\floor{rb})}$
    $$
    \frac{1}{\floor{rb}}\norm{\pr{\frac{1}{n}Q (\mathcal{A}^{(t)})^\dagger\mathcal{A}^{(t)}\pr{Z^{(t)}}Q - Z^{(t)}}^\top B^{(t)}_{\mathsf{A}_t}}_2 \leq \delta_d' \ \mathrm{dist}\pr{A^{(t)},QA^\star} \eqsp,
    $$
    where $B_{\mathsf{A}_t}^{(t)}$ is defined in \eqref{eq:B} and $Z^{(t)}$ is defined in \eqref{eq:Z}.
\end{lemma}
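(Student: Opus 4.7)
\textbf{Proof proposal for \Cref{lemma:AA}.}
The plan is to adapt the argument of the analogous concentration lemma in \citet{pmlr-v139-collins21a} (their Lemma~4), exploiting the fact that under \Cref{ass}-\ref{ass:1} and since $Q^{2}=\mathrm{I}_{k}$ with $Q$ diagonal, the rotated features $Q\phi_{i}^{\star}(x_{i}^{(j)})=\hat{\phi}_{i}(x_{i}^{(j)})$ remain standard Gaussian. Consequently, all the sub-Gaussian concentration inputs used in that proof carry over verbatim once $\phi_{i}^{\star}$ is replaced by $Q\phi_{i}^{\star}$ in the sensing operator.

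The first step is to expose $Z^{(t)}$ as a low-rank matrix. From \Cref{lemma:lemma_recursionB} and \eqref{eq:Z}, one has $Z^{(t)}=U^{(t)}[V^{(t)}]^{\top}$ with $U^{(t)}=[B^{(t)}_{\mathsf{A}_{t}},B^{\star}_{\mathsf{A}_{t}}]\in\R^{\floor{rb}\times 2d}$ and $V^{(t)}=[QA^{(t-1)},-A^{\star}]\in\R^{k\times 2d}$, so $\mathrm{rank}(Z^{(t)})\le 2d$. This structural property is responsible for the $d\sqrt{k}$ scaling appearing in $\delta_{d}'$.

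The second step is to decompose the error operator. A direct computation of $(\mathcal{A}^{(t)})^{\dagger}\mathcal{A}^{(t)}$ shows that, for each fixed $M\in\R^{\floor{rb}\times k}$, the error $\tfrac{1}{n}Q(\mathcal{A}^{(t)})^{\dagger}\mathcal{A}^{(t)}(M)Q-M$ splits as a sum over clients $i\in\mathsf{A}_{t}$ of independent zero-mean sub-exponential matrices, each built from a centred Gram $\Pi^{i}-\mathrm{I}_{k}$ (suitably conjugated by $Q$). After transposition and right-multiplication by $B^{(t)}_{\mathsf{A}_{t}}$, the quantity to bound is the operator norm of this independent sum; for a fixed $M$, a matrix Bernstein or Hanson--Wright inequality yields a tail of order $\exp(-c\floor{rb}n\tau^{2}/k)$ on its $(1/\floor{rb})$-rescaled deviation.

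The third step addresses the data-dependence of $Z^{(t)}$ via an $\epsilon$-net argument. Uniform high-probability bounds on $\|B^{(t)}_{\mathsf{A}_{t}}\|_{2}$ (via \Cref{lemma:lemma_recursionB} together with the control of $\|F^{(t)}\|_{F}$ established earlier) and $\|A^{(t-1)}\|_{2}=1$ (QR update in \eqref{eq:A}) restrict $Z^{(t)}$ to a rank-$2d$ ball of fixed radius, which admits an $\epsilon$-net of cardinality at most $(9/\epsilon)^{2d(\floor{rb}+k+2d)}$. Union-bounding the per-point tail over this net, truncating via a Lipschitz approximation, and tuning $\tau\asymp d\sqrt{k}/\sqrt{\floor{rb}n}$ produce the advertised bound with failure probability at most $e^{-110k}+e^{-110d^{2}\log(\floor{rb})}$. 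The main obstacle is precisely the data-dependence of $Z^{(t)}$: without the rank-$2d$ factorisation, a naive covering of $\R^{\floor{rb}\times k}$ would incur an unavoidable $\floor{rb}k$ cost in the log-covering number and destroy the sample complexity. The rank-$2d$ structure is what balances the covering-number cost $d(\floor{rb}+k)$ against the per-point concentration rate $\sqrt{k/(\floor{rb}n)}$, yielding exactly $\delta_{d}'=cd\sqrt{k}/\sqrt{\floor{rb}n}$ and matching the $d^{2}k/(E_{0}^{2}\floor{rb})$ term in the sample complexity of \Cref{theorem:1}.
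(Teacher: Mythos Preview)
Your proposal takes a genuinely different route from the paper, and I am not convinced the rate calculation closes.

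\textbf{What the paper does.} The paper does \emph{not} net over the matrix $Z^{(t)}$. Instead it writes the target matrix row-wise,
\[
\pr{\tfrac{1}{n}Q (\mathcal{A}^{(t)})^{\dagger}\mathcal{A}^{(t)}(Z^{(t)})Q - Z^{(t)}}^{\top} B^{(t)}_{\mathsf{A}_t}
=\tfrac{1}{n}\sum_{i\in\mathsf{A}_t}\sum_{j=1}^{n}\Bigl(\langle Q\phi_i^{\star}(x_i^{(j)}),z_i^{(t)}\rangle\, Q\phi_i^{\star}(x_i^{(j)})-z_i^{(t)}\Bigr)[\beta_i^{(t)}]^{\top},
\]
bounds the operator norm by $2\max_{u\in\mathcal{N}_k,\,v\in\mathcal{N}_d}$ over $1/4$-nets on the \emph{unit spheres} ($9^{k}\cdot 9^{d}$ points only), and then handles the data-dependence of $z_i^{(t)}$ and $\beta_i^{(t)}$ by first establishing, via \Cref{lemma:G}, \Cref{lemma:GB} and \Cref{lemma:lemma_recursionB}, the uniform row-norm bounds
\[
\|z_i^{(t)}\|^{2}\le 4d\,\mathrm{dist}^{2}(A^{(t)},QA^{\star}),\qquad \|\beta_i^{(t)}\|^{2}\le 4d,
\]
with probability $\ge 1-e^{-110d^{2}\log(\floor{rb})}$. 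Conditioning on this event fixes the sub-Gaussian norms of $\langle u,Q\phi_i^{\star}\rangle$, $\langle Q\phi_i^{\star},z_i^{(t)}\rangle$ and $\langle \beta_i^{(t)},v\rangle$, after which a \emph{scalar} Bernstein inequality over the small sphere net finishes the proof, exactly following \citet[Proof of Lemma~5]{pmlr-v139-collins21a} (not Lemma~4, which you cite). The low-rank structure of $Z^{(t)}$ is never invoked.

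\textbf{Where your argument is shaky.} In your step three you net over rank-$2d$ matrices in $\R^{\floor{rb}\times k}$, giving log-covering number $\asymp d(\floor{rb}+k)$, and you quote a per-point tail $\exp(-c\floor{rb}n\tau^{2}/k)$. Balancing these gives $\tau^{2}\gtrsim dk(\floor{rb}+k)/(\floor{rb}n)\asymp dk/n$, i.e.\ $\tau\gtrsim\sqrt{dk/n}$, which is \emph{not} $\delta_d'=cd\sqrt{k}/\sqrt{\floor{rb}n}$ in general (they coincide only when $d\asymp\floor{rb}$). Moreover, your per-point tail is stated for fixed $M$ but the right-multiplication by $B^{(t)}_{\mathsf{A}_t}$ is still data-dependent; you would also need to net or condition on $B^{(t)}_{\mathsf{A}_t}$, which you gloss over. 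The paper's strategy of bounding individual row norms $\|z_i^{(t)}\|,\|\beta_i^{(t)}\|$ first and then netting only on the test vectors $(u,v)$ sidesteps the $d\floor{rb}$ term in the covering number entirely, and this is precisely what makes the $\delta_d'$ rate come out with $\sqrt{\floor{rb}n}$ in the denominator.
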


\begin{proof}
    Let $t \in [T]$.
    Note that we have 
    \begin{align}
        \pr{\frac{1}{n}Q (\mathcal{A}^{(t)})^\dagger\mathcal{A}^{(t)}\pr{Z^{(t)}}Q - Z^{(t)}}^\top B^{(t)}_{\mathsf{A}_t} 
        &= \frac{1}{n} \sum_{i \in \mathsf{A}_t} \sum_{j=1}^m \langle Q\phi^\star_i(x_i^{(j)}),z_i^{(t)}\rangle Q\phi^\star_i(x_i^{(j)})\br{\beta_i^{(t)}}^\top - z_i^{(t)}\br{\beta_i^{(t)}}^\top\eqsp.
    \end{align}
    Let $\mathcal{S}^{k-1}$ and $\mathcal{S}^{d-1}$ denote the $k$-dimensional and $d$-dimensional unit spheres, respectively. 
    Then, by \citet[Corollary 4.2.13]{vershyninHighdimensionalProbabilityIntroduction2018}, we can define $\mathcal{N}_k$ and $\mathcal{N}_d$, $1/4$-nets over $\mathcal{S}^{k-1}$ and $\mathcal{S}^{d-1}$, respectively, such that $|\mathcal{N}_k|\leq 9^k$ and $|\mathcal{N}_d|\leq 9^d$.
    Therefore, by using \citet[Equation (4.13)]{vershyninHighdimensionalProbabilityIntroduction2018}, we have
    \begin{align}
        &\norm{\pr{\frac{1}{n}Q (\mathcal{A}^{(t)})^\dagger\mathcal{A}^{(t)}\pr{Z^{(t)}}Q - Z^{(t)}}^\top B^{(t)}_{\mathsf{A}_t}}_2^2 \\ 
        &= 2 \max_{u \in \mathcal{N}_d, v \in \mathcal{N}_k} u^\top \br{\frac{1}{n} \sum_{i \in \mathsf{A}_t} \sum_{j=1}^m \langle Q\phi^\star_i(x_i^{(j)}),z_i^{(t)}\rangle Q\phi^\star_i(x_i^{(j)})\br{\beta_i^{(t)}}^\top - z_i^{(t)}\br{\beta_i^{(t)}}^\top} v \\
        &= 2 \max_{u \in \mathcal{N}_d, v \in \mathcal{N}_k} \frac{1}{n} \sum_{i \in \mathsf{A}_t} \sum_{j=1}^m \langle Q\phi^\star_i(x_i^{(j)}),z_i^{(t)}\rangle \langle u, Q\phi^\star_i(x_i^{(j)})\rangle \langle \beta_i^{(t)},v \rangle - \langle u,z_i^{(t)} \rangle \langle \beta_i^{(t)}, v \rangle\eqsp.\label{eq:control_rec}
    \end{align}
    In order to control \eqref{eq:control_rec} using Bernstein inequality as in \Cref{lemma:GB}, we need to characterise, in particular, the sub-Gaussianity of $\langle u,z_i^{(t)}\rangle$ and $\langle \beta_i^{(t)},v\rangle$ which require a bound on $\|z_i^{(t)}\|$ and $\|\beta_i^{(t)}\|$, respectively.
    From \Cref{lemma:lemma_recursionB}, we have $[\beta_i^{(t)}]^\top = (\beta_i^\star)^\top (A^\star)^\top A^{(t)} - (z_i^{(t)})^\top$ which leads to
    \begin{align}
        \norm{z_i^{(t)}}^2 
        &=  \norm{QA^{(t)}(A^{(t)})^\top Q A^\star \beta_i^\star - Q A^{(t)} f_i^{(t)} - A^\star \beta_i^\star}_2^2 \\
        &= \norm{(QA^{(t)}(A^{(t)})^\top Q - \mathrm{I}_d) A^\star \beta_i^\star - Q A^{(t)} f_i^{(t)}}_2^2 \\
        &\leq 2\norm{(QA^{(t)}(A^{(t)})^\top Q - \mathrm{I}_d) A^\star}_2^2 \norm{\beta_i^\star}^2 + 2 \norm{f_i^{(t)}}^2 \\
        &\leq 2 d \ \mathrm{dist}^2(A^{(t)},QA^\star) + 2\norm{f_i^{(t)}}^2\eqsp.
    \end{align}
    Using \eqref{eq:F} and the Cauchy-Schwarz inequality, we have 
    \begin{align}
        \norm{f_i^{(t)}}^2 
        &= \norm{[G^{i,(t)}]^{-1}(G^{i,(t)}D^{i,(t)} - C^{i,(t)})\beta^\star_i}^2\\
        &\leq \norm{[G^{i,(t)}]^{-1}}^2_2 \norm{G^{i,(t)}D^{i,(t)} - C^{i,(t)}}^2_2 \norm{\beta^\star_i}^2\\
        &\leq d \norm{[G^{i,(t)}]^{-1}}^2_2 \norm{G^{i,(t)}D^{i,(t)} - C^{i,(t)}}^2_2\eqsp,\label{eq:fi}
    \end{align}
    where the last inequality follows from \Cref{ass}-\ref{ass:2}.

    Using \Cref{lemma:G} and \Cref{lemma:GB} and similarly to \citet[Equation (45)]{pmlr-v139-collins21a}, it follows for any $i \in \mathsf{A}_t$ that 
    $$
    \norm{z_i^{(t)}}^2_2 \leq 4 d \ \mathrm{dist}(A^{(t)},QA^\star)\eqsp,
    $$ 
    with probability at least $1 - e^{110d^2\log(\floor{rb})}$.

    Similarly, using \Cref{lemma:lemma_recursionB} and \eqref{eq:fi}, we have with probability at least $1 - e^{110d^2\log(\floor{rb})}$ and for any $i \in \mathsf{A}_t$, that 
    \begin{align}
        \norm{\beta_i^{(t)}}^2 \leq 2 \norm{[A^{(t)}]^\top Q A^\star \beta_i^\star}^2 + 2\norm{f_i^{(t)}}^2 \leq 4d \eqsp.
    \end{align}

    Besides, note we have 
    \begin{align}
        \mathbb{E}\br{\langle Q\phi^\star_i(x_i^{(j)}),z_i^{(t)}\rangle \langle u, Q\phi^\star_i(x_i^{(j)})\rangle \langle \beta_i^{(t)},v \rangle} 
        &= \langle u,z_i^{(t)}\rangle \langle \beta_i^{(t)},v \rangle \eqsp.
    \end{align}

    The proof is then concluded by applying the Bernstein inequality following the same steps as in the final steps of \citet[Proof of Lemma 5]{pmlr-v139-collins21a}.
\end{proof}

We are now ready to control $C_2$.

\begin{lemma}
    \label{lemma:C2}
    Assume \Cref{ass} and let $\delta_d' = c d \sqrt{k} / \sqrt{\floor{rb}n}$ for some absolute positive constant $c$. 
    For any $t \in \{0,\ldots,T-1\}$, $\eta >0$ and whenever $\delta_d' \leq d$, we have with probability at least $1 - e^{-110k} - e^{-110d^2 \log(\floor{rb})}$
    $$
    C_2 \leq \eta \delta'_d \ \mathrm{dist}\pr{A^{(t)},QA^\star}\norm{\pr{R^{(t+1)}}^{-1}}_2 \eqsp,
    $$
    where $C_2$ is defined in \eqref{eq:C2}, $A^{(t)}$ is defined in \eqref{eq:A} and $R^{(t)}$ comes from the QR factorisation of $\bar{A}^{(t)}$, see step 20 in \Cref{algo:FLIC-FedRep-linreg}.
\end{lemma}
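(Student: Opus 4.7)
The plan is to reduce the bound on $C_2$ to a direct application of \Cref{lemma:AA}. The only difference between the quantity appearing in $C_2$ and the one controlled in \Cref{lemma:AA} is the extra left multiplication by $[A^\star_\perp]^\top$, so the argument should be essentially a sub-multiplicativity step followed by invoking \Cref{lemma:AA}.

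First, I would factor $[A^\star_\perp]^\top$ out of the expression inside the operator norm in \eqref{eq:C2}, writing
\[
\pr{\tfrac{1}{n}[A^\star_\perp]^\top (Q\mathcal{A}^{(t+1)})^\dagger\mathcal{A}^{(t+1)}\pr{Z^{(t+1)}}Q - Z^{(t+1)}}^\top B^{(t+1)}_{\mathsf{A}_{t+1}}
= [B^{(t+1)}_{\mathsf{A}_{t+1}}]^\top\pr{\tfrac{1}{n} Q(\mathcal{A}^{(t+1)})^\dagger\mathcal{A}^{(t+1)}\pr{Z^{(t+1)}}Q - Z^{(t+1)}}^\top [A^\star_\perp]^\top,
\]
using the definition of $Z^{(t+1)}$ in \eqref{eq:Z} together with the fact that $[A^\star_\perp]^\top A^\star = 0$, so that the second term $-Z^{(t+1)}$ is unaffected by multiplication by $[A^\star_\perp]^\top$ (the $B^\star_{\mathsf{A}_{t+1}}[A^\star]^\top$ contribution in $Z^{(t+1)}$ vanishes, and only $B^{(t+1)}_{\mathsf{A}_{t+1}}[A^{(t)}]^\top Q$ survives, whose image is unaffected by the $[A^\star_\perp]^\top$ factoring trick).

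Next, by sub-multiplicativity of the spectral norm,
\[
\norm{\pr{\tfrac{1}{n}[A^\star_\perp]^\top (Q\mathcal{A}^{(t+1)})^\dagger\mathcal{A}^{(t+1)}\pr{Z^{(t+1)}}Q - Z^{(t+1)}}^\top B^{(t+1)}_{\mathsf{A}_{t+1}}}_2
\le \norm{[A^\star_\perp]^\top}_2 \norm{\pr{\tfrac{1}{n}Q(\mathcal{A}^{(t+1)})^\dagger\mathcal{A}^{(t+1)}\pr{Z^{(t+1)}}Q - Z^{(t+1)}}^\top B^{(t+1)}_{\mathsf{A}_{t+1}}}_2.
\]
Since $A^\star_\perp$ is an orthonormal basis of the orthogonal complement of $\mathrm{Span}(A^\star)$, we have $\norm{[A^\star_\perp]^\top}_2 = 1$.

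Finally, I would divide by $\floor{rb}$, apply \Cref{lemma:AA} to the remaining factor, and multiply by $\eta$ and $\norm{(R^{(t+1)})^{-1}}_2$ as dictated by the definition of $C_2$ in \eqref{eq:C2}, which yields the claimed bound with the same probability guarantee $1 - e^{-110k} - e^{-110d^2\log(\floor{rb})}$ as in \Cref{lemma:AA}. The only subtlety to be careful with is the algebraic manipulation showing that inserting the projector $[A^\star_\perp]^\top$ on the left does not spuriously create additional terms; once that is verified, the estimate is immediate. I do not anticipate any genuine obstacle, since all the concentration work has already been carried out in \Cref{lemma:AA}.
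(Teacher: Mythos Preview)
Your approach is essentially identical to the paper's: strip off the $[A^\star_\perp]^\top$ factor via sub-multiplicativity of the spectral norm (using $\norm{[A^\star_\perp]^\top}_2 = 1$) and then invoke \Cref{lemma:AA}. The paper skips your first displayed rewriting and the discussion of how $[A^\star_\perp]^\top$ interacts with $Z^{(t+1)}$ entirely, simply applying $\norm{[A^\star_\perp]^\top M}_2 \le \norm{M}_2$ in one line; your extra algebraic step is unnecessary (and the transpose placement in that display is slightly off), but the substance of the argument matches.
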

\begin{proof}
Let $t \in \{0,\ldots,T-1\}$ and $\eta > 0$.
Then, whenever $\delta_d' \leq d$, we have with probability at least $1 - e^{-110k} - e^{-110d^2 \log(\floor{rb})}$, we have
\begin{align}
     C_2 
     &= \frac{\eta}{\floor{rb}} \norm{\pr{\frac{1}{n}[A^\star_\perp]^\top (Q\mathcal{A}^{(t+1)})^\dagger\mathcal{A}^{(t+1)}\pr{Z^{(t+1)}}Q - Z^{(t+1)}}^\top B^{(t+1)}_{\mathsf{A}_{t+1}}}_2 \norm{\pr{R^{(t+1)}}^{-1}}_2\\
     &\leq \frac{\eta}{\floor{rb}} \norm{\pr{\frac{1}{n} (Q\mathcal{A}^{(t+1)})^\dagger\mathcal{A}^{(t+1)}\pr{Z^{(t+1)}}Q - Z^{(t+1)}}^\top B^{(t+1)}_{\mathsf{A}_{t+1}}}_2 \norm{\pr{R^{(t+1)}}^{-1}}_2 \\
     &\leq \eta \delta'_d \ \mathrm{dist}\pr{A^{(t)},QA^\star}\norm{\pr{R^{(t+1)}}^{-1}}_2\eqsp,
\end{align}
where we used the Cauchy-Schwarz inequality in the second inequality and \Cref{lemma:AA} for the last one.
\end{proof}

\noindent \textbf{Control of $\|\pr{R^{(t+1)}}^{-1}\|_2$.}
To finalise our proof, it remains to bound $\|\pr{R^{(t+1)}}^{-1}\|_2$. The associated result is depicted in the next lemma.

\begin{lemma}
    \label{lemma:R}
    Define $\bar{\delta}_d = \delta_d + \delta_d'$ where $\delta_d$ and $\delta_d'$ are defined in \Cref{lemma:G} and \Cref{lemma:GB}, respectively.
    Assume \Cref{ass}. Then, we have with probability at least $1 - e^{-110k} - e^{-110d^2 \log(\floor{rb})}$,
    $$
    \norm{\pr{R^{(t+1)}}^{-1}}_2 \leq \pr{1 - 4 \eta \frac{\bar{\delta}_d}{(1-\bar{\delta}_d)^2}\bar{\sigma}_{\max,\star}^2}^{-\half}\eqsp.
    $$
\end{lemma}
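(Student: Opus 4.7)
The plan is to exploit the QR factorisation structure. Since $\bar{A}^{(t+1)} = A^{(t+1)} R^{(t+1)}$ with $A^{(t+1)}$ having orthonormal columns, one has $[\bar{A}^{(t+1)}]^\top \bar{A}^{(t+1)} = [R^{(t+1)}]^\top R^{(t+1)}$ and therefore $\|(R^{(t+1)})^{-1}\|_2^2 = 1/\sigma_{\min}([\bar{A}^{(t+1)}]^\top \bar{A}^{(t+1)})$. The task thus reduces to lower-bounding the smallest eigenvalue of the Gram matrix by $1 - 4\eta \bar{\delta}_d \bar{\sigma}_{\max,\star}^2/(1-\bar{\delta}_d)^2$.

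I would then write $\bar{A}^{(t+1)} = A^{(t)} - \eta \Psi^{(t)}$, where $\Psi^{(t)}$ is the averaged gradient step of line 14 of \Cref{algo:FLIC-FedRep-linreg}; explicitly, following \eqref{eq:A}, $\Psi^{(t)} = (1/(\floor{rb} n)) [(\mathcal{A}^{(t+1)})^\dagger \mathcal{A}^{(t+1)}(Z^{(t+1)})]^\top Q B^{(t+1)}_{\mathsf{A}_{t+1}}$. Since $[A^{(t)}]^\top A^{(t)} = \mathrm{I}_d$, expanding the Gram matrix yields
\[
[\bar{A}^{(t+1)}]^\top \bar{A}^{(t+1)} = \mathrm{I}_d - \eta\bigl([A^{(t)}]^\top \Psi^{(t)} + [\Psi^{(t)}]^\top A^{(t)}\bigr) + \eta^2 [\Psi^{(t)}]^\top \Psi^{(t)}.
\]
By Weyl's inequality applied to this symmetric perturbation of $\mathrm{I}_d$, it then suffices to bound the spectral norm of the perturbation $\eta\bigl([A^{(t)}]^\top \Psi^{(t)} + [\Psi^{(t)}]^\top A^{(t)}\bigr) - \eta^2 [\Psi^{(t)}]^\top \Psi^{(t)}$ by $4\eta \bar{\delta}_d \bar{\sigma}_{\max,\star}^2/(1-\bar{\delta}_d)^2$.

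To produce this bound, I would substitute $B^{(t+1)}_{\mathsf{A}_{t+1}} = B^\star_{\mathsf{A}_{t+1}}[A^\star]^\top Q A^{(t)} - F^{(t)}$ from \Cref{lemma:lemma_recursionB} into $\Psi^{(t)}$ and decompose the resulting matrix into contributions controlled by the previously established lemmata: \Cref{lemma:AA} provides the adjoint-concentration factor $\delta_d'$, while chaining \Cref{lemma:G} and \Cref{lemma:GB} gives $\|F^{(t)}\|_F \leq (\delta_d/(1-\delta_d)) \|B^\star_{\mathsf{A}_{t+1}}\|_2 \mathrm{dist}(A^{(t)},QA^\star)$. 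Normalising $\|B^\star_{\mathsf{A}_{t+1}}\|_2^2$ by $\floor{rb}$ produces the factor $\bar{\sigma}_{\max,\star}^2$, and a union bound over these three events delivers the stated probability $1 - e^{-110k} - e^{-110 d^2 \log(\floor{rb})}$. The main obstacle will be the bookkeeping: the deviations $\delta_d$ and $\delta_d'$ must be combined into a single $\bar{\delta}_d = \delta_d + \delta_d'$ under a common denominator $(1-\bar{\delta}_d)^2$ rather than cascading the $1/(1-\delta_d)$ factors of \Cref{lemma:G}, and the numerical constant $4$ should emerge by treating the symmetric block $[A^{(t)}]^\top \Psi^{(t)} + [\Psi^{(t)}]^\top A^{(t)}$ as a whole (together with the quadratic residue $\eta[\Psi^{(t)}]^\top \Psi^{(t)}$) rather than bounding each summand by a loose triangle inequality.
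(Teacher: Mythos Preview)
Your proposal is correct and follows the same route as the paper, which simply writes ``The proof follows from \citet[Proof of Lemma 6]{pmlr-v139-collins21a}''; your sketch is precisely a reconstruction of that argument (QR identity $\sigma_{\min}(R^{(t+1)})=\sigma_{\min}(\bar{A}^{(t+1)})$, Gram expansion around $\mathrm{I}_d$, Weyl, then feeding in \Cref{lemma:lemma_recursionB}, \Cref{lemma:G}, \Cref{lemma:GB} and \Cref{lemma:AA}). You have in fact supplied more detail than the paper itself.
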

\begin{proof}
    The proof follows from \citet[Proof of Lemma 6]{pmlr-v139-collins21a}.
\end{proof}

\section{Experimental Details}
\label{sec:expe}

%
%

\subsection{Reference Distribution for Regression}

For regression problem, our goal is to map all samples for all clients into a 
common latent subspace, in which some structural information about regression problem is preserved.
As such, in order to reproduce the idea of using a Gaussian mixture model as a anchor
distribution, we propose to use an infinite number of Gaussian mixtures in which the distribution of $x$ associated to a response $y$  is going to  be mapped on a unit-variance Gaussian distribution whose mean depends uniquely on $y$. 
Formally, we define the anchor distribution as 
$$
\mu_y = \gauss(\mathrm{m}^{(y)},\mathrm{I})
$$
where $\mathrm{m}^{(y)}$ is a vector of dimension $d$ that is uniquely defined. In practice, we consider  as $\mathrm{m}^{(y)} = y a + (1-y) b$ where $a$ and $b$ are two vectors in $\mathbb{R}^d$.

When training \ours, this means that for a client $i$, we can compute $\mathrm{W}_2^2 \pr{\mu_y,\nu_{\phi_i}^{(y)}}$ based on the set of  training samples $\{x,y\}$. In practice, if for a given batch of samples we have a single sample  $x$, then the Wasserstein distance boils
to $\|\phi_i(x) - \mathrm{m}^{(y)}\|_2^2$.

\subsection{Data Sets}
We provide some details about the datasets we used for our numerical experiments

\subsubsection{Toy data sets}
The first toy dataset, denoted as \emph{noisy features}, is a $20$-class classification problem in which the features for a given class is obtained by sampling a Gaussian distribution of dimension $5$, with random mean and Identity covariance matrix. 
For building the training set, we sample $2000$ examples for each class and equally share
those examples among clients who  hold that class. Then, in order to generate some class imbalances on clients, we randomly subsample examples on all clients.
For instance, with $100$ clients and 2 classes per clients, this results in a problem
with a total of about $16$k samples with a minimal number of samples of $38$ and a maximal one of $400$. In order to get different dimensionality, we randomly append on each client dataset some Gaussian random noisy features with dimensionality varying from $1$ to $10$.

The second toy dataset, denoted as \emph{linear mapping}, is a $20$-class classification problem where each class-conditional distribution is
Gaussian distribution of dimension $5$, with random mean and random diagonal covariance
matrix. As above, we generate $2000$ samples per class and distribute and subsample them across clients in the similar way, leading to a total number of samples of about $15k$.  The dimensionality perturbation is modelled
by a random (Gaussian)linear transformation that maps the original samples to a space
which dimension goes up to $50$.

\subsubsection{MNIST-USPS}
We consider a digit classification problem with the original MNIST and USPS data sets which are respectively of dimension $28 \times 28$ and $16 \times 16$ and we assume that a client hosts either a subset of MNIST or USPS data set. We use the natural train/test split of those datasets and randomly share them accross clients. 

\subsubsection{TextCaps data set}

The TextCaps data set \citep{sidorov2020textcaps} is an Image captioning dataset for which goal is
to develop a model able to produce a text that captions the image.
The dataset is composed of about 21k images and 110k captions and
each image also comes with an object class. For our purpose, we have extracted
pair of $14977$ images and captions from the following four classes \textit{Bottle}, \textit{Car}, \textit{Food} and \textit{Book}. At each run, those pairs are separated in $80\%$ train and  $20\%$ test sets. Examples from the TextCaps datasets are presented
in \Cref{fig:textcaps}.
 Images and captions are
represented by vectors by feeding them respectively to a pre-trained ResNet18 
and a pretrained Bert, leading to vectors of size $512$ and $768$. 

Each client holds  either the image or the text representation of subset of
examples and the associated  vectors are randomly pruned of up to $10\%$ coordinates.
As such, all clients hold dataset with different dimensionality.

\subsection{Models and Learning Parameters}
\label{append:models}
For the toy problems and the \emph{TextCaps data set}, as a local transformation functions we used a fully connected neural network with one input, one hidden layer   and one output layers. The number
of units in hidden layer has been fixed to $64$ and the dimension of latent space as
been fixed to $64$.  ReLU activation has been applied after the input and hidden layers. For the digits dataset, we used
a CNN model with 2 convolutional layers  followed by a max-pooling layer and a sigmoid activation function. Once flattened, we have a
one fully-connected layer and {ReLU activation}. The latent dimension is fixed to $64$. 

For all datasets, as for the local model $g_{\theta_i}$, in order to be consistent with competitors, we first considered a single layer linear model implementing the local classifier as well as a model with one input layer (linear units followed by a LeakyReLU activation funcion) denoting the shared representation layer and an output linear layer.

For training, all methods use Adam with a default learning rate of $0.001$ and a batch size of $100$. Other hyperparameters have been set as follows. Unless specified, the regularization strength
$\lambda_1$ and $\lambda_2$ have been fixed to $0.001$. Local sample batch size is set to $100$ and the participation rate $r$ to $0.1$. For all experiments, we have set the number of communication round $T$ to $50$ and the number of local epochs to respectively $10$ and $100$ for the real-world and toy datasets. For \ours, as in FedRep those local epochs is followed by
one epoch for representation learning. We have trained the local embedding functions for $100$ local epochs and a batch size of $10$  for toy datasets and TextCaps and while of $100$ for  MNIST-USPS. 
Reported accuracies are computed
after local training for all clients.

\subsection{Ablating Loss Curves}

\begin{figure*}
    \begin{center}
        \includegraphics[width=5.5cm]{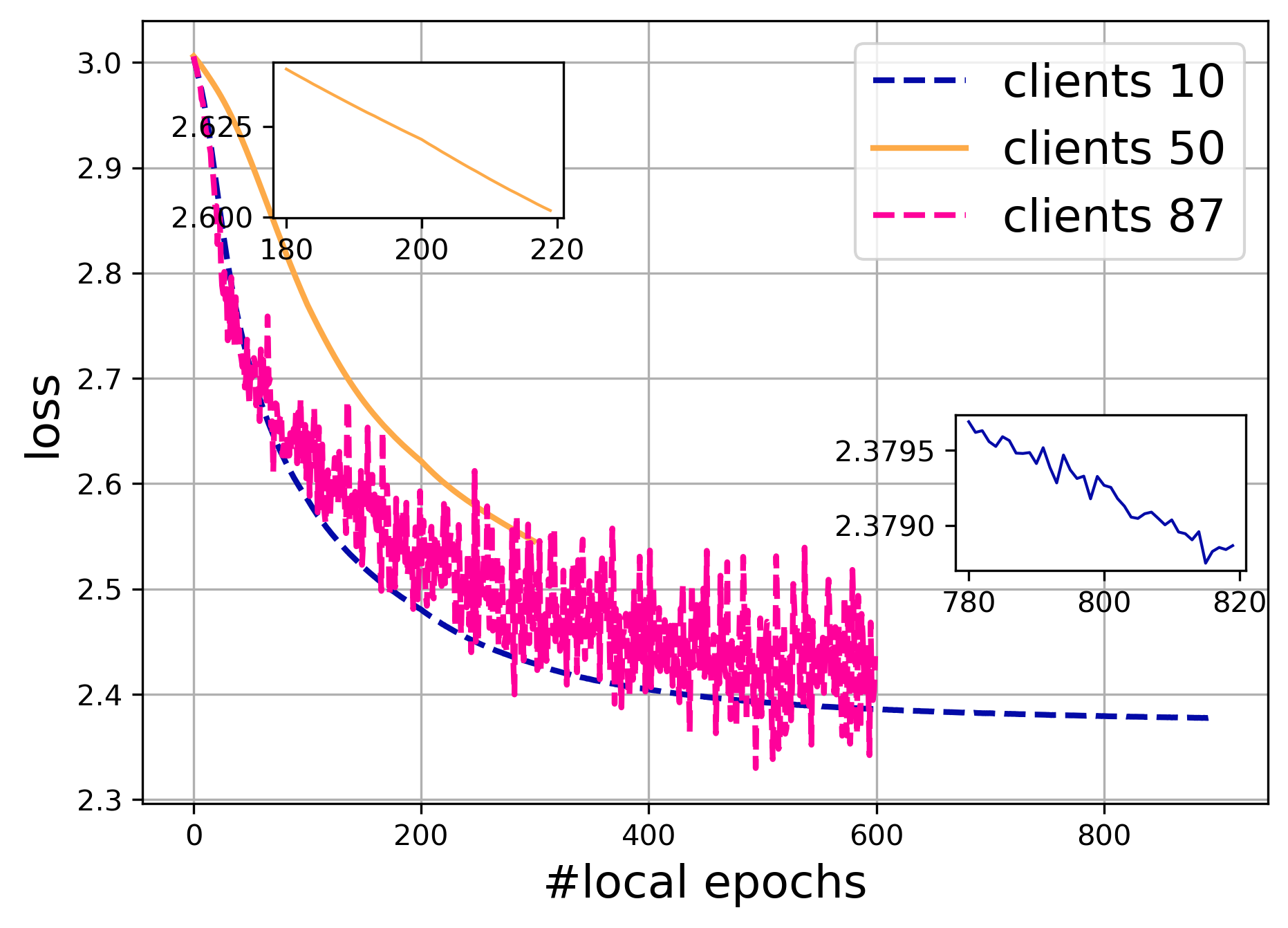}
        \includegraphics[width=5.5cm]{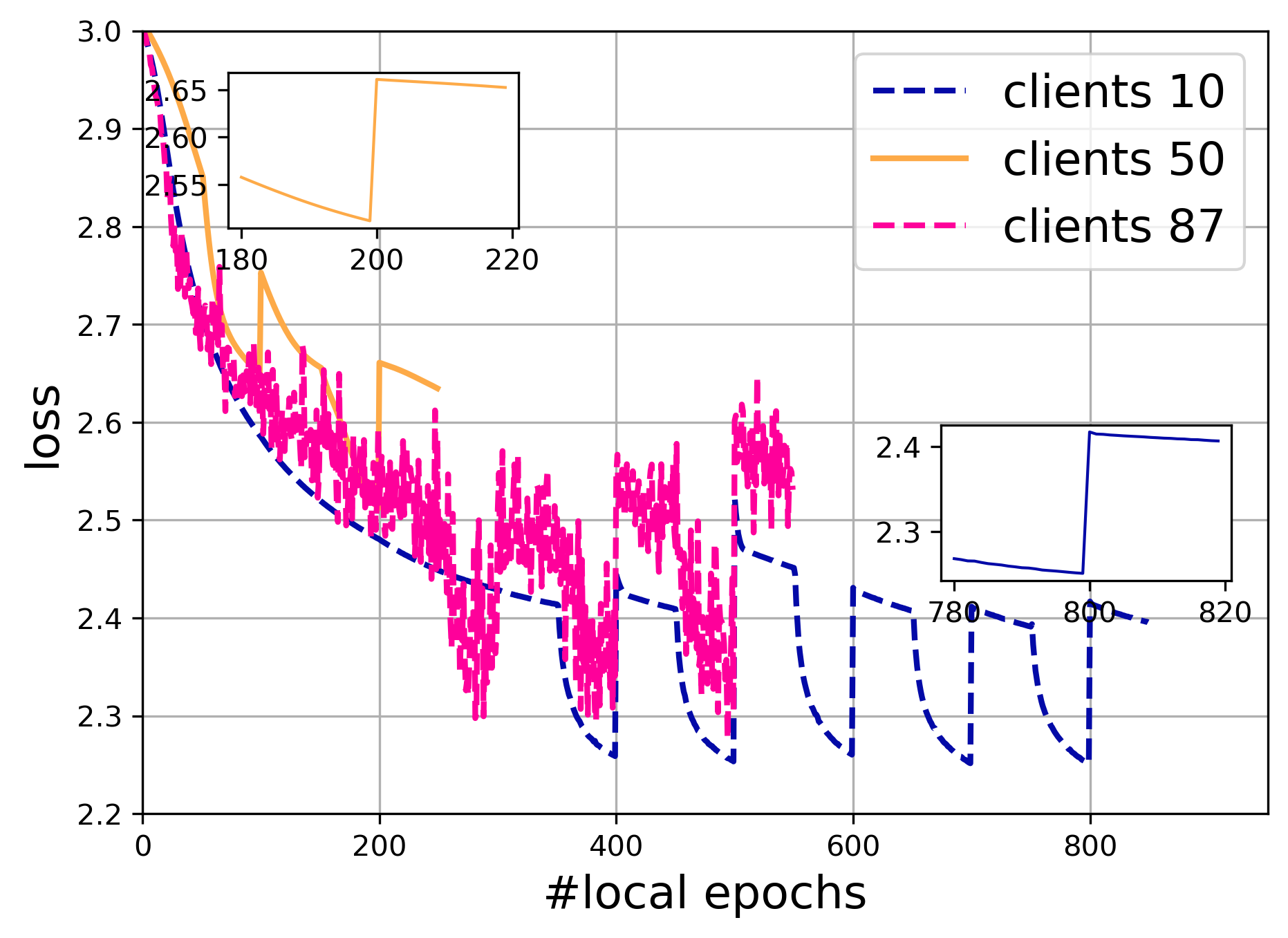}
        \includegraphics[width=5.5cm]{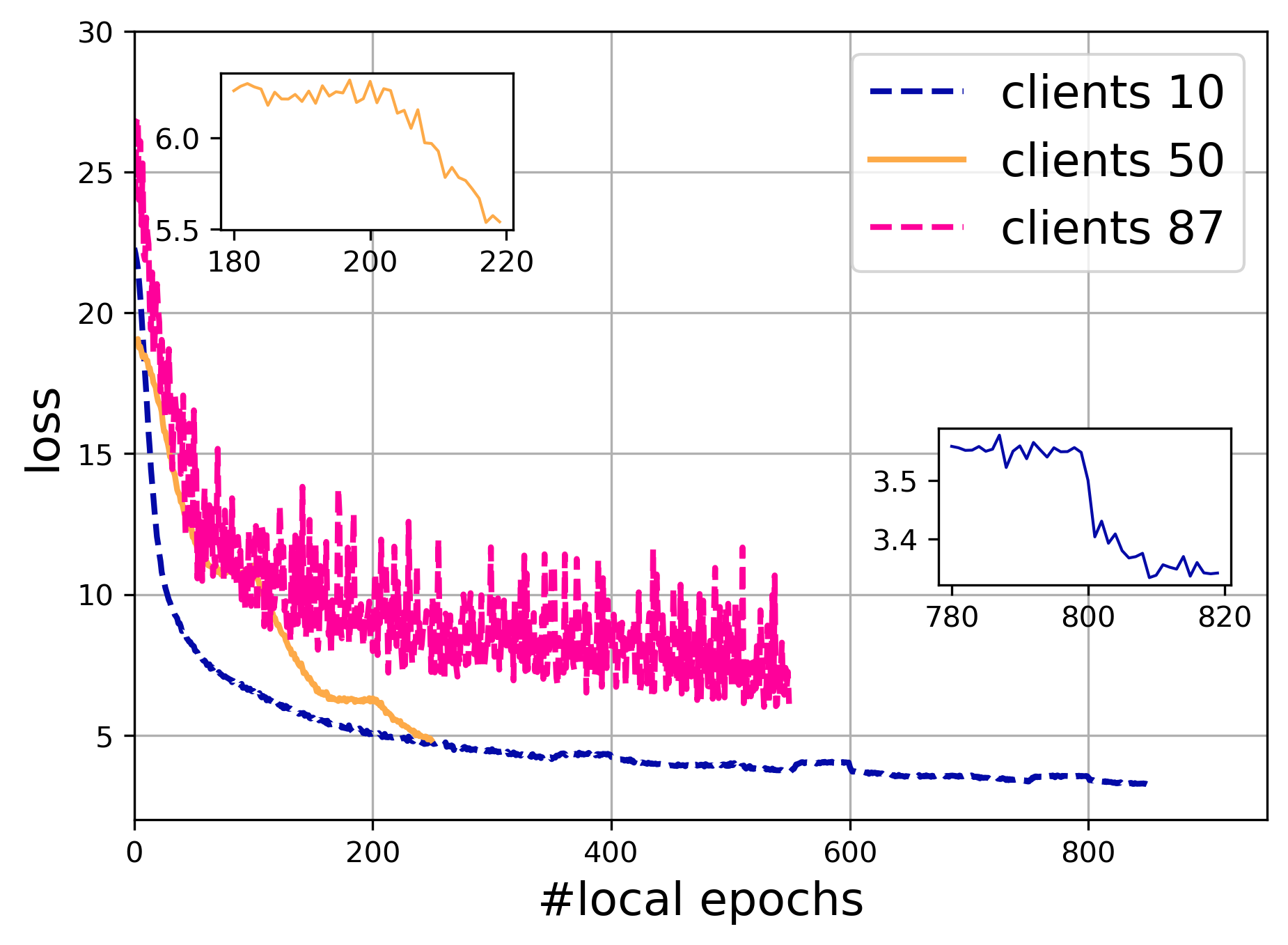}
    \end{center}
    \caption{Evolution of the local loss curve of three different clients for three different learning situations. See text for details. \label{fig:losscurve}}
\end{figure*}

In order to gain some understanding on the learning mechanism that involves local and global training respectively due to the local embedding functions, the local classifier and the global representation learning, we propose to look at local loss curves across different clients.

Here, we have considered the \emph{linear mapping} toy dataset as those used
in the toy problem analysis. However, the learning parameters we have chosen are different from those we have used to produce the results so as to highlight some specific features. 
The number of epochs (communication rounds) is set to $100$ with a client activation
ration of $0.1$. Those local epochs are shared for either training the local parameters or the global ones (note that in our reference \Cref{algo:FLIC-FEDREP}, the global parameter is updated only once for each client)
Those latter are trained starting after the   $20$-th communication round and 
in this case, the local epochs are equally shared between local and global parameter
updates. Note that
because of the randomness in the client selection at each epoch, the total number of local epochs is different from client to client.   We have evaluated three learning situations and plotted the 
loss curves for each client.
\begin{itemize}
    \item the local embedding functions and the global models are kept fixed, and only the local classifier is trained. Examples of loss curves for $3$ clients are presented in the left plot of \Cref{fig:losscurve}. For this learning situation,    there is no shared global parameters that are trained locally. Hence, the loss curve is typical of those  obtained by stochastic gradient descent with a smooth transition, at multiple of $100$ local epochs, when  a given client is retrained after a communication rounds.
    \item the local embedding functions are kept fixed, while the classifier and global parameters are updated using half of the local epochs each. This situation is interesting and reported in middle plot in \Cref{fig:losscurve}. We can see that for 
    some rounds of $100$ local epochs, a strong drop in the loss occurs at
    starting at the $50$th local epoch because the global parameters are being updated. Once the local update of a client is finished the global parameter is sent back to the server and all updates of global parameters are averaged by the server. When
    a client is selected again for local updates, it is served with a new global parameter (hence a new loss value ) which causes the discontinuity in the loss curve at the beggining of each local update.

    \item all the part (local embedding functions, global parameter and the classifier) of the models are trained. Note at first that the loss value for those curves (right plot in \Cref{fig:losscurve}) is larger than for the two first most left plots as the Wasserstein distance to the anchor distribution  is now taken into account and tends to dominate the loss.  The loss curves are globally decreasing with larger drops in loss at the beginning of local epochs.
    
\end{itemize}

\subsection{On Importance of Alignment Pre-Training and Updates.} 
\label{sec:alignment}

We have analyzed the impact of pretraining the local transformation functions and their updates during learning for fixed reference distribution. We have   considered two learning situations : one in which they are updated during local training (as usual) and another one in they are kept fixed all along the training. 
We have chosen the  setting with $100$ users and have kept the same experimental settings as for the performance figure and made only varied the number of epochs considered for pretraining from
$1$ to $200$. 
\begin{figure}[t]
    \begin{center}
        \includegraphics[width=7cm]{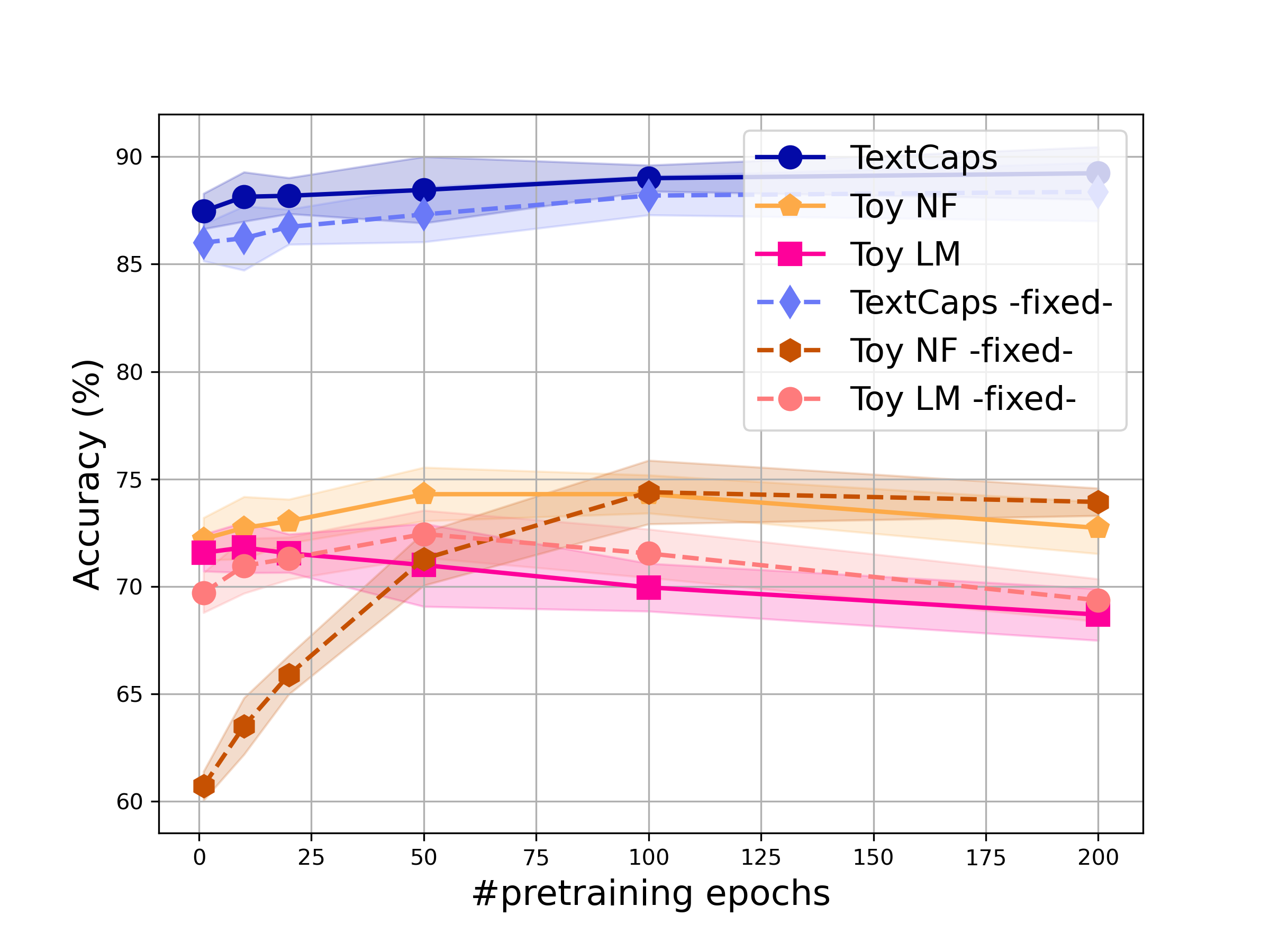}
        \caption{Impact of epochs used for pretraining $\phi_i$ on the model accuracy as well as updating those functions during the training. Results for three different datasets are reported. Plain and dashed curves are respectively related to local training with
                and without updates $\phi_i$. 
                \label{fig:pretraining}}
    \end{center}
    
    \end{figure}
    Results, averaged over $5$ runs are shown in \Cref{fig:pretraining}.
    We remark that for the three datasets, increasing the number of epochs up to
    a certain number tends to increase performance, but overfitting may occur.
    The latter is mostly reflected in the \emph{toy linear mapping} dataset for which $10$ to $50$
    epochs is sufficient for good pretraining.  Examples of how classes evolves
    during pretraining are illustrated in \Cref{fig:tsne-toy}, through \emph{t-sne} projection. We also illustrate cases of how pretraining impact on the test set and may lead to overfitting as shown in the supplementary \Cref{fig:tsne-toybig}.

\subsection{On the Impact of the Participation Rate}
\label{sec:participation}
We have analyzed the effect of the participation rate of each client into our federated learning approach. \Cref{fig:participation} reports the accuracies, averaged over $3$ runs, of our approach for the toy datasets and the \emph{TextCaps} problem with respect
to the partication rate at each round. We can note that the proposed approach is rather
robust to the participation rate but may rather suffer from overfitting due
to overtraining of local models. On the left plot, performances, measured
after the last communication round, for \emph{TextCaps} is stable over participation
rate while those performances tend to decrease for the \emph{toy} problems. We
associate these decrease to overfitting since  when we report (see right plot)
the best performance over communication rounds (and not the last one), they are stable
for all problems. This suggests that number of local epochs may be dependent
to the task on each client and the  client participation rate.

\begin{figure*}
    \begin{center}
        ~\hfill\includegraphics[width=7.5cm]{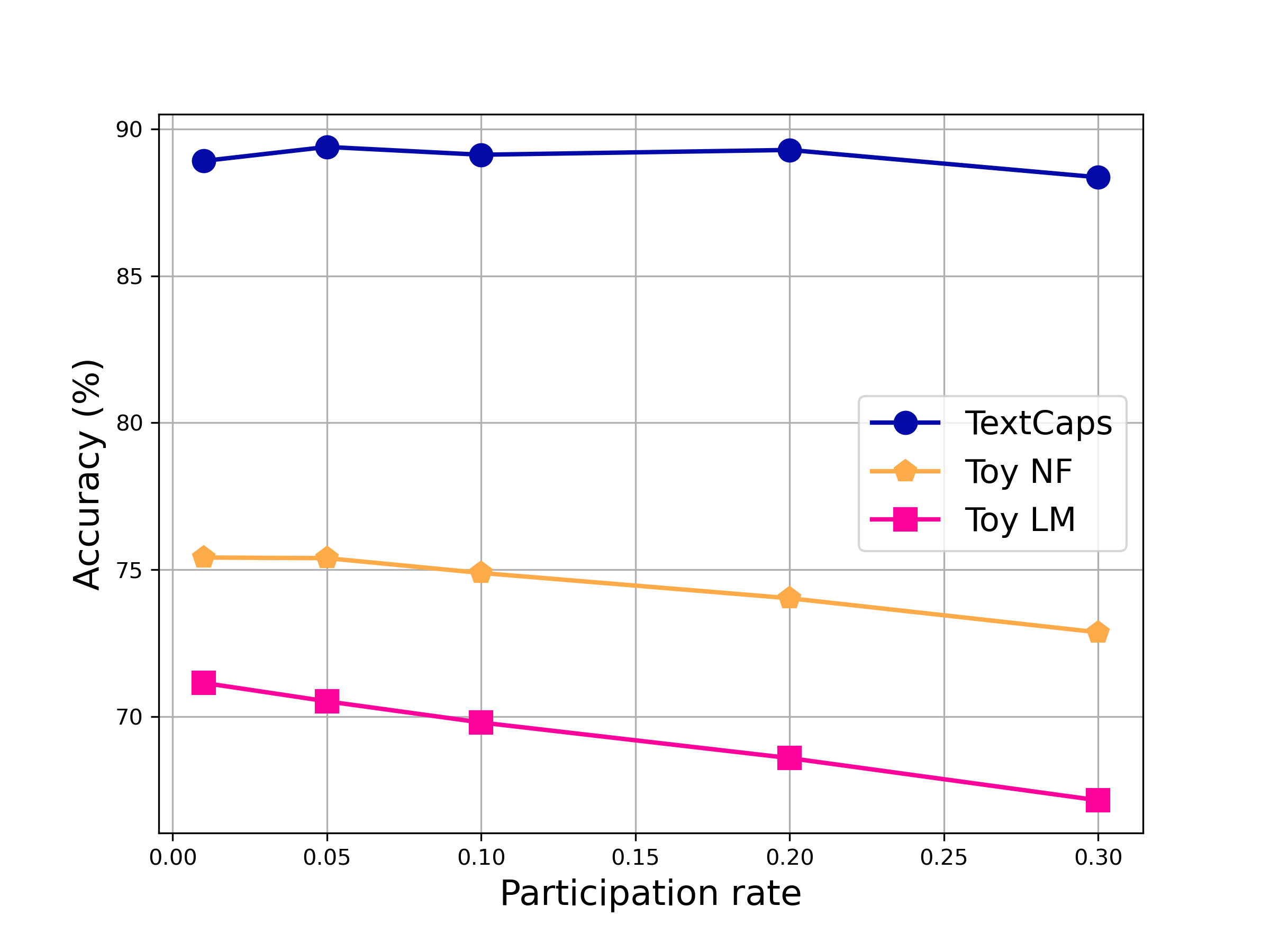}~\hfill~
        \includegraphics[width=7.5cm]{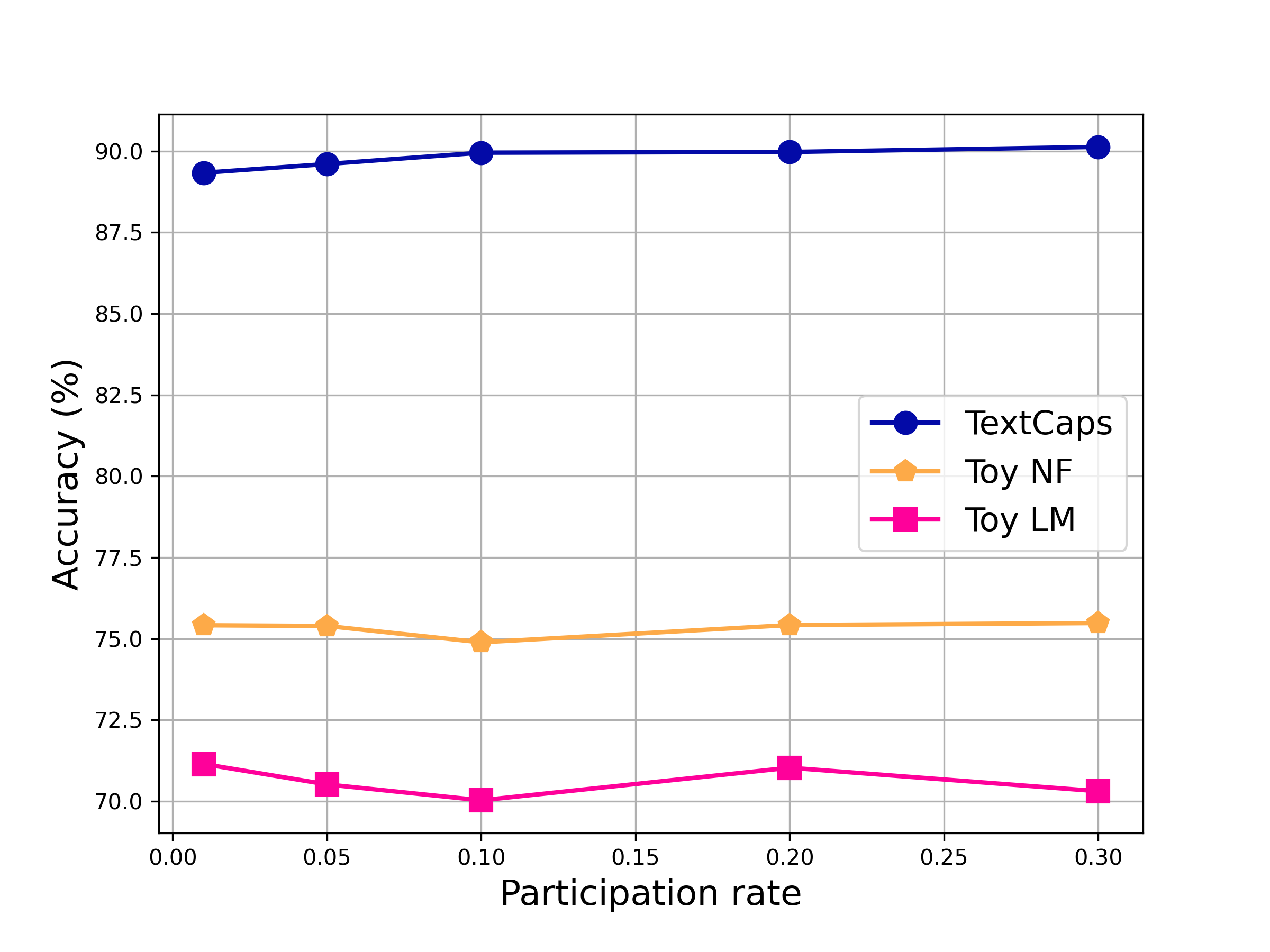}\hfill~
    \end{center}
\caption{Evolution of the performance of our \ours-Class algorithm with respects to
the participation rate of clients, using the same experimental setting as in
\Cref{fig:toyperf}. (left) evaluating performance after last communication rounds, (right)
best performance across communication rounds. \label{fig:participation} }
\end{figure*}

\begin{figure*}
    \begin{center}
    \includegraphics[width=5cm]{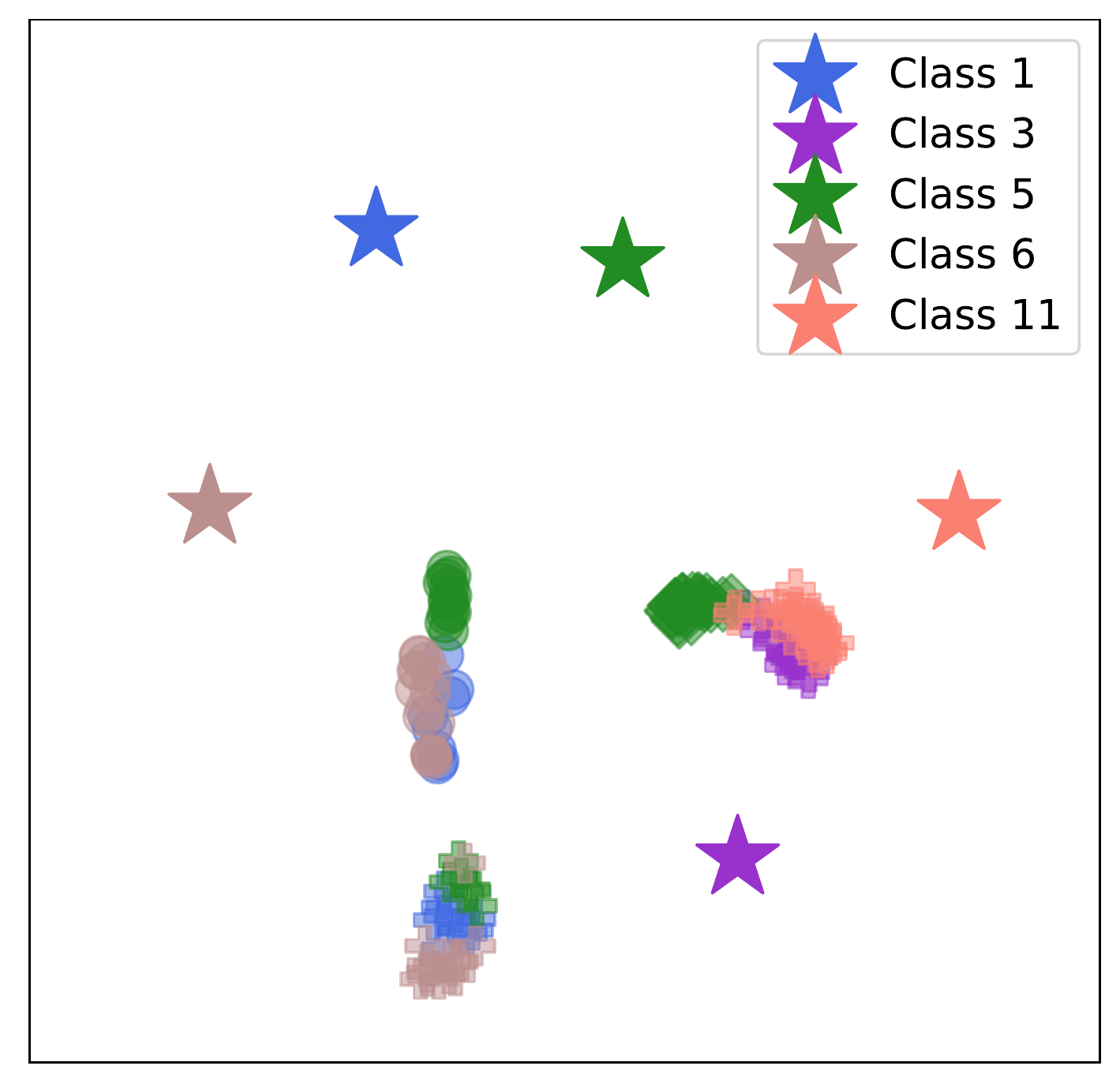}
    \includegraphics[width=5cm]{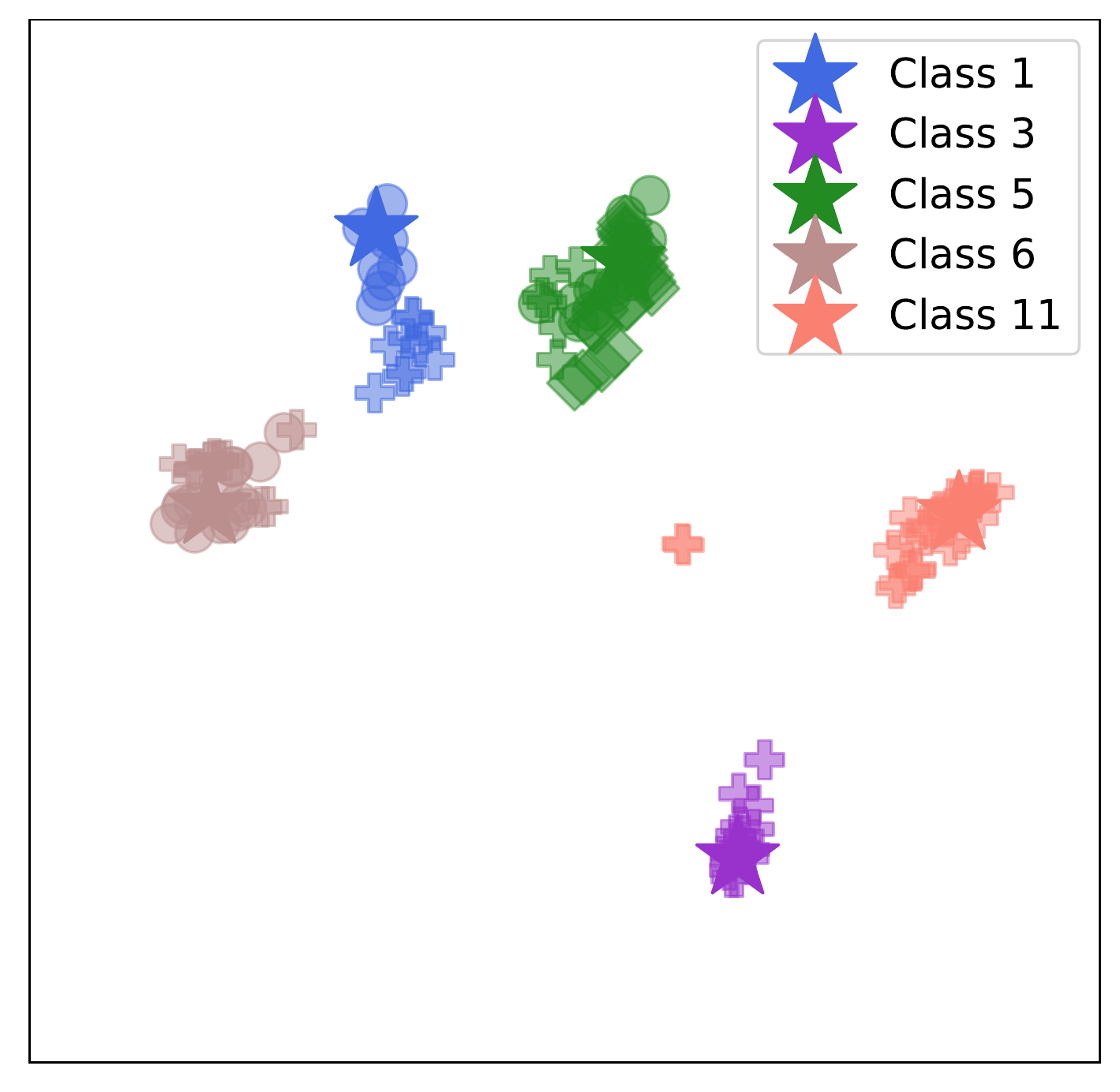}
    \includegraphics[width=5cm]{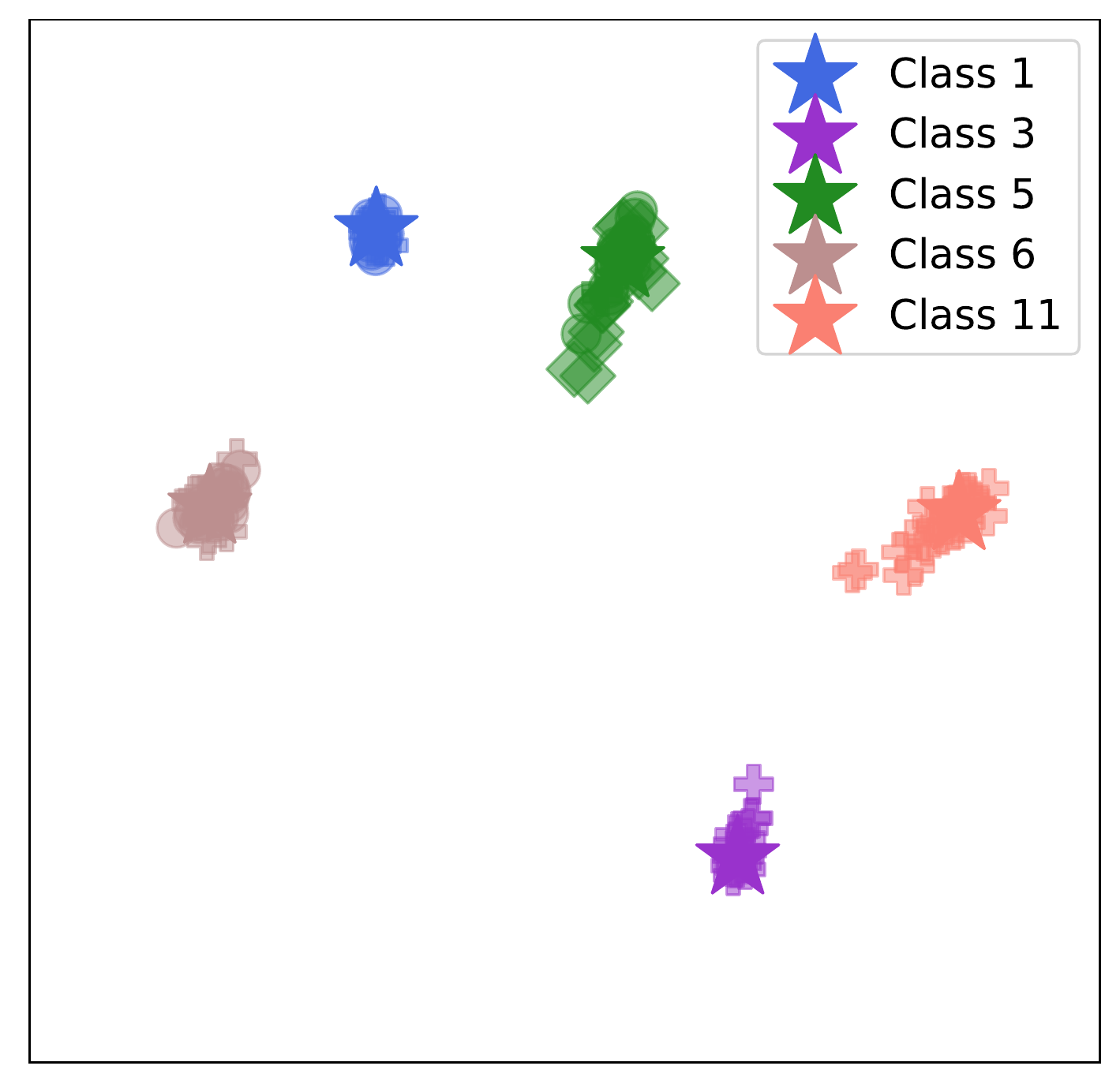} \\
    \includegraphics[width=5cm]{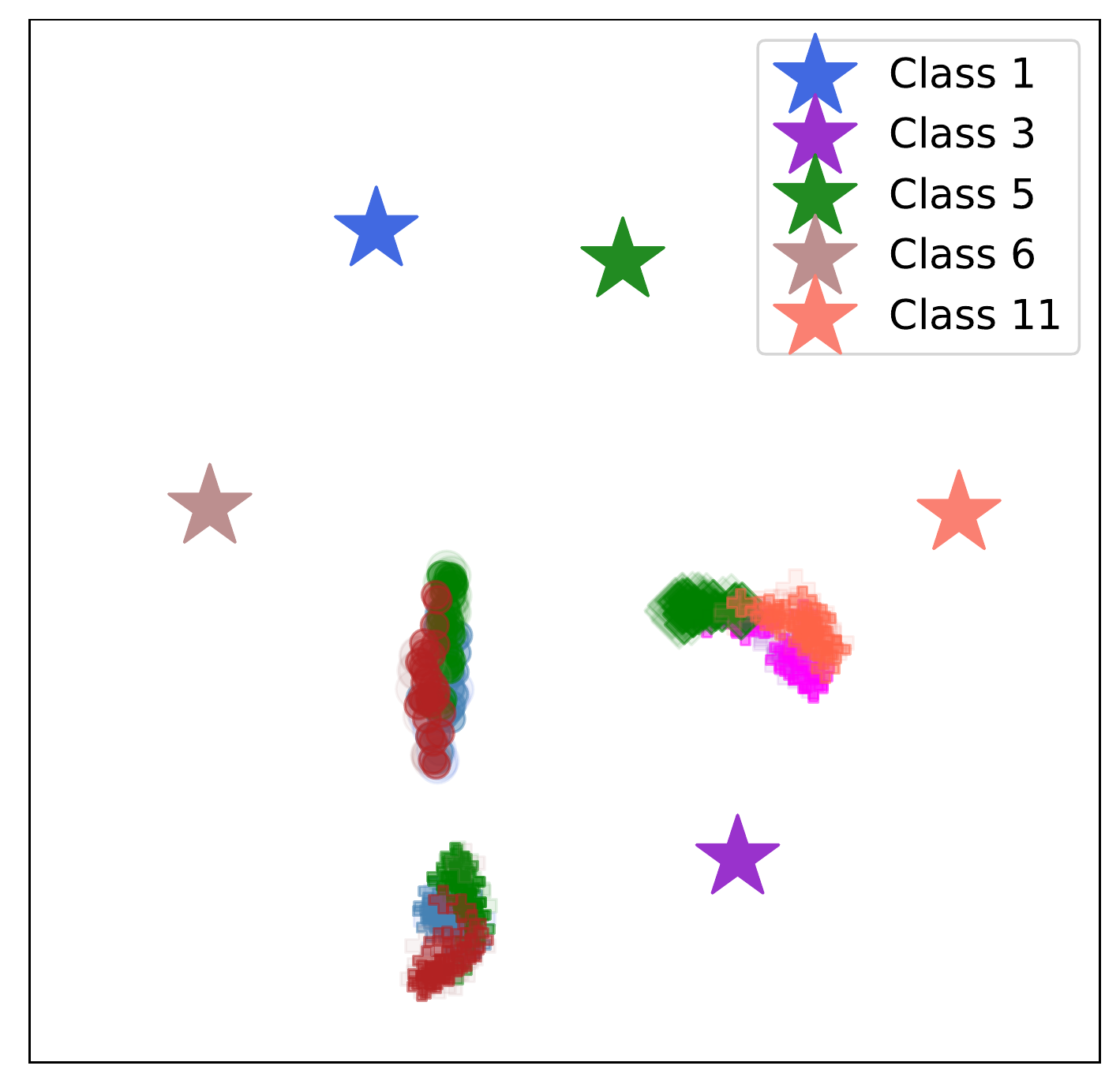}
    \includegraphics[width=5cm]{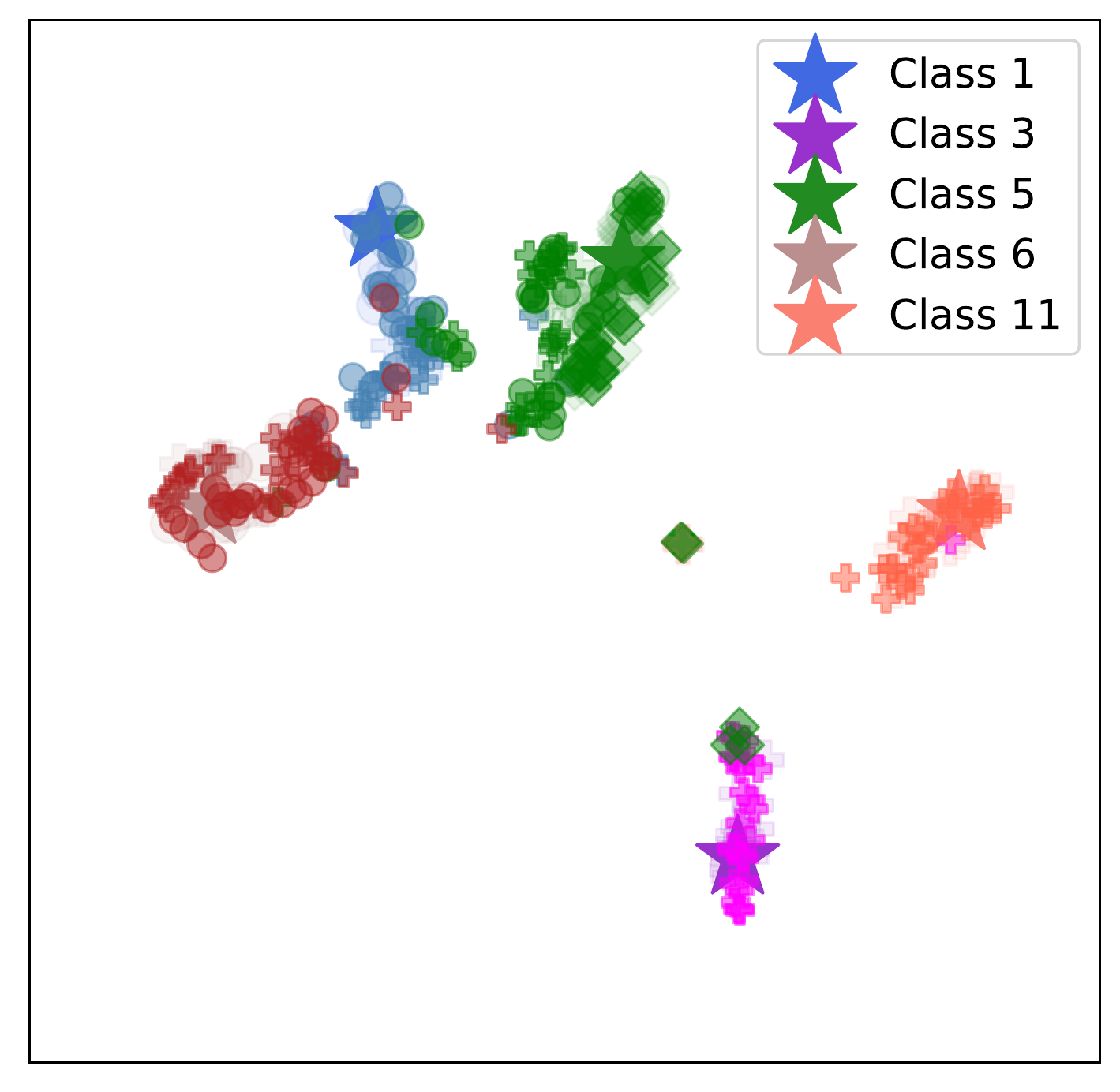}
    \includegraphics[width=5cm]{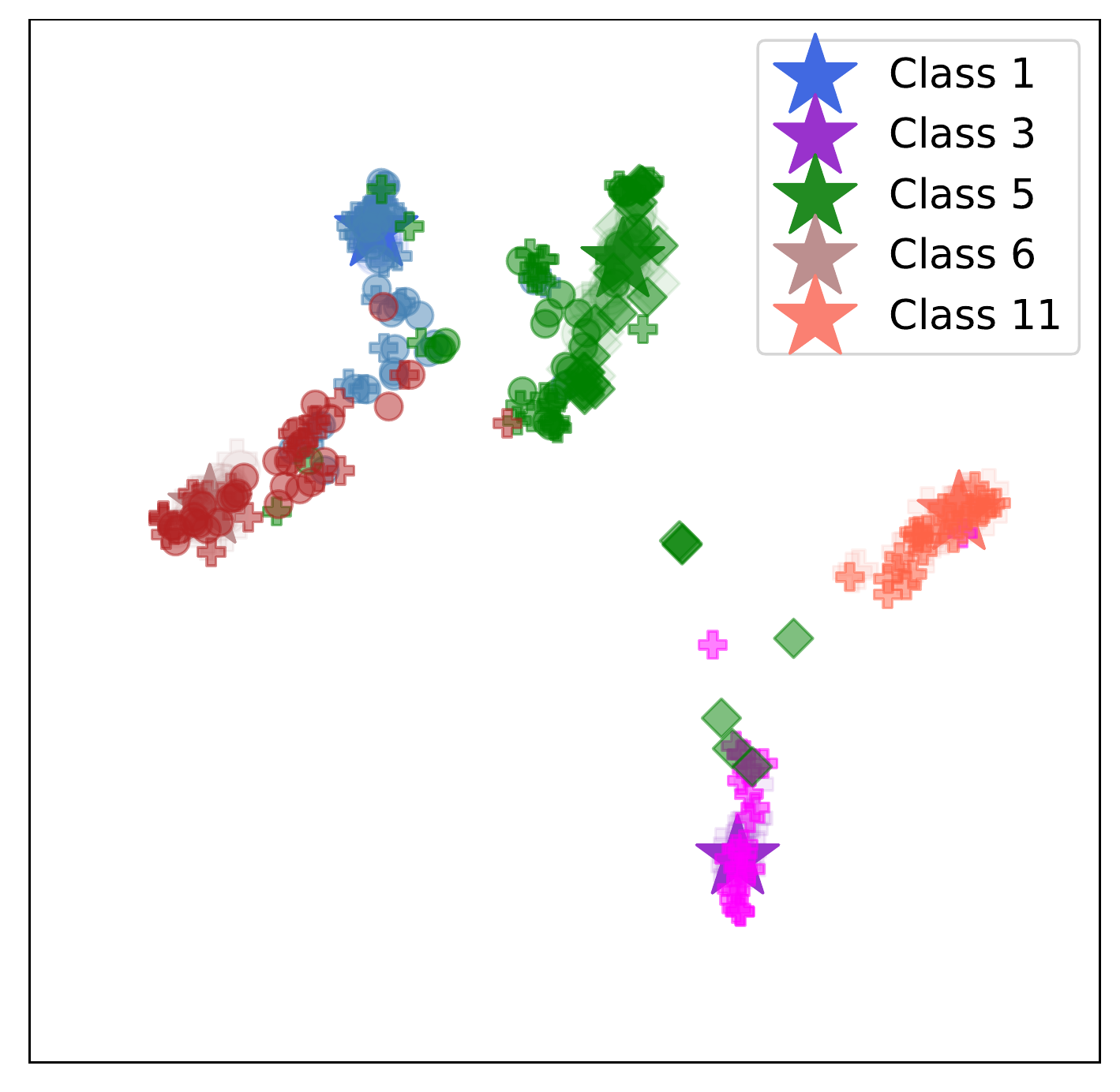} \\
\end{center}

    \caption{\label{fig:tsne-toybig}. 2D \emph{t-sne} projection of $5$ classes partially shared by $3$ clients for the \textbf{toy linear mapping} dataset after learning the local embedding functions for (left) 10 epochs, (middle) 50 epochs, (right) 100 epochs. Original dimensions on clients vary from $5$ to $50$.
    Top row shows the projection the training set while bottom row plots show both training and test set. Star $\star$ markers represent the projection of the
    mean of each class-conditional.
     The three different marker styles represent the different clients. Classes are
     denoted by colors and similar tones of color distinguish train and test sets.
     We  see that each class from the training set from each client converges towards the mean of its anchor distribution, represented by the star marker. Interestingly, we also remark that unless convergence is reached, empirical class-conditional distributions on each clients are not equal making necessary the learning of a joint representation. From the bottom plots, we
     can understand that distribution alignment impacts mostly the training set 
     but this alignment does not always generalize properly to the test sets.
}
\end{figure*}

\newpage
\begin{figure*}
	~\hfill\includegraphics[width=0.5\linewidth]{./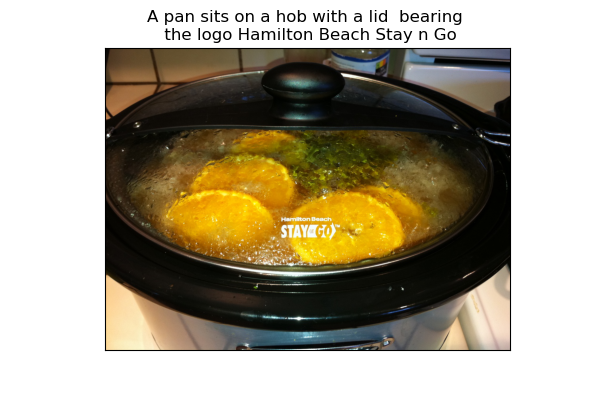}~\hfill~
		~\hfill\includegraphics[width=0.5\linewidth]{./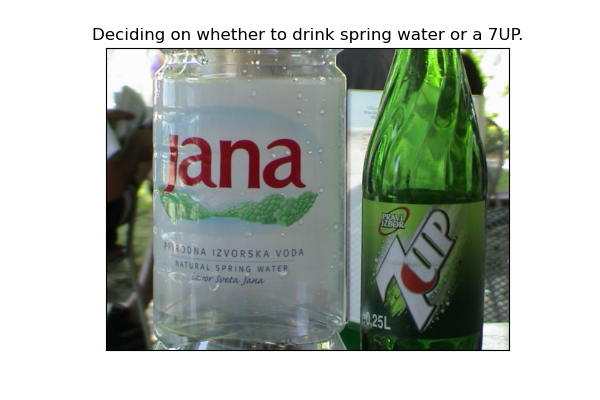}~\hfill~ \\
			~\hfill\includegraphics[width=0.5\linewidth]{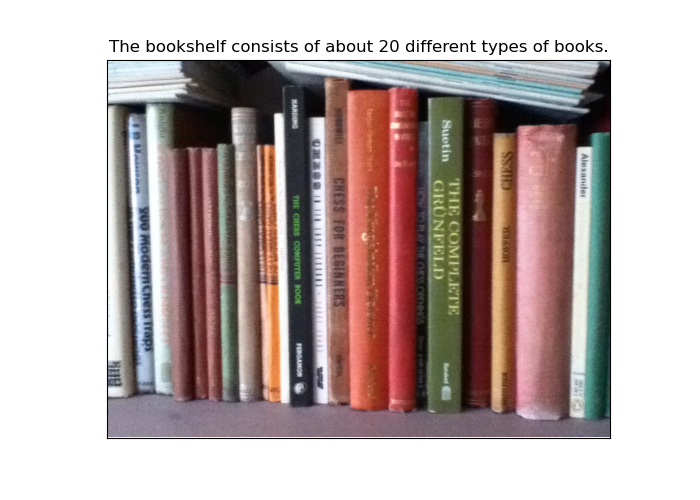}~\hfill~
				~\hfill\includegraphics[width=0.5\linewidth]{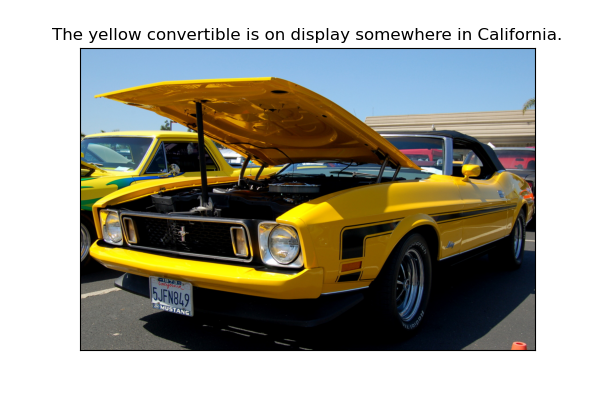}~\hfill~
	\caption{Examples of some TextCaps pairs of image/caption  from the $4$ classes we considered of (top-left) Food, (top-right) Bottle, (bottom-left) Book  (bottom-right) Car. We can see how difficult some examples can be, especially from the caption point of view since few hint about the class is provided by the text. }
	\label{fig:textcaps}
\end{figure*}

\end{document}